
\documentclass[a4paper]{assets/IEEEtran}

\usepackage[pdftex]{graphicx}
\graphicspath{{./figures/}}

\usepackage{amsmath}
\usepackage{amssymb}
\usepackage{amsfonts}
\usepackage{nicefrac}
\usepackage{cases}
\interdisplaylinepenalty=50000

\usepackage{algorithm}
\usepackage[noend]{algpseudocode}

\usepackage{booktabs}

\usepackage[square,numbers]{natbib}

\usepackage[shortcuts,cyremdash]{extdash}
\usepackage{xspace}

\usepackage{stfloats}

\usepackage{url}
\usepackage{hyperref}

\usepackage{amsthm}

\usepackage{enumerate}
\usepackage{color}
\usepackage[super]{nth}

\usepackage{etoolbox}


\ifCLASSOPTIONcompsoc%
 \usepackage[caption=false,font=normalsize,labelfont=sf,textfont=sf]{subfig}
\else
 \usepackage[caption=false,font=footnotesize]{subfig}
\fi


\def\EE{{\mathbb E}}

\def\RR{{\mathbb R}}
\def\PP{{\mathbb P}}

\def\x{{\mathbf x}}

\def\c{{\mathbf c}}

\def\A{{\mathbf A}}
\def\B{{\mathbf B}}
\def\C{{\mathbf C}}
\def\G{{\mathbf G}}

\def\D{{\mathbf D}}

\def\I{{\mathbf I}}

\def\M{{\mathbf M}}

\def\X{{\mathbf X}}
\def\Y{{\mathbf Y}}
\def\Z{{\mathbf Z}}

\def\b{{\mathbf b}}

\def\d{{\mathbf d}}

\def\u{{\mathbf u}}
\def\x{{\mathbf x}}

\newcommand{\balpha}{\boldsymbol{\alpha}}
\newcommand{\bbeta}{\boldsymbol{\beta}}
\newcommand{\bkappa}{\boldsymbol{\kappa}}

\renewcommand\P{{\mathbf P}}

\def\trace{{\mathrm{Tr}\;}}

\def\bigO{{\mathcal{O}}}

\newcommand{\argmin}{\operatornamewithlimits{\mathrm{argmin}}}

\newcommand{\vertiii}[1]{{\left\vert\kern-0.25ex\left\vert\kern-0.25ex\left\vert #1
    \right\vert\kern-0.25ex\right\vert\kern-0.25ex\right\vert}}

\newtheoremstyle{ieee}
  {}
  {}
  {}
  {}
  {\itshape}
  {.}
  { }
  {\thmname{#1}\thmnumber{ #2}\thmnote{ (#3)}}%

\makeatletter
\newtheoremstyle{assumption}
{}
{}
{\addtolength{\@totalleftmargin}{.0em}
   \addtolength{\linewidth}{-.0em}
   \parshape 1 0em \linewidth}
{1em}
{\itshape}
{}
{ }
{\thmname{#1}\thmnumber{#2} \textnormal{\textit{\thmnote{#3}}}}%
\makeatother

\theoremstyle{ieee}
\newtheorem{theorem}{Theorem}

\theoremstyle{assumption}
\newtheorem{assumption}{}
\renewcommand{\theassumption}{\textbf{(\Alph{assumption})}}

\theoremstyle{ieee}
\newtheorem{definition}{Definition}

\theoremstyle{ieee}
\newtheorem{proposition}[theorem]{Proposition}

\theoremstyle{ieee}

\theoremstyle{ieee}
\newtheorem{lemma}{Lemma}
\newtheorem*{lemma*}{Lemma}

\newcounter{tmp}
\newcounter{stoass}

\definecolor{dred}{rgb}{0.8,0,0}
\definecolor{dgreen}{rgb}{0,0.8,0}
\definecolor{dblue}{rgb}{0,0,0.8}
\definecolor{dpurple}{rgb}{0.8,0,0.8}


\algnewcommand\algorithmicswitch{\textbf{switch}}
\algnewcommand\algorithmiccase{\textbf{case}}
\algnewcommand\algorithmicassert{\texttt{assert}}
\algnewcommand\Assert[1]{\State \algorithmicassert(#1)}%
\algdef{SE}[SWITCH]{Switch}{EndSwitch}[1]{switch #1}{\algorithmicend\ \algorithmicswitch}%
\algdef{SE}[SWITCH]{Block}{EndBlock}[1]{#1}{}%
\algdef{SE}[CASE]{Case}{EndCase}[1]{\algorithmiccase\ #1}{\algorithmicend\ \algorithmiccase}%
\algtext*{EndSwitch}%
\algtext*{EndCase}%

\algnewcommand{\LineComment}[1]{\State \(\triangleright\) #1}
\algnewcommand{\Input}[1]{\State \textbf{Input}: #1}
\algnewcommand{\Output}[1]{\State \textbf{Output}: #1}

\newcommand{\somf}{\textsc{somf}\xspace}
\newcommand{\omf}{\textsc{omf}\xspace}
\newcommand{\smm}{\textsc{smm}\xspace}
\newcommand{\samm}{\textsc{samm}\xspace}

\newcommand*{\blue}{}
\DeclareRobustCommand{\blue}[1]{%
  \ifmmode
    \begingroup
      \color{blue}%
      #1%
    \endgroup
  \else
    \textcolor{blue}{#1}%
  \fi
}

\newcommand{\nocontentsline}[3]{}
\newcommand{\tocless}[2]{\bgroup\let\addcontentsline=\nocontentsline#1{#2}\egroup}

\newcommand{\specialcell}[2][c]{%
  \begin{tabular}[#1]{@{}c@{}}#2\end{tabular}}

\begin{document}

\title{Stochastic Subsampling for\\Factorizing Huge Matrices}

\author{Arthur~Mensch,
        Julien~Mairal,\\
        Bertrand~Thirion,
        and~Ga\"el~Varoquaux%

\thanks{A.~Mensch, B.~Thirion, G.~Varoquaux are with Parietal team, Inria, CEA,
Paris-Saclay University, Neurospin, at Gif-sur-Yvette, France. J.~Mairal is with Universit\'e Grenoble Alpes, Inria, CNRS, Grenoble INP, LJK at Grenoble, France.

 The research leading to these results was supported by the ANR (MACARON project, ANR-14-CE23-0003-01 --- NiConnect project, ANR-11-BINF-0004NiConnect).
 It has received funding from the European Union's Horizon 2020 Framework Programme for Research and Innovation under Grant Agreement N\textsuperscript{o}\xspace720270 (Human Brain Project SGA1).

 Corresponding author: Arthur Mensch (\url{arthur.mensch@m4x.org})
}
}

\markboth{Stochastic Subsampling for Factorizing Huge Matrices}%
{}

\maketitle


\begin{abstract}
We present a matrix-factorization algorithm that scales to input matrices with both huge number of rows and columns. Learned factors may be sparse or dense and/or non-negative, which makes our algorithm suitable for dictionary learning, sparse component analysis, and non-negative matrix factorization. Our algorithm streams matrix columns while subsampling them to iteratively learn the matrix factors. At each iteration, the row dimension of a new sample is reduced by subsampling, resulting in lower time complexity compared to a simple streaming algorithm. Our method comes with convergence guarantees to reach a stationary point of the matrix-factorization problem. We demonstrate its efficiency on massive functional Magnetic Resonance Imaging data (2 TB), and on patches extracted from hyperspectral images (103 GB). For both problems, which involve different penalties on rows and columns, we obtain significant speed-ups compared to state-of-the-art algorithms.
\end{abstract}

\begin{IEEEkeywords}Matrix factorization,
	dictionary learning, NMF,
	stochastic optimization,
	majorization-minimization,
	randomized methods,
	functional MRI, hyperspectral imaging
\end{IEEEkeywords}

\IEEEpeerreviewmaketitle



\section{Introduction}\label{introduction}

Matrix factorization is a flexible approach to uncover latent factors in
low-rank or sparse models. With sparse factors, it is used in dictionary
learning, and has proven very effective for denoising and visual feature
encoding in signal and computer vision~\citep[see e.g.,][]{mairal_sparse_2014}. When the data admit a low-rank structure, matrix
factorization has proven very powerful for various tasks such as matrix completion
\citep{srebro_maximum-margin_2004,candes_exact_2009}, word embedding
\cite{pennington_glove:_2014,levy_neural_2014}, or network models \cite{zhang_spatio-temporal_2009}. It is flexible
enough to accommodate a large set of constraints and regularizations, and has
gained significant attention in scientific domains where interpretability is a
key aspect, such as genetics~\cite{kim_sparse_2007} and
neuroscience~\citep{varoquaux_multi-subject_2011}. In this paper, our goal
is to adapt matrix-factorization techniques to huge-dimensional datasets, i.e., with large number of columns~$n$ and large number of
rows~$p$. Specifically, our work is motivated by the rapid
increase in sensor resolution, as in hyperspectral imaging or fMRI,
and the challenge that the resulting high-dimensional signals pose
to current algorithms.

As a widely-used model, the literature on matrix factorization is very rich and
two main classes of formulations have emerged. The first one addresses a
convex-optimization problem with a penalty promoting low-rank structures,
such as the trace or max norms~\cite{srebro_maximum-margin_2004}. This
formulation has strong theoretical guarantees~\citep{candes_exact_2009},
but lacks scalability for huge datasets or sparse factors. For these reasons, our paper is focused on a second type of approach,
which relies on nonconvex optimization. Stochastic (or online) optimization
methods have been developed in this setting. Unlike classical alternate minimization procedures, they learn matrix decompositions by observing a single
matrix column (or row) at each iteration. In other words, they stream data
along one matrix dimension.  Their cost per iteration is significantly reduced,
leading to faster convergence in various practical contexts.  More precisely,
two approaches have been particularly successful: stochastic gradient descent
\citep{bottou_large-scale_2010} and stochastic majorization-minimization
methods \cite{mairal_stochastic_2013, razaviyayn_unified_2013}. The former has been widely used for
matrix completion \citep[see][and references therein]{burer_local_2004, recht_parallel_2013, bell_lessons_2007}, while
the latter has been used for dictionary learning with sparse and/or structured
regularization \cite{mairal_online_2010}. Despite those efforts, stochastic
algorithms for dictionary learning are currently unable to deal efficiently
with matrices that are large in both dimensions.

We propose a new matrix-factorization algorithm that can handle such matrices.
It builds upon the stochastic majorization-minimization framework
of \cite{mairal_stochastic_2013}, which we generalize for our problem. In
this framework, the objective function is minimized by iteratively improving an
upper-bound surrogate of the function (\textit{majorization} step) and
minimizing it to obtain new estimates (\textit{minimization} step). The core
idea of our algorithm is to approximate these steps to perform them faster.
We carefully introduce and control approximations, so to
extend convergence results of~\cite{mairal_stochastic_2013}  when neither the
majorization nor the minimization step is performed exactly.

For this purpose, we borrow ideas from \textit{randomized} methods in machine
learning and signal processing. Indeed, quite orthogonally to stochastic
optimization, efficient approaches to tackle the growth of dataset dimension
have exploited random projections
\citep{johnson_extensions_1984,bingham_random_2001} or sampling,
reducing data dimension while preserving signal content. Large-scale
datasets often have an intrinsic dimension which is significantly smaller than
their ambient dimension. Good examples are biological datasets
\cite{mckeown_analysis_1998} and physical acquisitions with an underlying
sparse structure enabling compressed sensing \cite{candes_near-optimal_2006}.
In this context, models can be learned using only random data summaries,
also called sketches. For instance, randomized methods \citep[see][for a
review]{halko_finding_2009} are efficient to compute PCA
\citep{rokhlin_randomized_2009}, a classic matrix-factorization approach,
and to solve constrained or penalized least-square problems
\cite{sarlos_improved_2006, lu_faster_2013}. On a theoretical level,
recent works on \textit{sketching} \cite{pilanci_iterative_2014,
raskutti_statistical_2014} have provided bounds on the risk of using random
summaries in learning.

Using random projections as a pre-processing step is not appealing in our
applicative context since factors learned on reduced data are not
interpretable. On the other hand, it is possible to exploit \textit{random
sampling} to approximate the steps of online matrix factorization. Factors
are learned in the original space whereas the dimension of each iteration is
reduced together with the computational cost per iteration.

\subsubsection*{Contribution}The contribution of this paper is both
practical and theoretical. We introduce a new matrix factorization algorithm,
called \textit{subsampled online matrix factorization} (\somf), which
is faster than state-of-the-art algorithms by an order of magnitude on large
real-world datasets (hyperspectral images, large fMRI data). It
leverages random sampling with stochastic optimization to learn sparse and
dense factors more efficiently. To prove the convergence of \somf, we
extend the stochastic majorization-minimization framework
\cite{mairal_stochastic_2013} and make it robust to some time-saving approximations.
We then show convergence guarantees for \somf under
reasonable assumptions. Finally, we propose an extensive empirical validation
of the subsampling approach.

In a first version of this work \cite{mensch_dictionary_2016} presented at the International Conference in
Machine Learning (\textsc{icml}), we proposed an
algorithm similar to \somf, without any theoretical guarantees. The algorithm
that we present here has such guarantees, which we express in a more general
framework, stochastic majorization-minimization. It is validated for new
sparsity settings and a new domain of application. An open-source efficient Python package
is provided.

\subsubsection*{Notations}Matrices are written using bold capital
letters and vectors using bold small letters (e.g.,~$\X$,~$\balpha$). We use
superscript to specify the column (sample or component) number, and write $\X =
[\x^{(1)},\ldots,\x^{(n)}]$. We use subscripts to specify the \textit{iteration}
number, as in $\x_t$. The floating bar, as in $\bar g_t$, is used to stress
that a given value is an average over iterations, or an expectation. The
superscript $^\star$ is used to denote an exact value, when it has to be compared
to an inexact value, e.g., to compare $\balpha_t^\star$ (exact) to
$\balpha_t$ (approximation).


\section{Prior art: matrix factorization with stochastic
         majorization-minimization}\label{sec:prior-art}

Below, we introduce the matrix-factorization problem and recall a
specific stochastic algorithm to solve it observing one column (or a mini-batch) at
every iteration. We cast this algorithm in the stochastic
majorization-minimization framework~\cite{mairal_stochastic_2013}, which we will use in the convergence analysis.

\subsection{Problem statement}\label{subset:problem-statement}
In our setting, the goal of matrix factorization is to decompose a matrix $\X
\in \RR^{p \times n}$ --- typically~$n$ signals of dimension~$p$ --- as a product
of two smaller matrices:
\begin{equation*}
    \X \approx \D \A
 \quad \text{with}\quad\D \in \RR^{p \times k}\:\text{and}\:\A \in \RR^{k \times n},
\end{equation*}
with potential sparsity or structure requirements on $\D$ and~$\A$. In
signal processing, sparsity is often enforced on the code~$\A$, in a
problem called \textit{dictionary learning} \cite{olshausen_sparse_1997}.
In such a case, the matrix~$\D$ is called the
``dictionary'' and~$\A$ the sparse code. We use this terminology throughout
the paper.

Learning the factorization is typically performed by minimizing a quadratic
data-fitting term, with constraints and/or penalties over the code and the
dictionary:
 \begin{equation}
     \label{eq:empirical-risk-factored}
    \min_{\substack{\D \in \mathcal{C} \\
	 \A \in \RR^{k\times n}}}\quad  \sum_{i=1}^n
    \frac{1}{2}
    \bigl\|
    \x^{(i)}
    - \D \balpha^{(i)}
    \bigr\|_2^2 + \lambda \, \Omega(\balpha^{(i)}),
 \end{equation}
where $\A \triangleq [\balpha^{(1)}, \ldots, \balpha^{(n)}]$, $\mathcal{C}$ is a column-wise separable convex set of $\RR^{p \times
k}$ and $\Omega : \RR^p \rightarrow \RR$ is a penalty over the code. Both
constraint set and penalty may enforce structure or sparsity, though
$\mathcal{C}$ has traditionally been used as a technical requirement to ensure
that the penalty on $\A$ does not vanish with $\D$ growing arbitrarily large.
Two choices of $\mathcal{C}$ and $\Omega$ are of particular interest. The
problem of dictionary learning sets $\mathcal{C}$ as the $\ell_2$ ball for each atom
and $\Omega$ to be the $\ell_1$ norm. Due to the sparsifying effect of $\ell_1$
penalty \cite{tibshirani_regression_1996}, the dataset admits a
\textit{sparse} representation in the dictionary. On the opposite, finding a \textit{sparse set} in which to
represent a given dataset, with a goal akin to sparse PCA \cite{zou_sparse_2006}, requires to set
as the $\ell_1$ ball for each atom
and $\Omega$ to be the $\ell_2$ norm. Our work considers the \textit{elastic-net} constraints
and penalties~\cite{zou_regularization_2005}, which encompass both special cases. Fixing $\nu$ and $\mu$ in $[0, 1]$, we
denote by $\Omega(\cdot)$ and $\Vert \cdot \Vert$ the elastic-net penalty in $\RR^p$ and $\RR^k$:
\begin{gather}
    \label{eq:elastic_net}
        \Omega(\balpha) \triangleq (1 - \nu) \Vert \balpha \Vert_1
        + \frac{\nu}{2} \Vert \balpha \Vert_2^2,  \\
        \mathcal{C} \triangleq \left\{ \D \in \RR^{p\times k}/
        \Vert \d^{(j)} \Vert
         \triangleq (1 {-} \mu) \Vert \d^{(j)} \Vert_1
         {+} \frac{\mu}{2} \Vert \d^{(j)} \Vert_2^2 \leq 1 \right\}. \notag
\end{gather}
Following~\cite{mairal_online_2010}, we can also enforce the positivity of~$\D$ and/or~$\A$ by replacing~$\RR$
by $\RR^+$ in $\mathcal{C}$, and adding positivity constraints on $\A$ in~\eqref{eq:empirical-risk-factored},
 as in non-negative sparse coding~\cite{hoyer_non-negative_2002}.
We rewrite \eqref{eq:empirical-risk-factored} as an empirical risk minimization
problem depending on the dictionary only. The matrix $\D$
solution of~\eqref{eq:empirical-risk-factored} is indeed obtained by
minimizing the empirical risk $\bar f$
\begin{align}
    \label{eq:empirical-risk}
    \D \in \argmin_{\D \in \mathcal{C}}
    \Big(\bar f(\D) \triangleq \frac{1}{n} \sum_{i=1}^n f(\D, \x^{(i)}) \Big), \\
    \text{where}\quad
     f(\D, \x) \triangleq
      \min_{\balpha \in \RR^k} \frac{1}{2}
     \bigl\|
     \x
     - \D \balpha
     \bigr\|_2^2 + \lambda \, \Omega(\balpha), \notag
\end{align}
and the matrix $\A$ is obtained by solving the linear regression
\begin{equation}
 \min_{\A \in \RR^{k\times n}} \sum_{i=1}^n
\frac{1}{2}
\bigl\|
\x^{(i)}
- \D \balpha^{(i)}
\bigr\|_2^2 + \lambda \, \Omega(\balpha^{(i)}).
\end{equation}
The problem~\eqref{eq:empirical-risk-factored} is non-convex in the parameters
$(\D, \A)$, and hence \eqref{eq:empirical-risk} is not convex. However,
the problem~\eqref{eq:empirical-risk-factored} is convex in both $\D$ and $\A$
when fixing one variable and optimizing with respect to the other. As such, it is
naturally solved by alternate minimization over $\D$ and $\A$, which asymptotically provides
a stationary point of~\eqref{eq:empirical-risk}. Yet, $\X$ has typically to be observed
hundred of times before obtaining a good dictionary. Alternate minimization is
therefore not adapted to datasets with many samples.

\subsection{Online matrix factorization}

When $\X$ has a large number of columns but a limited number of
rows, the stochastic optimization method of \cite{mairal_online_2010}
outputs a good dictionary much more rapidly than alternate-minimization.
In this setting \citep[see][]{bottou_optimization_2016}, learning the dictionary is
naturally formalized as an expected risk minimization
\begin{gather}\label{eq:expected-risk}
    \min_{\D \in \mathcal{C}}\quad \bar f(\D) \triangleq \EE_\x[f(\D, \x)],
\end{gather}
where $\x$ is drawn from the data distribution and forms an i.i.d. stream
$(\x_t)_t$. In the finite-sample setting,~\eqref{eq:expected-risk} reduces to
\eqref{eq:empirical-risk} when $\x_t$ is drawn uniformly at random from
$\{\x^{(i)}, i \in [1, n]\}$. We then write $i_t$ the sample number selected at
time $t$.

The online matrix factorization algorithm proposed in \cite{mairal_online_2010}
is summarized in Alg.~\ref{alg:omf}.
It draws a sample $\x_t$ at each iteration, and uses it to improve the current
iterate $\D_{t-1}$. For this, it first computes the code~$\balpha_t$ associated
to~$\x_t$ on the current dictionary:
\begin{equation}\label{eq:code_computation}
    \balpha_t \triangleq \argmin_{\balpha \in \RR^k} \frac{1}{2} \Vert \x_t - \D_{t-1} \balpha \Vert_2^2 + \lambda \Omega(\balpha).
\end{equation}
Then, it updates $\D_t$ to make it optimal in reconstructing past
samples $(\x_s)_{s \leq t}$ from previously computed codes $(\balpha_s)_{s \leq
t}$:
\begin{equation}
    \label{eq:bar_gt}
    \D_t \in \argmin_{\D \in \mathcal{C}}\Bigl( \bar g_t(\D)\triangleq
    \frac{1}{t}\sum_{s=1}^t \frac{1}{2}
    \bigl\|
    \x_s
    - \D \balpha_s
    \bigr\|_2^2  + \lambda \Omega(\balpha_s) \Bigr).
\end{equation}
\begin{algorithm}[t]
\begin{algorithmic}
    \Input Initial iterate $\D_0$,
    sample stream ${(\x_t)}_{t> 0}$,
    number of iterations $T$.
    \For{$t$ from $1$ to $T$}
    \State Draw $\x_t \sim \mathcal{P}$.
    \State Compute $\balpha_t = \argmin_{\balpha \in \RR^p} \frac{1}{2}
                \bigl\|
                \x_t
                - \D_{t-1} \balpha
                \bigr\|_2^2 + \lambda \, \Omega(\balpha)$.
    \State Update the \textit{parameters} of aggregated surrogate $\bar g_t$:
    \begin{equation}
        \label{eq:omf-parameter-update}
        \begin{split}
            \bar \C_t &= \Big(1 - \frac{1}{t}\Big) \bar \C_{t-1} + \frac{1}{t} \balpha_t \balpha_t^\top. \\
            \bar \B_t &= \Big(1 - \frac{1}{t}\Big) \bar \B_{t-1} + \frac{1}{t} \x_t \balpha_t^\top.
        \end{split}
    \end{equation}
    \State Compute (using block coordinate descent):
    \begin{equation*}
        \D_t = \argmin_{\D \in \mathcal{C}}
        \frac{1}{2} \trace (\D^\top \D \bar \C_t)
          - \trace (\D^\top \bar \B_t).
    \end{equation*}
    \EndFor
    \Output Final iterate $\D_T$.
\end{algorithmic}
\caption{Online matrix factorization (\omf)~\cite{mairal_online_2010}}\label{alg:omf}
\end{algorithm}
Importantly, minimizing $\bar g_t$ is equivalent to minimizing the quadratic function
\begin{align}
    \D \to \frac{1}{2} \trace (\D^\top \D \bar \C_t^\top)
      - \trace (\D^\top \bar \B_t),
\end{align}
where $\bar \B_t$ and $\bar \C_t$ are small matrices that summarize previously seen samples and codes:
\begin{equation}
    \label{eq:bar_BCt}
    \begin{split}
    \bar \B_t &= \frac{1}{t} \sum_{s=1}^t \x_s \balpha_s^\top \qquad
    \bar \C_t = \frac{1}{t} \sum_{s=1}^t \balpha_s \balpha_s^\top.
    \end{split}
\end{equation}
As the constraints $\mathcal{C}$ have a separable structure per atom,
\cite{mairal_online_2010} uses projected block coordinate descent to minimize $\bar g_t$.
The function gradient writes $\nabla \bar g_t(\D) = \D \bar \C_t - \bar \B_t$,
and it is therefore enough to maintain $\bar \B_t$ and $\bar \C_t$ in memory to
solve~\eqref{eq:bar_gt}. $\bar \B_t$ and $\bar \C_t$ are updated online,
using the rules~\eqref{eq:omf-parameter-update}~(Alg.~\ref{alg:omf}).

The function $\bar g_t$ is an upper-bound surrogate of the true current empirical risk, whose definition
involves the regression minima computed on current dictionary $\D$:
\begin{equation}
    \label{eq:bar_ft}
    \bar f_t(\D)\triangleq \frac{1}{t} \sum_{s=1}^t \min_{\balpha \in \RR^p} \frac{1}{2}
        \bigl\|
        \x_s
        - \D \balpha
        \bigr\|_2^2\!+ \lambda \Omega(\balpha)\leq \bar g_t(\D).
\end{equation}
Using empirical processes theory \cite{van_der_vaart_asymptotic_2000},
it is possible to
show that minimizing $\bar f_t$ at each iteration asymptotically
yields a stationary point of the expected risk \eqref{eq:expected-risk}.
Unfortunately, minimizing \eqref{eq:bar_ft} is expensive as it involves the
computation of optimal current codes for every previously seen sample at each
iteration, which boils down to naive alternate-minimization.

\begin{algorithm}[t]
\begin{algorithmic}
    \Input Initial iterate $\theta_0$, weight sequence ${(w_t)}_{t>0}$,
    sample stream ${(\x_t)}_{t> 0}$, number of iteration $T$.
    \For{$t$ from $1$ to $T$}
    \State Draw $x_t \sim \mathcal{P}$, get $f_t : \theta \in \Theta \to f(\x_t, \theta)$.
    \State Construct a surrogate of $f_t$ near $\theta_{t-1}$, that meets
        \begin{equation}
			\label{eq:majorization_step_smm}
         g_t \geq f_t,\quad g_t(\theta_{t-1}) = f_t(\theta_{t-1}).
        \end{equation}
    \State Update the aggregated surrogate:
    \begin{equation*}
        \bar g_t = (1 - w_t) \bar g_{t-1} + w_t g_t.
    \end{equation*}
    \State Compute
        \begin{equation}
			\label{eq:minimization_step_smm}
            \theta_t = \argmin_{\theta \in \Theta} \bar g_t(\theta).
        \end{equation}
    \EndFor
    \Output Final iterate $\theta_T$.
\end{algorithmic}
\caption{Stochastic majorization-minimization \citep[\smm][]{mairal_stochastic_2013}}\label{alg:smm}
\end{algorithm}

In contrast, $\bar g_t$ is much cheaper to minimize than $\bar f_t$, using
block coordinate descent. It is possible to show that $\bar g_t$ converges
towards a locally tight upper-bound of the objective~$\bar f_t$ and that
minimizing $\bar g_t$ at each iteration also asymptotically yields a stationary
point of the expected risk~\eqref{eq:expected-risk}. This establishes the
correctness of the \textit{online matrix factorization} algorithm (\omf). In practice, the \omf algorithm performs a
single pass of block coordinate descent: the minimization step is inexact. This
heuristic will be justified by our theoretical contribution in Section~\ref{sec:analysis}.

\subsubsection*{Extensions}For efficiency, it is essential to use mini-batches $\{\x_s,\,s
\in \mathcal{T}_t\}$ of size $\eta$ instead of single samples in the
iterations~\cite{mairal_online_2010}. The surrogate parameters $\bar \B_t$, $\bar \C_t$ are
then updated by the mean value of $\{(\x_s \balpha_s^\top, \balpha_s \balpha_s^\top)\}_{s \in \mathcal{T}_t}$
over the batch. The optimal size of the mini-batches is usually close to $k$. \eqref{eq:omf-parameter-update}
uses the sequence of weights ${(\frac{1}{t})}_t$ to update
parameters~$\bar \B_t$ and~$\bar \C_t$. \cite{mairal_online_2010}
replaces these weights with a sequence ${(w_t)}_t$,
which can decay more slowly to give more importance to recent samples in $\bar g_t$. These weights will prove important in our analysis.

\subsection{Stochastic majorization-minimization} Online matrix factorization
belongs to a wider category of algorithms introduced in
\cite{mairal_stochastic_2013} that minimize locally tight upper-bounding
surrogates instead of a more complex objective, in order to solve an expected
risk minimization problem. Generalizing online matrix factorization, we
introduce in Alg.~\ref{alg:smm} the \textit{stochastic majorization-minimization} (\smm) algorithm,
which is at the core of our theoretical contribution.

In online matrix factorization, the true empirical risk functions $\bar f_t$ and their
\textit{surrogates} $\bar g_t$ follow the update rules, with generalized weight ${(w_t)}_t$
set to ${(\frac{1}{t})}_t$ in \eqref{eq:bar_gt} -- \eqref{eq:bar_ft}:
\begin{equation}
    \label{eq:surrogate-aggregation}
        \bar f_t \triangleq (1 - w_t) \bar f_{t-1} + w_t f_t,\quad
        \bar g_t \triangleq (1 - w_t) \bar g_{t-1} + w_t g_t,
\end{equation}
where the \textit{pointwise} loss function and its surrogate are
\begin{equation}
	\begin{split}
    \label{eq:surrogate}
    f_t(\D) &\triangleq \min_{\balpha \in \RR^k} \frac{1}{2} \Vert \x_t - \D \balpha \Vert_2^2
	+ \lambda \Omega(\balpha),
	 \\
	 g_t(\D) &\triangleq \frac{1}{2} \Vert \x_t - \D \balpha_t \Vert_2^2 + \lambda \Omega(\balpha_t).
 	\end{split}
\end{equation}
The function $g_t$ is a majorizing surrogate of $f_t$: $g_t
\geq f_t$, and $g_t$ is tangent to $f_t$ in $\D_{t-1}$, i.e, $g_t(\D_{t-1}) = f_t(\D_{t-1})$ and $\nabla(g_t - f_t)(\D_{t-1}) =
0$.
At each step of online matrix factorization:
\begin{itemize}
    \item The surrogate $g_t$ is computed along with $\balpha_t$, using~\eqref{eq:code_computation}.
    \item The parameters $\bar \B_t, \bar \C_t$ are updated following~\eqref{eq:omf-parameter-update}.
    They define the \textit{aggregated} surrogate $\bar g_t$ up to a constant.
    \item The quadratic function $\bar g_t$ is minimized efficiently by block coordinate descent,
    using parameters $\bar \B_t$ and $\bar \C_t$ to compute its gradient.
\end{itemize}

The stochastic majorization-minimization framework simply formalizes the three
steps above, for a larger variety of loss functions $f_t(\theta) \triangleq
f(\theta, \x_t)$, where $\theta$ is the parameter we want to learn ($\D$ in
the online matrix factorization setting). At iteration $t$, a surrogate $g_t$
of the loss $f_t$ is computed to update the aggregated surrogate $\bar g_t$ following \eqref{eq:surrogate-aggregation}.
 The surrogate functions $(g_t)_t$ should be upper-bounds of
loss functions $(f_t)_t$, tight in the current iterate $\theta_{t-1}$ (e.g.,
the dictionary $\D_{t-1}$). This simply means that $f_t(\theta_{t-1}) =
g_t(\theta_{t-1})$ and $\nabla(f_t - g_t)(\theta_{t-1}) = 0$. Computing $\bar g_t$ can be done
if $g_t$ is defined simply, as in \omf where it is linearly parametrized by
$(\balpha_t \balpha_t^\top, \x_t \balpha_t^\top)$. $\bar g_t$ is then minimized
to obtain a new iterate~$\theta_t$.

It can be shown following~\cite{mairal_stochastic_2013} that stochastic majorization-minimization
algorithms find asymptotical stationary point of the expected risk
$\EE_\x[f(\theta, \x)]$ under mild assumptions recalled in
Section~\ref{sec:analysis}. \smm admits the same mini-batch and decaying weight
extensions (used in Alg.~\ref{alg:smm}) as \omf.

In this work, we extend the \smm framework and allow both majorization and
minimization steps to be approximated. As a side contribution, our extension
proves that performing a single pass of block coordinate descent to update the dictionary,
an important heuristic in~\cite{mairal_online_2010}, is indeed
correct. We first introduce the new matrix factorization algorithm at the
core of this paper and then present the extended \smm framework.

\section{Stochastic
subsampling for high dimensional data decomposition}\label{sec:stochastic-subsampling}

The online algorithm presented in Section~\ref{sec:prior-art} is very
efficient to factorize matrices that have a large number of columns
(i.e., with a large number of samples $n$), but a reasonable number of
rows --- the dataset is not very high dimensional. However, it is not designed
to deal with very high number of rows: the cost of a single iteration
depends linearly on $p$. On terabyte-scale datasets from fMRI with
$p=2\cdot10^5$ features, the original online algorithm requires one week
to reach convergence. This is a major motivation for designing new matrix
factorization algorithms that scale in \textit{both directions}.

In the large-sample regime $p \gg k$, the underlying dimensionality of columns
may be much lower than the actual $p$: the rows of a single column drawn at random
are therefore correlated and redundant. This guides us on how
to scale online matrix factorization with regard to the number of rows:
\begin{itemize}
    \item The online algorithm \omf uses a \textit{single} column of (or
    mini-batch) of $\X$ at each iteration to enrich the average surrogate and update
    the \textit{whole} dictionary.
    \item We go a step beyond and use a \textit{fraction} of
    a single column of $\X$ to refine a fraction of the dictionary.

\end{itemize}
More precisely, we draw a column and observe only \textit{some}
of its rows at each iteration, to refine these rows of the
dictionary, as illustrated in Figure~\ref{fig:next-level}. To take into account
all features from the dataset, rows are selected at random at each
iteration: we call this technique \textit{stochastic subsampling}.
Stochastic subsampling reduces the efficiency of the dictionary update \textit{per
iteration}, as less information is incorporated in the current iterate $\D_t$.
On the other hand, with a correct design, the cost of a single
iteration can be considerably reduced, as it grows with the number of observed
features. Section~\ref{sec:experiments} shows that the proposed
algorithm is an order of magnitude faster than the original \omf on large
and redundant datasets.

\begin{figure}
    \centering
    \includegraphics{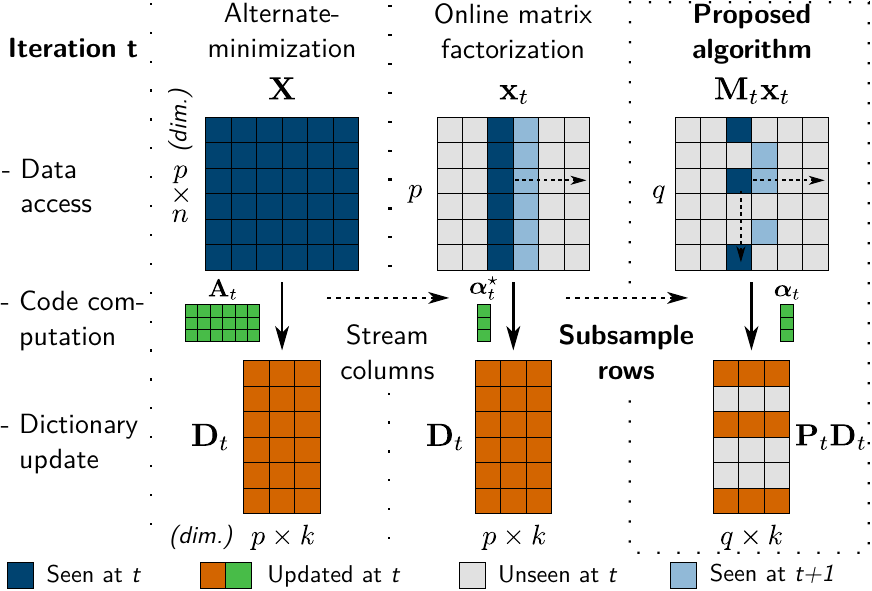}
    \caption{\textit{Stochastic subsampling} further improves online
    matrix factorization to handle datasets with large number of columns and
    rows. $\X$ is the input~$p\times n$ matrix, $\D_t$ and $\A_t$ are respectively the dictionary and code at time~$t$.}\label{fig:next-level}
    \vspace{-.5em}
\end{figure}

First, we formalize the idea of working with a fraction of the $p$ rows at a
single iteration. We adapt the online matrix factorization algorithm, to reduce
the iteration cost by a factor close to the ratio of selected rows. This
defines a new online algorithm, called \textit{subsampled online matrix
factorization} (\somf). At each iteration, it uses $q$ rows of
the column $\x_t$ to update the sequence of iterates ${(\D_t)}_t$. As in
Section~\ref{sec:prior-art}, we introduce a more general algorithm,
\textit{stochastic approximate majorization-minimization} (\samm), of which
\somf is an instance. It extends the stochastic majorization-minimization
framework, with similar theoretical guarantees but potentially faster
convergence.

\subsection{Subsampled online matrix factorization}
\label{sec:somf}

Formally, as in online matrix factorization, we consider a sample stream
${(\x_t)}_t$ in $\RR^p$ that cycles onto a finite sample set $\{ \x^{(i)}, i \in [1, n] \}$,
and minimize the empirical risk~\eqref{eq:empirical-risk}.\footnote{Note that we solve the fully observed
problem despite the use of subsampled data, unlike other recent work on low-rank factorization~\cite{mardani_subspace_2015}.}

\subsubsection{Stochastic subsampling and algorithm outline}
We want to reduce the time complexity of a single iteration.
In the original algorithm, the complexity
depends linearly on the sample dimension $p$ in three aspects:
\begin{itemize}
    \item $\x_t \in \RR^p$ is
    used to compute the code $\balpha_t$,
    \item it is used to update the surrogate parameters
    $\bar \B_t \in \RR^{p\times k}$,
    \item $\D_t \in \RR^{p\times k}$ is fully updated at each iteration.
\end{itemize}
Our algorithm reduces the dimensionality of
these steps at each iteration, such that $p$ becomes $q =
\frac{p}{r}$ in the time complexity analysis, where $r > 1$ is a
\textit{reduction factor}. Formally, we randomly draw, at iteration $t$, a mask $\M_t$ that ``selects'' a
random subset of $\x_t$. We use it to drop a part of the features of~$\x_t$ and
to ``freeze'' these features in dictionary~$\D$ at iteration~$t$.

It is convenient to consider $\M_t$ as a $\RR^{p\times
p}$ random diagonal matrix, such that each coefficient is a Bernouilli variable
with parameter $\frac{1}{r}$, normalized to be $1$ in expectation. $\forall j \in [0, p-1]$,
\begin{equation}
    \label{eq:mt}
    \PP\bigl[\M_t[j, j] = r\bigr] =
    \frac{1}{r},\;\;\;\PP\bigl[\M_t[j, j] = 0\bigr] = 1 - \frac{1}{r}.
\end{equation}
Thus, $r$ describes the average proportion of observed features and $\M_t
\x_t$ is a non-biased, low-dimensional estimator of $\x_t$:
\begin{equation}
    \EE\bigl[\Vert \M_t \x_t \Vert_0\bigr] = \frac{p}{r} = q \qquad
    \EE\bigl[\M_t \x_t\bigr] = \x_t.
\end{equation}
with $\Vert \cdot \Vert_0$ counting the number of non-zero coefficients. We define the pair of orthogonal projectors $\P_t \in \RR^{q \times p}$ and
$\P_t^\perp \in \RR^{(p - q)\times p}$ that project $\RR^p$ onto $\mathrm{Im}(\M_t)$ and $\mathrm{Ker}(\M_t)$.
In other words, $\P_t \Y$ and $\P_t^\perp \Y$ are the submatrices of $\Y \in \RR^{p \times y}$ with rows respectively selected and not selected by $\M_t$.
In algorithms, $\P_t \Y \leftarrow \Z \in \RR^{q \times n}$ assigns
 the rows of $\Z$ to the rows of $\Y$ selected by $\P_t$, by an abuse of notation.

\begin{algorithm}[t]
    \begin{algorithmic}
        \Input Initial iterate $\D_0$, weight sequences ${(w_t)}_{t>0}$, ${(\gamma_c)}_{c>0}$,
        sample set ${\{\x^{(i)}\}}_{i> 0}$, number of iterations $T$.
        \For{$t$ from $1$ to $T$}
        \State Draw $\x_t = \x^{(i)}$ at random and $\M_t$ following~\eqref{eq:mt}.
        \State Update the regression parameters for sample $i$:
            \begin{align*}
            c^{(i)} &\gets c^{(i)} + 1,  &\gamma &\gets \gamma_{c^{(i)}}.\\
             \bbeta_t^{(i)} &\gets (1 - \gamma) \G_{t-1}^{(i)} + \gamma \D_{t-1}^\top \M_t \x^{(i)}, &   \bbeta_t &\gets  \bbeta_t^{(i)}. \\
             \G_t^{(i)} &\gets (1 - \gamma) \G_{t-1}^{(i)} +\gamma \D_{t-1}^\top \M_t \D_{t-1}, &  \G_t &\gets \bar \G_t^{(i)}.
            \end{align*}
        \State Compute the approximate code for $\x_t$:
        \begin{equation}\label{eq:somf_alpha}
             \balpha_t \gets \argmin_{\balpha \in \RR^k}
            \frac{1}{2} \balpha^\top  \G_t \balpha -
            \balpha^\top  \bbeta_t + \lambda \, \Omega(\balpha).
        \end{equation}
        \State Update the parameters of the aggregated surrogate $\bar g_t$:
        \begin{equation}
            \label{eq:somf_partial}
            \begin{split}
             \bar \C_t &\gets (1 - w_t) \bar \C_{t-1} + w_t  \balpha_t  \balpha_t^\top. \\
             \P_t \bar \B_t &\gets (1 - w_t ) \P_t \bar \B_{t-1} + w_t \P_t \x_t  \balpha_t^\top.
            \end{split}
        \end{equation}
        \State Compute simultaneously (using Alg.~\ref{alg:somf-dictionary} for \nth{1} line):
        \begin{align}
            \label{eq:somf_minimization}
            \P_t \D_t &\gets \argmin_{\D^r \in \mathcal{C}^r}
            \frac{1}{2} \trace ({\D^r}^\top \D^r \bar \C_t)
              - \trace ({\D^r}^\top \P_t  \bar \B_t). \notag \\
             \P_t^\perp \bar \B_t &\gets (1 - w_t ) \P_t^\perp \bar \B_{t-1} + w_t \P_t^\perp \x_t  \balpha_t^\top.
        \end{align}
        \EndFor
        \Output Final iterate $\D_T$.
    \end{algorithmic}
    \caption{Subsampled online matrix factorization (\somf)}\label{alg:somf}
\end{algorithm}

In brief, subsampled online matrix factorization, defined in Alg.~\ref{alg:somf},
follows the outer loop of online matrix factorization, with the following major modifications at iteration $t$:
\begin{itemize}
    \item it uses $\M_t \x_t$ and low-size statistics instead of $\x_t$ to
    estimate the code $\balpha_t$ and the surrogate $g_t$,
    \item it updates a subset of the dictionary $\P_t \D_{t-1}$ to reduce
    the surrogate value $\bar g_t(\D)$. Relevant parameters of $\bar g_t$ are computed
    using $\P_t \x_t$ and $\balpha_t$ only.
\end{itemize}
We now present \somf in details. For comparison purpose, we write
all variables that would be computed following the \omf rules at iteration $t$
with a $^\star$ superscript. For simplicity, in Alg.~\ref{alg:somf} and in the following
paragraphs, we assume that we use one sample per iteration ---in
practice, we use
mini-batches of size $\eta$. The next derivations are
transposable when a batch $I_t$ is drawn at iteration $t$ instead of a single sample $i_t$.
\subsubsection{Code computation}\label{sec:code-computation}
In the \omf algorithm presented in Section~\ref{sec:prior-art},
$\balpha_t^\star$ is obtained by solving~\eqref{eq:code_computation}, namely
\begin{equation}
    \label{eq:regression}
    \balpha_t^\star \in \argmin_{\balpha} \frac{1}{2} \balpha^\top \G_t^\star \balpha - \balpha^\top \bbeta_t^\star + \lambda \Omega(\balpha),
\end{equation}
where $\G_t^\star = \D_{t-1}^\top \D_{t-1}$ and $\bbeta_t^\star = \D_{t-1}^\top \x_t$. For
large $p$, the computation of $\G_t^\star$ and $\bbeta_t^\star$ dominates the complexity of the
regression step, which depends almost linearly on $p$. To reduce this complexity, we use
\textit{estimators} for $\G_t^\star$ and $\bbeta_t^\star$, computed at a cost
proportional to the reduced dimension~$q$. We propose three kinds of
estimators
with different properties.

\paragraph{Masked loss}The most simple \textit{unbiased} estimation of~$\G_t^\star$ and~$\bbeta_t^\star$
whose computation cost depends on $q$ is obtained
by subsampling matrix products with $\M_t$:
\begin{equation}
    \label{eq:estimates}
    \tag{a}
    \begin{split}
    \G_t &= \D_{t-1}^\top \M_t \D_{t-1} \\
    \bbeta_t &= \D_{t-1}^\top \M_t \x_t.
    \end{split}
\end{equation}
This is the strategy proposed in~\cite{mensch_dictionary_2016}. We use $\G_t$ and $\bbeta_t$ in~\eqref{eq:somf_alpha},
which amounts to minimize the \textit{masked loss}
\begin{equation}
    \label{eq:masked-loss}
    \min_{\balpha \in \RR^k} \frac{1}{2} \Vert \M_t(\x^t - \D_{t-1}^\top \balpha) \Vert_2^2 + \lambda \Omega(\balpha).
\end{equation}
$\G_t$ and $ \bbeta_t$ are computed in a number of operations proportional to
$q$, which brings a speed-up factor of almost $r$ in the code computation for
large $p$. On large data, using estimators~\eqref{eq:estimates} instead of exact $\G_t^\star$ and $\bbeta_t^\star$
proves very efficient during the first epochs (cycles over the columns).%
\footnote{Estimators \eqref{eq:estimates} are also available in the infinite sample setting, when minimizing expected risk~\eqref{eq:expected-risk} from a i.i.d sample stream ${(\x_t)}_t$.}
However, due to the masking, $\G_t$ and $
\bbeta_t$ are not \textit{consistent} estimators: they do not
converge to $\G_t^\star$ and $\bbeta_t^\star$ for large~$t$, which breaks
theoretical guarantees on the algorithm output.
Empirical results in Section~\ref{sec:empirical_high_reduction} show that
the sequence of iterates approaches a
critical point of the risk~\eqref{eq:empirical-risk},
 but may then oscillate around it.

\paragraph{Averaging over epochs}At iteration $t$, the sample $\x_t$ is drawn from a finite set of samples ${\{\x^{(i)}\}}_i$.
This allows to
average estimators over previously seen samples
 and address the non-consistency issue of~\eqref{eq:estimates}. Namely, we keep in
memory $2n$ estimators, written ${( \G_t^{(i)}, \bbeta^{(i)}_t)}_{1\leq i \leq
n}$. We observe the sample $i = i_t$ at iteration~$t$ and use it to
update the $i$-th estimators $\bar \G_t^{(i)}$, $\bar \bbeta^{(i)}_t$ following
\begin{equation}
    \begin{split}
        \bbeta_t^{(i)} &= (1 - \gamma)  \G_{t-1}^{(i)} + \gamma \D_{t-1}^\top \M_t \x^{(i)} \\
        \G_t^{(i)} &= (1 - \gamma)  \G_{t-1}^{(i)} + \gamma \D_{t-1}^\top \M_t \D_t^{(i)},
    \end{split}
\end{equation}
where $\gamma$ is a weight
factor determined by the number of time the one sample $i$ has been previously observed at time
$t$. Precisely, given ${(\gamma_c)}_c$ a decreasing sequence of weights,
\begin{equation*}
    \gamma = \gamma_{c^{(i)}_t}\quad\text{where}\quad c^{(i)}_t =
     \left| \left\lbrace s \leq t, \x_s = \x^{(i)} \right\rbrace \right|.
\end{equation*}
All others estimators ${\{\G^{(j)}_t, \bbeta^{(j)}_t\}}_{j \neq i}$ are left unchanged from iteration
$t-1$. The set ${\{\G_t^{(i)}, \bbeta^{(i)}_t\}}_{1\leq i\leq n}$ is used to define
the \textit{averaged} estimators
\begin{equation}
    \label{eq:agg-estimates}
    \tag{b}
    \begin{split}
     \G_t &\triangleq  \G_t^{(i)} = \sum_{s \leq t, \x_s = \x^{(i)}} \gamma_{s,t}^{(i)} \D_{s-1}^\top \M_s \D_{s-1} \\
     \bbeta_t &\triangleq  \bbeta_t^{(i)} = \sum_{s \leq t, \x_s = \x^{(i)}} \gamma_{s,t}^{(i)} \D_{s-1}^\top \M_s \x^{(i)},
    \end{split}
\end{equation}
where
$\gamma_{s,t}^{(i)} = \gamma_{c^{(i)}_t} \prod_{s < t, \x_s = \x^{(i)}} (1 -
\gamma_{c^{(i)}_s})$. Using $ \bbeta_t$ and $ \G_t$ in~\eqref{eq:somf_alpha}, $\balpha_t$
minimizes the masked loss averaged over the previous iterations where sample
$i$ appeared:
\begin{equation}
    \label{eq:approx-regression}
     \min_{\balpha \in \RR^k} \sum_{\substack{s\leq t\\
     \x_s = \x^{(i)}}} \frac{\gamma_{s,t}^{(i)}}{2}
     \Vert \M_s(\x^{(i)} - \D_{s-1}^\top \balpha) \Vert_2^2
      + \lambda \Omega( \balpha ).
\end{equation}
The sequences ${(\G_t)}_t$ and ${(\bbeta_t)}_t$ are \textit{consistent}
estimations of ${(\G_t^\star)}_t$ and ${(\bbeta_t^\star)}_t$ --- consistency
arises from the fact that a single sample $\x^{(i)}$ is observed with different
masks along iterations. Solving~\eqref{eq:approx-regression} is made closer and
closer to solving~\eqref{eq:regression}, to ensure the
correctness of the algorithm (see Section~\ref{sec:analysis}). Yet, computing the
estimators~\eqref{eq:agg-estimates} is no more costly than
computing~\eqref{eq:estimates} and still permits to speed up a single iteration
close to $r$ times.
In the mini-batch setting, for every $i \in I_t$,
we use the estimators $\G_t^{(i)}$ and $\bbeta_t^{(i)}$
to compute $\balpha_t^{(i)}$. This method has a memory cost of $\mathcal{O}(n\,k^2)$. This is reasonable
compared to the dataset size\footnote{It is also possible to efficiently swap the estimators~$(\G_t^{(i)})_i$ on disk,
as they are only accessed for $i = i_t$ at iteration $t$.} if $p \gg k^2$.

\paragraph{Exact Gram computation}\label{par:gram}To
reduce the memory usage, another strategy is to use the \textit{true} Gram matrix
$\G_t$ and the estimator $ \bbeta_t$ from~\eqref{eq:agg-estimates}:
\begin{equation}
    \label{eq:gram-estimates}
    \tag{c}
    \begin{split}
         \G_t &\triangleq\G_t^\star = \D_{t-1}^\top \D_{t-1} \\
         \bbeta_t &\triangleq
        \sum_{s \leq t, \x_s = \x^{(i)}} \gamma_{s,t}^{(i)} \D_{s-1}^\top \M_s \x^{(i)}
    \end{split}
\end{equation}
As previously, the consistency of ${(\bbeta_t)}_t$ ensures that
\eqref{eq:expected-risk} is correctly solved despite the approximation in
${( \balpha_t)}_t$ computation. With the partial dictionary update step we propose, it
is possible to maintain $\G_t$ at a cost proportional to $q$. The time complexity of the coding step
is thus similarly reduced when replacing~\eqref{eq:agg-estimates} or~\eqref{eq:gram-estimates}
estimators in~\eqref{eq:regression}, but the latter option has a memory usage in
$\bigO(n\,k)$. Although estimators \eqref{eq:gram-estimates} are slightly less performant
in the first epochs, they are a good compromise between resource usage and convergence. We
summarize the characteristics of the three
estimators~\eqref{eq:estimates}--\eqref{eq:gram-estimates} in Table~\ref{table:estimators},
anticipating their empirical comparison in Section~\ref{sec:experiments}.

\begin{table}
    \centering
    \caption{Comparison of estimators used for code computation}
    \label{table:estimators}
    \begin{tabular}{cccccc}
        \toprule
        Est. & $\bbeta_t$ & $\G_t$ & Convergence & \specialcell{Extra\\mem. cost}
        & \specialcell{$1^{\textrm{st}}$ epoch\\perform.} \\
        \midrule
        \eqref{eq:estimates} & Masked & Masked & & &\checkmark \\
        \eqref{eq:agg-estimates} & Averaged & Averaged & \checkmark & $n\,k^2$ & \checkmark \\
        \eqref{eq:gram-estimates} & Averaged & Exact & \checkmark & $n\,k\phantom{^2}$ &  \\
        \bottomrule
    \end{tabular}
    \normalsize
\end{table}

\paragraph*{Surrogate computation}The computation of $\balpha_t$ using one of
the estimators above defines a surrogate $g_t(\D) \triangleq \frac{1}{2} \Vert
\x_t - \D \balpha_t \Vert_2^2 + \lambda \Omega(\balpha)$, which we use to update
the aggregated surrogate $\bar g_t \triangleq (1 - w_t) \bar g_{t-1} + w_t
g_t$, as in online matrix factorization. We follow \eqref{eq:omf-parameter-update}
(with weights ${(w_t)}_t$) to update the matrices $\bar \B_t$ and $\bar \C_t$,
which define $\bar g_t$ up to constant factors. The update of $\bar \B_t$ requires
a number of operations proportional to $p$.
Fortunately, we will see in the next paragraph that
it is possible to update $\P_t \bar
\B_t$ in the main thread with a number of operation proportional to $q$
and to complete the update of $\P_t^\perp \bar \B_t$ in parallel
with the dictionary update step.

\paragraph*{Weight sequences}\label{par:weights}Specific $(w_t)_t$ and $(\gamma_c)_c$ in Alg.~\ref{alg:somf}
are required. We provide then in Assumption~\ref{ass:gamma_decay} of the analysis: $w_t = \frac{1}{t^u}$ and $\gamma_c = \frac{1}{c^v}$,
 where $u \in (\frac{11}{12}, 1)$ and $v \in \big (\frac{3}{4}, 3u -2 \big)$ to ensure convergence. Weights have little impact on convergence speed in practice.

\subsubsection{Dictionary update}\label{sec:dict-update}

In the original online algorithm, the whole dictionnary $\D_{t-1}$ is updated
at iteration $t$. To reduce the time complexity of this step,
we add a ``freezing'' constraint to the minimization~\eqref{eq:bar_gt} of $\bar g_t$. Every
row $r$ of $\D$ that corresponds to an
\textit{unseen} row $r$ at iteration $r$ (such that $\M_t[r, r] = 0$) remains unchanged. This casts the
problem~\eqref{eq:bar_gt} into a lower dimensional space.
Formally, the
freezing operation comes out as a additional constraint
in~\eqref{eq:bar_gt}:
\begin{equation}
    \label{eq:dict-update-cons}
    \D_t = \argmin_{\substack{\D \in \mathcal{C}\\
    \P_t^\perp \D = \P_t^\perp \D_{t-1}}}
     \frac{1}{2} \trace (\D^\top \D \bar \C_t)
      - \trace (\D^\top \bar \B_t).
\end{equation}
The constraints are separable into two blocks of rows. Recalling the notations of~\eqref{eq:elastic_net}, for each atom $\d^{(j)}$, the rules $\Vert \d^{(j)} \Vert \leq 1$
and~$\P_t^\perp \d^{(j)} = \P_t^\perp \d^{(j)}_{t-1}$ can indeed be rewritten
\begin{equation}
    \left\lbrace
    \begin{array}{lll}
        \Vert \P_t \d^{(j)} \Vert &\leq& 1 - \Vert \d^{(j)}_{t-1} \Vert +
        \Vert \P_t \d^{(j)}_{t-1} \Vert \triangleq r_t^{(j)} \\
        \P_t^\perp \d^{(j)} &=& \P_t^\perp \d^{(j)}_{t-1}.
    \end{array}
    \right.
 \end{equation}
Solving~\eqref{eq:dict-update-cons} is therefore equivalent to
solving the following problem in $\RR^{q \times k}$, with $\B_t^r \triangleq \P_t \B_t$,
\begin{align}
    \label{eq:reduced_bcd}
    \D^r &\in \argmin_{\D^r \in \mathcal{C}^r}
     \frac{1}{2} \trace ({\D^r}^\top \D^r \bar \C_t) - \trace ({\D^r}^\top \bar \B_t^r)\\
     \text{where}\:\:\mathcal{C}^r &= \{\D^r {\in} \RR^{q \times k} /
     \forall j \in [0, k-1],
     \Vert \d^{r(j)} \Vert \leq r_t^{(j)}\} \notag.
\end{align}
The rows of $\D_t$ selected by $\P_t$ are then replaced with $\D^r$,
while the other rows of $\D_t$ are unchanged from iteration $t-1$.
Formally, $\P_t \D_t = \D^r$ and $\P_t^\perp \D_t = \P_t^\perp \D_{t-1}$.
 We solve~\eqref{eq:reduced_bcd} by
a projected block coordinate descent (BCD) similar to the one used in the original
algorithm, but performed in a subspace of size $q$. We compute each column $j$ of the gradient that we use
in the block coordinate descent loop with~$q\times k$ operations, as it writes $\D^r \bar \c_t^{(j)} - \bar \b_t^{r(j)} \in \RR^q$, where
$\bar \c_t^{(j)}$ and~$\bar \b_t^{r(j)}$ are the $j$-th columns of $\bar \C_t$ and $\bar \B_t^r$. Each reduced atom $\d^{r(j)}$ is projected onto the elastic-net ball of radius
$r_t^{(j)}$, at an average cost in $\bigO(q)$ following~\cite{mairal_online_2010}. This makes the complexity of a
single-column update proportional to $q$. Performing the
projection requires to keep in memory the values ${\{n_t^{(j)} \triangleq 1 - \Vert \d_t^{(j)} \Vert \}}_j$, which can be
updated online at a negligible cost.

\begin{algorithm}[t]
    \begin{algorithmic}
        \Input Dictionary $\D_{t-1}$, projector $\P_t$, statistics $\bar \C_t$,
        $\bar \B_t$, norms ${(n_{t-1}^{(j)})}_{0\leq j<k}$, Gram matrix $\G_t$ (optional).
        \State $\D_t \gets \D_{t-1}$, \qquad $\G_t \gets \G_t - \D_{t-1}^\top \P_t \D_{t-1}$.
        \For{$j \in \textrm{permutation}([1, k])$}
                \State $r_t^{(j)} \gets n_{t-1}^{(j)} + \Vert \P_t \d^{(j)}_{t-1} \Vert$.
                \State $\u \gets \P_t \d^{(j)}_{t-1} + \frac{1}{\bar \C_t[j,j]} (\P_t \bar \b_t^{(j)} - \P_t \D_t \bar \c_t^{(j)})$. \Comment in $\RR^q$
                \State $\P_t \d^{(j)}_t \gets \mathtt{enet\_projection}(\u, r_t^{(j)})$. \Comment in $\RR^q$
                \State $n_t^{(j)} \gets r_t^{(j)} - \Vert \P_t \d_t^{(j)} \Vert$.
        \EndFor
        \State $\G_{t+1} \gets \G_t + \D_{t}^\top \P_t \D_{t}$.
        \Output Dictionary $\D_t$, norms ${(n^{(j)}_t)}_{j}$,
        Gram matrix~$\G_{t+1}$. 
    \end{algorithmic}
    \caption{Partial dictionary update}\label{alg:somf-dictionary}
\end{algorithm}

We provide the reduced dictionary update
step in Alg.~\ref{alg:somf-dictionary}, where we use the function
$\mathtt{enet\_projection}(\u, r)$ that performs the orthogonal projection of
$\u \in \RR^q$ onto the elastic-net ball of radius $r$.  As in the original algorithm, we
perform a single pass over columns to solve
\eqref{eq:reduced_bcd}. Dictionary update is now performed with a number of
operations proportional to $q$, instead of $p$ in the original algorithm. Thanks to the
random nature of ${(\M_t)}_t$, updating $\D_{t-1}$ into $\D_t$ reduces~$\bar g_t$
enough to ensure convergence.

\paragraph*{Gram matrix computation}Performing partial updates of~$\D_t$ makes it possible to maintain the full
Gram matrix $\G_t~=~\G_t^\star$ with a cost in $\bigO(q\,k^2)$ per iteration, as mentioned in~\ref{par:gram}. It is indeed enough to compute the reduced Gram matrix~$\D^\top \P_t \D$ before and after the dictionary update:
\begin{equation}
    \G_{t+1} = \D_t^\top \D_t = \G_{t} - \D_{t-1}^\top \P_t \D_{t-1}^\top
    + \D_t^\top \P_t \D_t^\top.
\end{equation}
\paragraph*{Parallel surrogate computation} Performing block coordinate descent on $\bar g_t^r$ requires to access
$\bar \B_t^r = \P_t \bar \B_t$ only. Assuming we may use use more than two threads, this allows to parallelize
the dictionary update step with the update of~$\P_t^\perp \bar \B_t$. In the main thread, we compute
$\P_t \bar \B_t$ following
\begin{equation*}
    \tag{\ref{eq:somf_partial} -- Alg.~\ref{alg:somf}}
    \P_t \bar \B_t \gets (1 - w_t ) \bar \P_t \B_{t-1} + w_t \P_t \x_t  \balpha_t^\top.
\end{equation*}
which has a cost proportional to $q$. Then, we update in parallel the dictionary and the rows of $\bar \B_t$ that are not selected by $\M_t$:
\begin{equation*}
    \tag{\ref{eq:somf_minimization} -- Alg.~\ref{alg:somf}}
    \P_t^\perp \bar \B_t \gets (1 - w_t ) \P_t^\perp \bar \B_{t-1} + w_t \P_t^\perp \x_t  \balpha_t^\top.
\end{equation*}
This update requires $k (p - q) \eta$ operations (one matrix-matrix product)
for a mini-batch of size $\eta$.
In contrast, with appropriate implementation, the dictionary update step requires~$4\,k\,q^2$
 to~$6\,k\,q^2$ operations, among which $2\,k\,q^2$
come from slower matrix-vector products. Assuming $k \sim \eta$,
updating $\bar \B_t$ is faster than updating the dictionary up to $r \sim 10$,
and performing \eqref{eq:somf_minimization} on a second thread is seamless
 in term of wall-clock time. More threads may be used for larger reduction or
batch size.

%

\subsubsection{Subsampling and time complexity}\label{par:tradeoff}

Subsampling may be used in only \textit{some} of the steps of
Alg.~\ref{alg:somf}, with the other steps following Alg.~\ref{alg:omf}. Whether
to use subsampling or not in each step depends on the trade-off between the computational speed-up
it brings and the approximations it makes. It is useful to understand how complexity of \omf evolves
with~$p$. We write~$s$ the average number of non-zero coefficients in
${(\balpha_t)}_t$ ($s = k$ when $\Omega = \Vert \cdot \Vert_2^2$). \omf complexity has three terms:
\begin{enumerate}[(i)]
    \item $\bigO(p\,k^2)$: computation of the Gram matrix $\G_t$, update of the dictionary $\D_t$ with block coordinate descent,
    \item $\bigO(p\,k\,\eta)$: computation of $\bbeta_t = \D_{t-1}^\top \x_t$ and of $\bar \B_t$ using~$\x_t \balpha_t^\top$,
    \item $\bigO(k\,s^2\,\eta)$: computation of $\balpha_t$ using $\G_t$ and $\bbeta_t$, using matrix inversion or elastic-net regression.
\end{enumerate}
Using subsampling turns $p$ into $q = \frac{p}{r}$ in the expressions above. It
improves single iteration time when the cost of regression
$\bigO(k\,s^2\,\eta)$ is dominated by another term. This happens whenever
$\frac{p}{r} > s^2$, where $r$ is the reduction factor used in the algorithm.
Subsampling can bring performance improvement up to $r \sim \frac{p}{s^2}$. It
can be introduced in either computations from (i) or (ii), or both. When using
small batch size, i.e., when $\eta < k$, computations from (i) dominates
complexity, and subsampling should be first introduced in dictionary update
(i), and for code computation (ii) beyond a certain reduction ratio. On the
other hand, with large batch size $\eta > k$, subsampling should be first
introduced in code computation, then in the dictionary update step. The reasoning above ignore
potentially large constants. The best trade-offs in using subsampling must be empirically
determined, which we do in Section~\ref{sec:experiments}.

\subsection{Stochastic approximate
majorization-minimization}\label{stochastic-approximate-majorization-minimization}

The \somf algorithm can be understood within the
stochastic majorization-minimization framework. The modifications that we propose are
indeed perturbations to the first and third steps of the \smm presented in Algorithm~\ref{alg:smm}:
\begin{itemize}
    \item The code is computed approximately: the surrogate
    is only an \textit{approximate} majorizing surrogate of $f_t$ near~$\D_{t-1}$.
    \item The surrogate objective is only \textit{reduced} and not minimized, due
    to the added constraint and the fact that we perform only one pass of block coordinate descent.
\end{itemize}

We propose a new \textit{stochastic approximate
majorization-minimization} (\samm) framework handling these perturbations:
\begin{itemize}
	\item A majorization step~(\ref{eq:majorization_step_smm} -- Alg.~\ref{alg:smm}),
	 computes an \textit{approximate surrogate} of $f_t$ near $\theta_{t-1}$: $
	        g_t \approx g_t^\star$, where $g_t$ is a true upper-bounding surrogate of $\bar f_t$.
	\item A minimization step~(\ref{eq:minimization_step_smm} --
Alg.~\ref{alg:smm}), finds $\theta_t$
	by reducing \textit{enough} the objective $\bar g_t$: $\theta_t \approx \theta_t^\star \triangleq \argmin_{\theta \in \Theta}
			 \bar g_t(\theta)$, which implies $\bar g_t(\theta_t) \gtrsim \bar g_t(\theta_t^\star)$.
\end{itemize}
The \samm framework is general, in the sense that approximations are not
specified. The next section provides a theoretical
analysis of the approximation of \samm and establishes how \somf is an
instance of \samm.
It concludes by establishing Proposition~\ref{prop:somf}, which provides convergence
guarantees for \somf, under the same assumptions made for \omf
in~\cite{mairal_online_2010}.

\section{Convergence analysis}\label{sec:analysis}

We establish the convergence of \somf under reasonable assumptions.
For the sake of clarity, we first state our principal result (Proposition~\ref{prop:somf}), that guarantees \somf convergence. It is a corollary of a more general result on \samm algorithms. To present this broader result, we recall the theoretical
guarantees of the stochastic majorization-minimization algorithm~\cite{mairal_stochastic_2013} (Proposition~\ref{prop:param}); then, we show how the algorithm can withstand pertubations (Proposition~\ref{prop:approximate-surrogate}). Proofs are reported in Appendix~\ref{app:demos}. \samm convergence is proven before establishing \somf convergence as a corollary of this broader result.

\subsection[Convergence of SOMF]{Convergence of \somf}

Similar to~\cite{mairal_online_2010, mardani_subspace_2015}, we show that the sequence of iterates $(\D_t)_{t}$ asymptotically reaches a critical point of the empirical risk~\eqref{eq:empirical-risk}. We introduce the same hypothesis on the code covariance estimation $\bar \C_t$ as in \cite{mairal_online_2010} and a similar one on $\G_t$ --- they ensure strong convexity of the surrogate and boundedness of~${(\balpha_t)}_t$. They do not cause any loss of generality as they are met in practice after a few iterations, if $r$ is chosen reasonably low, so that $q > k$. The following hypothesis can also be guaranteed by adding small $\ell_2$ regularizations to~$\bar f$.
\begin{assumption}\label{ass:D_cond}
    There exists $\rho > 0$ such that for all $t > 0$, $\bar \C_t, \G_t \succ \rho \I$. 
\end{assumption}
We further assume, that the weights ${(w_t)}_t$ and ${(\gamma_c)}_c$ decay at specific rates. We specify simple weight sequences, but the proofs can be adapted for more complex ones.

\begin{assumption}\label{ass:gamma_decay}
    There exists $u \in (\frac{11}{12}, 1)$ and $v \in \big (\frac{3}{4}, 3 u - 2)$ such that, for all $t > 0$, $c > 0$, $w_t = t^{-u}$, $\gamma_c \triangleq c^{-v}$.
\end{assumption}

The following convergence result then applies to any sequence ${(\D_t)}_t$ produced by \somf, using estimators~\eqref{eq:agg-estimates} or \eqref{eq:gram-estimates}. $\bar f$ is the empirical risk defined in~\eqref{eq:empirical-risk}.

\begin{proposition}[\somf convergence]\label{prop:somf}

Under assumptions~\ref{ass:D_cond} and~\ref{ass:gamma_decay},
$\bar f(\D_t)$ converges with probability one and every limit point $\D_\infty$ of ${(\D_t)}_t$ is a stationary point of $\bar f$: for all $\D \in \mathcal{C}$
\begin{equation}
    \nabla \bar f(\D_\infty, \D - \D_\infty) \geq 0
\end{equation}
\end{proposition}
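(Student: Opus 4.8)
The plan is to deduce Proposition~\ref{prop:somf} from the general convergence result for \samm (Proposition~\ref{prop:approximate-surrogate}): it suffices to recognize \somf as an instance of \samm and to verify that its two deliberate approximations --- the inexact code $\balpha_t$, hence an inexact surrogate $g_t \approx g_t^\star$, and the single-pass, subspace-restricted dictionary update $\D_t \approx \D_t^\star$ --- meet the error-decay hypotheses of that proposition. Throughout, Assumption~\ref{ass:D_cond} supplies $\rho$-strong convexity of both the code subproblem (via $\G_t \succ \rho\I$) and of the aggregated surrogate $\bar g_t$ (via $\bar\C_t \succ \rho\I$), while compactness of $\mathcal C$ gives almost-sure boundedness of $(\D_t)_t$, of $(\balpha_t)_t$, and of the statistics $\bar\B_t, \bar\C_t, \G_t, \bbeta_t$; the quadratic data-fitting term is smooth, so the regularity demanded by the \samm theorem reduces to these boundedness facts and to Assumption~\ref{ass:gamma_decay}, which fixes the two weight scales $w_t = t^{-u}$, $\gamma_c = c^{-v}$.

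First I would bound the surrogate error. Since $\balpha_t$ minimizes $\tfrac12\balpha^\top\G_t\balpha - \balpha^\top\bbeta_t + \lambda\Omega(\balpha)$ and $\balpha_t^\star$ the same objective with $(\G_t^\star,\bbeta_t^\star)=(\D_{t-1}^\top\D_{t-1},\D_{t-1}^\top\x_t)$, $\rho$-strong convexity gives $\|\balpha_t - \balpha_t^\star\| \le C(\|\G_t - \G_t^\star\| + \|\bbeta_t - \bbeta_t^\star\|)$, whence $|g_t(\D) - g_t^\star(\D)|$ and $\|\nabla(g_t - g_t^\star)(\D)\|$ are controlled by the same quantity, uniformly over $\mathcal C$. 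For estimators~\eqref{eq:agg-estimates}--\eqref{eq:gram-estimates} I would decompose
\[
\G_t - \G_t^\star = \sum_{\substack{s\le t\\ \x_s=\x^{(i)}}} \gamma^{(i)}_{s,t}\,\D_{s-1}^\top(\M_s-\I)\D_{s-1}\;+\;\sum_{\substack{s\le t\\ \x_s=\x^{(i)}}} \gamma^{(i)}_{s,t}\bigl(\D_{s-1}^\top\D_{s-1} - \D_{t-1}^\top\D_{t-1}\bigr)
\]
(and likewise for $\bbeta_t$): the first sum is a normalized martingale with bounded increments whose conditional variance decays as the visit count $c^{(i)}_t\to\infty$, and the second (``dictionary drift'') is bounded using $\|\D_s - \D_{s-1}\| = O(w_s)$ together with the concentration of $\gamma^{(i)}_{s,t}$ on the most recent visits of $\x^{(i)}$. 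The window $v \in (\tfrac34, 3u-2)$ is precisely what makes the drift term vanish faster than the tolerance allowed by Proposition~\ref{prop:approximate-surrogate}, while keeping $\gamma_c = c^{-v}$ non-summable so that the martingale part averages out; this balancing is the crux, and the step where I expect the real work.

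Next I would handle the minimization error. The update~\eqref{eq:reduced_bcd} solves, by one pass of BCD, the restriction of the $\rho$-strongly convex quadratic $\bar g_t$ to the affine subspace $\{\D : \P_t^\perp \D = \P_t^\perp\D_{t-1}\}$. The key lemma is that, because $\M_t$ selects each coordinate with probability $1/r$ independently of the past, a random-subspace step contracts the surrogate gap in conditional expectation,
\[
\EE\bigl[\bar g_t(\D_t) - \bar g_t(\D_t^\star) \,\big|\, \mathcal F_{t-1}\bigr] \le (1-\kappa)\bigl(\bar g_t(\D_{t-1}) - \bar g_t(\D_t^\star)\bigr),
\]
for a constant $\kappa = \kappa(r,\rho)\in(0,1]$ (a single BCD sweep over a strongly convex quadratic already secures a fixed fraction of decrease, cf.~\cite{mairal_online_2010}). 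Combined with the stability estimate $\|\D_t^\star - \D_{t-1}^\star\| = O(w_t)$ and the slow decay $w_t = t^{-u}$ with $u$ close to $1$, this forces $\bar g_t(\D_t) - \bar g_t(\D_t^\star)\to 0$ almost surely at a rate compatible with Proposition~\ref{prop:approximate-surrogate}; as a by-product this certifies the one-pass BCD heuristic of \omf.

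With both approximation errors shown to satisfy the required decay, I would invoke Proposition~\ref{prop:approximate-surrogate} --- itself an extension, through quasi-martingale convergence and empirical-process (Donsker) arguments in the spirit of~\cite{mairal_stochastic_2013}, of Proposition~\ref{prop:param} --- to conclude that $\bar g_t(\D_t)$ and $\bar f_t(\D_t)$ converge to a common limit, that $\bar g_t - \bar f_t \to 0$ so the aggregated surrogate becomes asymptotically tight, and that every limit point $\D_\infty$ of $(\D_t)_t$ is a stationary point of $\bar f$, i.e.\ $\nabla\bar f(\D_\infty, \D-\D_\infty)\ge 0$ for all $\D\in\mathcal C$; specializing to the finite sample set (where the cumulative weights equilibrate over $\{\x^{(i)}\}$, so $\bar f_t\to\bar f$) yields the statement, and convergence of $\bar f(\D_t)$ follows from that of $\bar f_t(\D_t)$. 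To restate the main obstacle: it is the quantitative consistency of $(\G_t,\bbeta_t)$ despite the coupling between the evolving dictionary and the masks, and checking that the ranges prescribed in Assumption~\ref{ass:gamma_decay} render all error terms simultaneously negligible with respect to the relevant weights.
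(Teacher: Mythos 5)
Your overall architecture is exactly the paper's: cast \somf as an instance of \samm, verify assumptions~\ref{ass:epsilon_decay} and~\ref{ass:geometric}, and invoke Proposition~\ref{prop:approximate-surrogate}. Your treatment of the surrogate error also matches the paper's closely: the strong-convexity bound $\Vert\balpha_t-\balpha_t^\star\Vert \le C(\Vert\G_t-\G_t^\star\Vert+\Vert\bbeta_t-\bbeta_t^\star\Vert)$, the split into a mask-averaging term and a dictionary-drift term, and the role of $v\in(\tfrac34,3u-2)$ are all the right ingredients, in the right order.

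The genuine gap is in your verification of assumption~\ref{ass:geometric}. You assert that, because $\M_t$ selects rows independently with probability $1/r$, a single subspace-restricted BCD pass contracts the gap to the \emph{global} minimizer $\D_t^\star$ of $\bar g_t$ over $\mathcal C$ in conditional expectation, with a constant $\kappa(r,\rho)$, citing the fixed-fraction decrease of a BCD sweep on a strongly convex quadratic. But the results you lean on (Beck--Tetruashvili / \cite{wright_coordinate_2015}-type) give that fraction only for a \emph{full} cyclic sweep over all coordinates; for one cyclic pass over atoms with most rows frozen and per-atom elastic-net constraints coupling frozen and free rows, the step only guarantees progress toward the minimizer of the restricted slice $\{\D:\P_t^\perp\D=\P_t^\perp\D_{t-1}\}\cap\mathcal C$, and relating that in expectation over $\M_t$ to $\bar g_t(\D_t^\star)$ is precisely the nontrivial part you have skipped. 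The paper avoids it entirely with a cruder but airtight device: condition on the event $\M_t=\I_p$, which under the Bernoulli mask has probability $d=\PP[\M_t=\I_p]>0$; on that event a full BCD sweep yields a $(1-\mu)$ contraction with an explicit $\mu = \rho/(2A^2(1+k^2))$, and on the complement one only uses the deterministic monotonicity $\bar g_t(\D_t)\le\bar g_t(\D_{t-1})$, giving the overall factor $1-d\mu$. Your route would give a far better constant if made rigorous, but as written it is an unproved lemma, not a citation; either supply the randomized-block-prox argument adapted to row-masking of column-blocks, or fall back on the conditioning trick. (Minor point: the conditioning in~\ref{ass:geometric} is on $\mathcal F_{t-\frac12}$, not $\mathcal F_{t-1}$, which is what lets $\bar g_t$ and $\bar g_t(\D_t^\star)$ be treated as deterministic in that step.)
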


This result applies for any positive subsampling ratio $r$, which may be set arbitrarily high.
However, selecting a reasonable ratio remains important for performance.

Proposition~\ref{prop:somf} is a corollary of a stronger result on \samm
algorithms. As it provides insights on the convergence mechanisms, we
formalize this result in the following.

\subsection[{Basic assumptions and results on SMM convergence}]
{Basic assumptions and results on \smm convergence}

We first recall the main results on stochastic majorization-minimization algorithms,
established in \cite{mairal_stochastic_2013}, under assumptions that we
slightly tighten for our purpose. In our setting, we consider the empirical risk minimization problem
\begin{equation}
    \label{eq:empirical-risk-general}
    \min_{\theta \in \Theta} \Big(\bar f(\theta) \triangleq \frac{1}{n} \sum_{i=1}^n f(\theta, \x^{(i)}) \Big),
\end{equation}
where $f : \RR^K \times \mathcal{X} \to \RR$ is a loss function and

\begin{assumption}\label{ass:compact}
    $\Theta \subset \RR^K$ and the support $\mathcal{X}$ of the data are compact.
\end{assumption}

This is a special case of~\eqref{eq:expected-risk} where the samples ${(\x_t)}_t$ are drawn uniformly from the set $\{ \x^{(i)} \}_i$.
The loss functions $f_t \triangleq f(\cdot, \x_t)$ defined on
$\RR^K$ can be non-convex. We instead assume that they meet reasonable
regularity conditions:

\begin{assumption}\label{ass:f_lip}

    ${(f_t)}_t$ is uniformly $R$-Lipschitz continuous on $\RR^K$ and uniformly bounded on $\Theta$.

\end{assumption}

\begin{assumption}\label{ass:directional_derivative}

    The directional derivatives~\cite{borwein_convex_2010} $\nabla f_t(\theta, \theta' - \theta)$ and
    $\nabla \bar f(\theta, \theta' - \theta)$ exist for all $\theta$ and $\theta'$
    in $\RR^K$.

\end{assumption}

Assumption \ref{ass:directional_derivative} allows to characterize the \textit{stationary points}
of problem~\eqref{eq:empirical-risk-general}, namely $\theta \in \Theta$
such that $\nabla \bar f(\theta, \theta' - \theta) \geq 0$ for all $\theta' \in \Theta$ --- intuitively a point is stationary when there is no local direction in which the objective can be improved.

Let us now
recall the definition of first-order surrogate
functions used in the \smm algorithm. ${(g_t)}_t$ are selected in the set
$\mathcal{S}_{\rho, L}(f_t, \theta_{t-1})$, hereby introduced.

\begin{definition}[First-order surrogate function]\label{def:surrogate}
    Given a function $f : \RR^K \to \RR$, $\theta \in \Theta$ and $\rho, L > 0$,
    we define $\mathcal{S}_{\rho, L}(f, \theta)$ as the set of functions $g : \RR^K \rightarrow \RR$ such
    that
    \begin{itemize}
        \item $g$ is majorizing $f$ on $\Theta$ and $g$ is $\rho$-strongly convex,
        \item $g$ and $f$ are tight at $\theta$ --- i.e., $g(\theta) = f(\theta)$,
         $g - f$ is differentiable,
         $\nabla (g - f)$ is $L$-Lipschitz,
         $\nabla (g - f)(\theta) = 0$.
    \end{itemize}
\end{definition}

In \omf, $g_t$ defined in~\eqref{eq:surrogate} is a \textit{variational} surrogate\footnote{In this case as in \somf, $g_t$ is not $\rho$-strongly convex but $\bar g_t$ is, thanks to assumption~\ref{ass:D_cond}. This is sufficient in the proofs of convergence.} of~$f_t$. We refer the reader to~\cite{mairal_optimization_2013} for further examples of first-order surrogates. We also ensure that $\bar g_t$ should be \textit{parametrized} and thus representable in memory. The following assumption is met in \omf, as $\bar g_t$ is parametrized by the matrices~$\bar \C_t$ and~$\bar \B_t$.

\begin{assumption}[Parametrized surrogates.]\label{ass:param}

    The surrogates $(\bar g_t)_t$ are parametrized by
    vectors in a compact set $\mathcal{K} \subset \RR^P$. Namely, for all
    $t > 0$, there exists $\bkappa_t \in \mathcal{K}$ such that $\bar g_t$ is unequivocally defined
    as $g_t \triangleq \bar g_{\bkappa_t}$.

\end{assumption}

Finally, we ensure that the weights ${(w_t)}_t$ used in Alg.~\ref{alg:smm}
decrease at a certain rate.

\begin{assumption}
    \label{ass:weights_decay}
    There exists $u \in (\frac{3}{4}, 1)$ such that $w_t = t^{-u}$.
\end{assumption}

When $(\theta_t)_t$ is the sequence yielded by Alg.~\ref{alg:smm}, the
following result (Proposition 3.4 in \cite{mairal_stochastic_2013}) establishes
the convergence of~$(\bar f{(\theta_t))}_t$ and states that $\theta_t$ is
asymptotically a stationary point of the finite sum problem~\eqref{eq:empirical-risk-general}, as a special case of the \textit{expected} risk minimization problem~\eqref{eq:expected-risk}.

\begin{proposition}[Convergence of \smm, from~\cite{mairal_stochastic_2013}]
    \label{prop:param}
    Under assumptions~\ref{ass:compact} --~\ref{ass:weights_decay},
    ${(\bar f(\theta_t))}_{t \geq
    1}$ converges with probability one. Every limit point $\theta_{\infty}$ of ${(\theta_t)}_t$
    is a stationary point of the risk $\bar f$ defined
in~\eqref{eq:empirical-risk-general}. That is,
    \begin{equation}
        \label{eq:param_accumulation}
        \forall \theta \in \Theta,  \quad \nabla \bar f(\theta_\infty, \theta -\theta_\infty) \geq 0.
    \end{equation}
\end{proposition}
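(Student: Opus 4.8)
The plan is to reproduce the quasi-martingale argument of \cite{mairal_stochastic_2013}, checking that the (slightly tightened) Assumptions~\ref{ass:compact}--\ref{ass:weights_decay} suffice. After shifting $f$ so that $f$ and all surrogates $g_t$ are nonnegative (possible by the uniform lower bound coming from Assumptions~\ref{ass:compact}--\ref{ass:f_lip}), introduce the filtration $\mathcal F_t$ generated by $\x_1,\dots,\x_t$, the empirical average of the seen losses $\bar f_t \triangleq \frac1t\sum_{s=1}^{t} f_s$, and $u_t \triangleq \bar g_t(\theta_t)\ge 0$. Two facts drive everything: the deterministic majorization $\bar g_t \ge \bar f_t$ (a convex combination of $g_s \ge f_s$), and the tightness $g_{t+1}(\theta_t)=f_{t+1}(\theta_t)$ of the freshly built surrogate, which combines with $\EE[f_{t+1}(\theta_t)\mid\mathcal F_t]=\bar f(\theta_t)$ (uniform sampling) to bound the drift of $u_t$.

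First I would control $u_t$. Since $\theta_{t+1}$ minimizes $\bar g_{t+1}=(1-w_{t+1})\bar g_t+w_{t+1}g_{t+1}$,
\[
  u_{t+1}\le \bar g_{t+1}(\theta_t)=u_t+w_{t+1}\bigl(f_{t+1}(\theta_t)-\bar g_t(\theta_t)\bigr),
\]
so, using $\bar g_t\ge\bar f_t$,
\[
  \EE[u_{t+1}-u_t\mid\mathcal F_t]\le w_{t+1}\bigl(\bar f(\theta_t)-\bar f_t(\theta_t)\bigr)\le w_{t+1}\sup_{\theta\in\Theta}\bigl|\bar f_t(\theta)-\bar f(\theta)\bigr|.
\]
By a uniform law of large numbers for the uniformly bounded, uniformly $R$-Lipschitz class $\{f(\cdot,\x):\x\in\mathcal X\}$ on the compact set $\Theta$ (Assumptions~\ref{ass:compact}--\ref{ass:f_lip}), the expectation of this supremum is $\bigO(\sqrt{\log t/t})$, which is summable against $w_{t+1}=t^{-u}$ because $u>\tfrac12$ (Assumption~\ref{ass:weights_decay}). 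Hence $u_t$ is a quasi-martingale; by the Fisk quasi-martingale convergence theorem it converges almost surely, and $\sum_t w_{t+1}\bigl(\bar g_t(\theta_t)-\bar f_t(\theta_t)\bigr)<\infty$ a.s.

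Next I would turn this summability into a limit. Write $h_t\triangleq \bar g_t(\theta_t)-\bar f_t(\theta_t)\ge 0$; since $\sum_t w_t=\infty$ some subsequence of $h_t$ vanishes. To upgrade this to $h_t\to0$, I would bound its increments: each $g_s\in\mathcal S_{\rho,L}(f_s,\theta_{s-1})$ is $\rho$-strongly convex (Definition~\ref{def:surrogate}), hence so is $\bar g_{t+1}$, which gives $\tfrac\rho2\Vert\theta_{t+1}-\theta_t\Vert^2\le\bar g_{t+1}(\theta_t)-\bar g_{t+1}(\theta_{t+1})=\bigO(w_{t+1})+(u_t-u_{t+1})\to0$, so $\Vert\theta_{t+1}-\theta_t\Vert\to0$; together with the $R$-Lipschitz continuity of $\bar f$ and $\bar f_t$ and the $\bigO(w_{t+1})$ change of $\bar g_t(\theta_t)$, the increments $|h_{t+1}-h_t|$ are controlled finely enough for the standard lemma used in~\cite{mairal_stochastic_2013} (if $h_t\ge0$, $\sum_t w_t h_t<\infty$, $\sum_t w_t=\infty$ and the increments decay fast enough, then $h_t\to0$) to apply. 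Thus $h_t\to0$; since $\sup_{\Theta}|\bar f_t-\bar f|\to0$ a.s.\ this yields $\bar g_t(\theta_t)-\bar f(\theta_t)\to0$, and because $u_t$ converges, $\bar f(\theta_t)$ converges a.s.---the first claim.

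For stationarity of a limit point $\theta_\infty=\lim_j\theta_{t_j}$, I would use the parametrization (Assumption~\ref{ass:param}): the parameters $\bkappa_{t_j}$ lie in the compact $\mathcal K$, so along a further subsequence $\bar g_{t_j}\to\bar g_\infty$ pointwise, with $\bar g_\infty$ still $\rho$-strongly convex, majorizing $\bar f$ on $\Theta$ (limit of $\bar g_t\ge\bar f_t$), and tight at $\theta_\infty$ (limit of $h_t\to0$). Because $\bar g_t-\bar f_t$ is nonnegative with $L$-Lipschitz gradient, $\Vert\nabla(\bar g_t-\bar f_t)(\theta_t)\Vert^2\le 2L\,h_t\to0$, so in the limit $\nabla(\bar g_\infty-\bar f)(\theta_\infty)=0$, i.e.\ $\nabla\bar f(\theta_\infty,\theta-\theta_\infty)=\nabla\bar g_\infty(\theta_\infty,\theta-\theta_\infty)$ for all $\theta$ (using Assumption~\ref{ass:directional_derivative}). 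Each $\theta_{t_j}$ minimizes $\bar g_{t_j}$ over $\Theta$ and, by strong convexity, the minimizer depends continuously on the parameter, so $\theta_\infty$ minimizes $\bar g_\infty$ over $\Theta$; hence $\nabla\bar g_\infty(\theta_\infty,\theta-\theta_\infty)\ge0$, and combining with the previous identity, $\nabla\bar f(\theta_\infty,\theta-\theta_\infty)\ge0$ for every $\theta\in\Theta$, which is~\eqref{eq:param_accumulation}. The main obstacle is the first step: the drift estimate closes only once $\sup_{\Theta}|\bar f_t-\bar f|$ is controlled by an empirical-process argument, which is exactly why the tightened regularity Assumptions~\ref{ass:compact}--\ref{ass:f_lip} are imposed; given strong convexity and the parametrization, the $\liminf$-to-$\lim$ promotion and the limiting-surrogate argument are then routine.
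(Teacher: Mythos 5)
The paper does not actually prove this proposition: it is recalled verbatim from Mairal's work (Proposition~3.4 there), and the paper only proves its generalization, Proposition~\ref{prop:approximate-surrogate}. Your reconstruction follows exactly the skeleton that the paper itself uses for that generalization --- quasi-martingale convergence of $\bar g_t(\theta_t)$ driven by tightness and majorization, summability of $w_t\bigl(\bar g_t(\theta_t)-\bar f_t(\theta_t)\bigr)$ upgraded to $\bar h_t(\theta_t)\to 0$ via the increments lemma, uniform convergence $\bar f_t\to\bar f$ from empirical-process theory, the gradient bound $\Vert\nabla\bar h_t(\theta_t)\Vert\to 0$, and the compact parametrization to extract a limiting surrogate $\bar g_\infty$ minimized at $\theta_\infty$. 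So the approach is the right one and matches the source.

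There is, however, one step that does not close as written. For the inequality $\bar g_t\ge\bar f_t$ you need $\bar f_t$ to be the \emph{weighted} aggregate $\bar f_t=(1-w_t)\bar f_{t-1}+w_t f_t$ (the same convex combination that defines $\bar g_t$), not the arithmetic mean $\tfrac1t\sum_{s\le t}f_s$. The uniform deviation of this weighted average from $\bar f$ is $\EE\bigl[\sup_\Theta|\bar f_t-\bar f|\bigr]=\mathcal{O}(w_t\sqrt t)=\mathcal{O}(t^{1/2-u})$ (this is the Lemma~B.7 bound the paper quotes), not $\mathcal{O}(\sqrt{\log t/t})$; the drift series then behaves like $\sum_t t^{1/2-2u}$, which converges precisely because Assumption~\ref{ass:weights_decay} imposes $u>3/4$. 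Your argument, which invokes the unweighted rate and concludes summability from $u>1/2$, either uses an inequality that fails (if $\bar f_t$ is the arithmetic mean, $\bar g_t\ge\bar f_t$ need not hold) or a deviation rate that is too optimistic (if $\bar f_t$ is the weighted mean) --- and it obscures why the assumption requires $u>3/4$ at all. A second, more minor imprecision: the increments lemma needs the rate $\Vert\theta_{t+1}-\theta_t\Vert=\mathcal{O}(w_{t+1})$, not merely $\to0$; this follows from strong convexity combined with the fact that $\bar g_{t+1}-\bar g_t=w_{t+1}(g_{t+1}-\bar g_t)$ is $w_{t+1}Q$-Lipschitz (as in the paper's Lemma~\ref{lemma:stability}), whereas your bound $\tfrac{\rho}{2}\Vert\theta_{t+1}-\theta_t\Vert^2\le\mathcal{O}(w_{t+1})+(u_t-u_{t+1})$ only delivers $o(1)$.
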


The correctness of the online matrix factorization algorithm can be deduced from
this proposition.

\subsection{Convergence of \samm}%
\label{approximate-surrogate-in-the-majorization-step}

We now introduce assumptions on the approximations
made in \samm, before extending the result of Proposition~\ref{prop:param}.
We make hypotheses on both the surrogate computation (\textit{majorization}) step
and the iterate update (\textit{minimization}) step. The principles of \samm are illustrated in Figure~\ref{fig:samm}, which
provides a geometric interpretation of the approximations introduced in the
following assumptions~\ref{ass:epsilon_decay} and~\ref{ass:geometric}.

\subsubsection{Approximate surrogate computation}

The \smm algorithm selects a surrogate for
$f_t$ at point $\theta_{t-1}$ within the set $\mathcal{S}_{\rho, L}(f_t,
\theta_{t-1})$. Surrogates within this set are \textit{tight} at $\theta_{t-1}$
and greater than $f_t$ everywhere. In \samm, we allow the use of surrogates that are only
\textit{approximately majorizing} $f_t$ and \textit{approximately tight} at $\theta_{t-1}$.
This is indeed what \somf does when using estimators in the code computation step.
For that purpose, we introduce the set
$\mathcal{T}_{\rho, L}(f, \theta, \epsilon)$, that contains all functions
$\epsilon$\=/close of a surrogate in $\mathcal{S}_{\rho, L}(f, \theta)$ for the
$\ell_\infty$\=/norm:

\begin{definition}[Approximate first-order surrogate function]\label{def:approx-surrogate}
    Given a function $f : \RR^K \to \RR$, $\theta \in \Theta$ and $\epsilon > 0$, $\mathcal{T}_{\rho,
    L}(f, \theta, \epsilon)$ is the set of $\rho$-strongly convex functions $g : \RR^K \rightarrow \RR$ such~   that
    \begin{itemize}
        \item $g$ is $\epsilon$-majorizing $f$ on $\Theta$:
            $\forall\:\kappa \in \Theta,\, g(\kappa) - f(\kappa) \geq - \epsilon$,
        \item $g$ and $f$ are $\epsilon$-tight at $\theta$ ---
        i.e., $g(\theta) - f(\theta) \leq \epsilon$,
         $g - f$ is differentiable,
         $\nabla (g - f)$ is $L$-lipschitz.
    \end{itemize}
\end{definition}

We assume that \samm selects an approximative surrogate in $\mathcal{T}_{\rho,
L}(f_t, \theta_{t-1}, \epsilon_t)$ at each iteration, where ${(\epsilon_t)}_t$ is a deterministic or random non-negative sequence
 that vanishes at a sufficient rate.

\begin{assumption}\label{ass:epsilon_decay}
    For all $t > 0$, there exists $\epsilon_t > 0$ such that $g_t \in \mathcal{T}_{\rho, L}(f_t, \theta_{t-1}, \epsilon_t)$. There exists a constant $\eta >0$ such that
    $\EE [\epsilon_t] \in
    \mathcal{O}(t^{2(u-1) - \eta})$ and $\epsilon_t
    \to_\infty 0$ almost surely.
\end{assumption}
As illustrated on Figure~\ref{fig:samm}, given the \omf surrogate $g_t^\star \in \mathcal{S}_{\rho, L}(f_t, \theta_{t-1})$ defined in~\eqref{eq:surrogate}, any function $g_t$ such that $\Vert g_t -~g_t^\star \Vert_\infty < \epsilon$ is in
$\mathcal{T}_{\rho, L}(f_t, \theta_{t-1}, \epsilon)$ --- \textit{e.g.},
where $g_t$ uses an approximate $\balpha_t$
in~\eqref{eq:surrogate}. This assumption can also be met in matrix factorization settings with difficult code regularizations, that \textit{require} to make code approximations.

\subsubsection{Approximate surrogate minimization}

We do not
require~$\theta_t$ to be the minimizer of $\bar g_t$ any longer, but ensure that
the surrogate objective function $\bar g_t$ decreases ``fast enough''.
Namely, $\theta_t$ obtained from partial minimization should be closer to a minimizer of $\bar g_t$
than $\theta_{t-1}$. We write ${(\mathcal{F}_{t})}_t$ and ${(\mathcal{F}_{t-\frac{1}{2}})}_t$ the
filtrations induced by the past of the algorithm, respectively up to the end of iteration $t$ and up to the beginning of the minimization step
in iteration $t$. Then, we assume

\begin{assumption}\label{ass:geometric}
    For all $t > 0$, $\bar g_t(\theta_t) < \bar g_t(\theta_{t-1})$. There exists~$\mu > 0$ such that, for all $t > 0$,
    where $\theta_t^\star = \argmin_{\theta \in \Theta} \bar g_t(\theta)$,
    \begin{equation}
        \label{eq:stochastic_geometric_decrease}
        \EE [\bar g_t(\theta_t) - \bar g_t(\theta_t^\star) | \mathcal{F}_{t-\frac{1}{2}}] \leq (1 -
        \mu) (\bar g_t(\theta_{t-1}) - \bar g_t(\theta_{t}^\star)).
    \end{equation}
\end{assumption}
\setcounter{stoass}{\value{assumption}}

\begin{figure}
    \centering
    \includegraphics{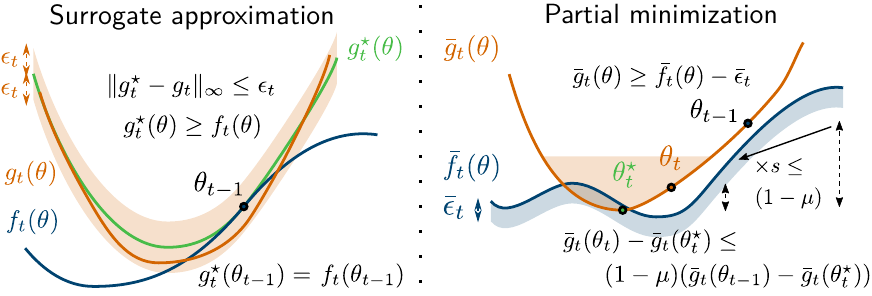}
    \caption{Both steps of \samm make well-behaved approximations. The operations
    that are performed in exact \smm are in green and superscripted by $^\star$, while the actual computed values are
    in orange. Light bands recall the bounds on approximations assumed in~\ref{ass:epsilon_decay} and \ref{ass:geometric}.}
    \label{fig:samm}
\end{figure}

Assumption~\ref{ass:geometric} is met by choosing an appropriate method for the inner $\bar g_t$
minimization step --- a large variety of gradient-descent
algorithms indeed have convergence rates of the form~\eqref{eq:stochastic_geometric_decrease}.  In \somf,
the block coordinate descent with frozen coordinates indeed meet this property, relying on results from~\cite{wright_coordinate_2015}.
 When both assumptions are met, \samm enjoys the same convergence guarantees as \smm.
\subsubsection{Asymptotic convergence guarantee}
The following proposition guarantees that the stationary point condition
of Proposition~\ref{prop:param} holds for the \samm algorithm, despite the use of approximate surrogates and approximate minimization.
\begin{proposition}[Convergence of \samm]
    \label{prop:approximate-surrogate}
    Under assumptions~\ref{ass:compact} --~\ref{ass:geometric}, the conclusion of Proposition~\ref{prop:param} holds for~\samm.
\end{proposition}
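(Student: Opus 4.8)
The plan is to mimic the quasi-martingale proof of Proposition~\ref{prop:param} (Proposition~3.4 in~\cite{mairal_stochastic_2013}), carrying along two families of error terms and checking that the rate hypotheses annihilate them. Write $w_s^{(t)} \triangleq w_s \prod_{k=s+1}^{t}(1-w_k)$, $\bar\epsilon_t \triangleq \sum_{s=1}^{t} w_s^{(t)}\epsilon_s$, and $\Delta_t \triangleq \bar g_t(\theta_t) - \bar g_t(\theta_t^\star) \geq 0$ for the minimization suboptimality. From Definition~\ref{def:approx-surrogate} and Assumption~\ref{ass:epsilon_decay} one reads off, on $\Theta$: $\bar g_t$ is $\rho$-strongly convex (a convex combination of $\rho$-strongly convex functions), $\bar g_t - \bar f_t \geq -\bar\epsilon_t$, $|g_t(\theta_{t-1}) - f_t(\theta_{t-1})| \leq \epsilon_t$, and---because $g_t - f_t$ is $\epsilon_t$-bounded below, $\epsilon_t$-small at $\theta_{t-1}$ with $L$-Lipschitz gradient---$\|\nabla(g_t - f_t)(\theta_{t-1})\| = \mathcal{O}(\sqrt{\epsilon_t})$. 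A Toeplitz/Ces\`aro argument together with the polynomial decay of $w_t$ (Assumption~\ref{ass:weights_decay}) gives $\bar\epsilon_t \to 0$ almost surely and $\EE[\bar\epsilon_t] = \mathcal{O}(\EE[\epsilon_t]) = \mathcal{O}(t^{2(u-1)-\eta})$, the weights concentrating on the recent, small $\epsilon_s$.

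Next I would run the quasi-martingale argument on $u_t \triangleq \bar g_t(\theta_t) \geq 0$. The strict-decrease part of Assumption~\ref{ass:geometric} and the aggregation rule give $u_t - u_{t-1} \leq w_t\bigl(g_t(\theta_{t-1}) - \bar g_{t-1}(\theta_{t-1})\bigr)$; conditioning on $\mathcal{F}_{t-1}$, approximate tightness replaces $\EE[g_t(\theta_{t-1})\mid\mathcal{F}_{t-1}]$ by $\bar f(\theta_{t-1})$ up to $\mathcal{O}(\EE[\epsilon_t\mid\mathcal{F}_{t-1}])$, and approximate majorization together with the almost-sure uniform deviation $\delta_{t-1} \triangleq \|\bar f_{t-1} - \bar f\|_\infty$ (a Donsker bound under Assumptions~\ref{ass:compact}--\ref{ass:f_lip}) yield $\EE[u_t - u_{t-1}\mid\mathcal{F}_{t-1}] \leq w_t\bigl(\delta_{t-1} + \bar\epsilon_{t-1} + \mathcal{O}(\EE[\epsilon_t\mid\mathcal{F}_{t-1}])\bigr)$. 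The exponents $u \in (\tfrac{3}{4},1)$ and the buffer $\eta>0$ are exactly what makes $\sum_t w_t(\delta_{t-1} + \bar\epsilon_{t-1} + \EE[\epsilon_t])$ summable, so by the Robbins--Siegmund lemma $u_t$ converges almost surely and $\sum_t w_t\max\{0,\,\bar g_{t-1}(\theta_{t-1}) - \bar f(\theta_{t-1})\} < \infty$ almost surely.

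The genuinely new step---the one I expect to be the main obstacle, since it is absent from~\cite{mairal_stochastic_2013}---is controlling $\Delta_t$. Strong convexity of $\bar g_t$ gives $\tfrac{\rho}{2}\|\theta_t - \theta_t^\star\|^2 \leq \Delta_t$, while $\|\theta_{t-1}^\star - \theta_t^\star\| = \mathcal{O}(w_t/\rho)$ because $\theta_{t-1}^\star,\theta_t^\star$ minimize $\rho$-strongly convex functions differing by a $w_t$-multiple of uniformly Lipschitz functions. Evaluating the aggregation identity at $\theta_{t-1}$, which is $\mathcal{O}(\sqrt{\Delta_{t-1}})$-close to $\theta_{t-1}^\star$, and using that $\theta_{t-1}^\star$ minimizes $\bar g_{t-1}$, gives $\bar g_t(\theta_{t-1}) - \bar g_t(\theta_t^\star) \leq \Delta_{t-1} + \mathcal{O}\bigl(w_t\sqrt{\Delta_{t-1}} + w_t^2\bigr)$; Assumption~\ref{ass:geometric} then yields the stochastic recursion $\EE[\Delta_t\mid\mathcal{F}_{t-\frac{1}{2}}] \leq (1-\mu)\bigl(\Delta_{t-1} + \mathcal{O}(w_t\sqrt{\Delta_{t-1}}) + \mathcal{O}(w_t^2)\bigr)$. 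Since $1-\mu<1$ and $w_t$ decays polynomially, this self-consistently forces $\EE[\Delta_t] = \mathcal{O}(w_t^2)$, hence $\EE\|\theta_t - \theta_t^\star\|^2 = \mathcal{O}(w_t^2)$ and, with the bound on $\|\theta_{t-1}^\star - \theta_t^\star\|$, $\EE\|\theta_t - \theta_{t-1}\|^2 = \mathcal{O}(w_t^2)$---the same $\mathcal{O}(w_t)$ displacement rate exploited in the exact analysis. Propagating this recursion \emph{jointly} with the $\epsilon_t$'s through the quasi-martingale above is where the bookkeeping is delicate, and is why the decay window for $\epsilon_t$ in Assumption~\ref{ass:epsilon_decay} is tied to $u$.

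Finally I would close as in~\cite{mairal_stochastic_2013}: Lipschitzness of the aggregates composed with $\|\theta_t - \theta_{t-1}\| = \mathcal{O}(w_t)$ gives a non-oscillation estimate that upgrades the summability of the second step into $\bar g_t(\theta_t) - \bar f(\theta_t) \to 0$ almost surely, whence $\bar h_t(\theta_t) \triangleq \bar g_t(\theta_t) - \bar f_t(\theta_t) \to 0$ and, since $\bar h_t + \bar\epsilon_t \geq 0$ has $L$-Lipschitz gradient, $\|\nabla\bar h_t(\theta_t)\|^2 \leq 2L\bigl(\bar h_t(\theta_t) + \bar\epsilon_t\bigr) \to 0$. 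Then $\bar f(\theta_t) = u_t - \bigl(\bar g_t(\theta_t) - \bar f(\theta_t)\bigr)$ converges almost surely. For a limit point $\theta_\infty = \lim_j \theta_{t_j}$, compactness of $\mathcal{K}$ (Assumption~\ref{ass:param}) lets me extract $\bkappa_{t_j} \to \bkappa_\infty$, so $\bar g_{t_j} \to \bar g_\infty \triangleq \bar g_{\bkappa_\infty}$ uniformly; $\bar g_\infty$ is $\rho$-strongly convex, satisfies $\bar g_\infty \geq \bar f$ on $\Theta$, is tight at $\theta_\infty$, and $\bar g_\infty - \bar f$ is differentiable at $\theta_\infty$ with vanishing gradient (passing $\nabla\bar h_{t_j}(\theta_{t_j}) \to 0$ to the limit via Assumption~\ref{ass:directional_derivative}); moreover $\Delta_{t_j} \to 0$ makes $\theta_\infty$ a minimizer of $\bar g_\infty$ over $\Theta$. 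Hence for every $\theta \in \Theta$, $\nabla\bar f(\theta_\infty, \theta - \theta_\infty) = \nabla\bar g_\infty(\theta_\infty, \theta - \theta_\infty) \geq 0$, which is exactly the stationarity asserted in Proposition~\ref{prop:param}, now for \samm; specializing this to \somf (whose code estimators~\eqref{eq:agg-estimates}--\eqref{eq:gram-estimates} furnish the $\epsilon_t$ of Assumption~\ref{ass:epsilon_decay} and whose frozen-coordinate block descent furnishes Assumption~\ref{ass:geometric}) yields Proposition~\ref{prop:somf}.
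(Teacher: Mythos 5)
Your proposal is correct and follows essentially the same route as the paper: a quasi-martingale argument on $\bar g_t(\theta_t)$ carrying the extra $w_t(\bar\epsilon_{t-1}+\epsilon_t)$ error terms, a stability recursion showing $\bar g_t(\theta_t)-\bar g_t(\theta_t^\star)=\mathcal{O}(w_t^2)$ and $\|\theta_t-\theta_{t-1}\|=\mathcal{O}(w_t)$ from Assumption (I) plus strong convexity (the paper's Lemma 3, which you rightly identify as the genuinely new ingredient), and the parametrized-surrogate limit-point argument. Your variant of the recursion, which bounds $\bar g_t(\theta_{t-1})-\bar g_t(\theta_t^\star)$ by passing through $\theta_{t-1}^\star$ so that only $\Delta_{t-1}$ appears on the right-hand side, is a mild simplification of the paper's implicit inequality in $A_t,B_t,D_t$, but leads to the same bounded quasi-geometric conclusion.
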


%
%
Assumption~\ref{ass:epsilon_decay} is essential to bound the errors introduced
by the sequence $(\epsilon_t)_t$ in the proof of
Proposition~\ref{prop:approximate-surrogate}, while~\ref{ass:geometric} is the
key element to show that the sequence of iterates $(\theta_t)_t$ is stable enough to ensure convergence. The result holds
for any subsampling ratio $r$, provided that~\ref{ass:D_cond} remains true.

\subsubsection{Proving \somf convergence}Assumptions~\ref{ass:D_cond} and~\ref{ass:gamma_decay} readily implies~\ref{ass:compact}--\ref{ass:weights_decay}.
With Proposition~\ref{prop:approximate-surrogate}
at hand, proving Proposition~\ref{prop:somf} reduces to ensure that the surrogate
sequence of \somf meets~\ref{ass:epsilon_decay} while its iterate sequence meets~\ref{ass:geometric}.

\section{Experiments}\label{sec:experiments}

The \somf algorithm is designed for datasets with large number of samples
$n$ and large dimensionality $p$. Indeed, as detailed in Section \ref{sec:somf},
 subsampling
removes the computational
bottlenecks that arise from high dimensionality. Proposition~\ref{prop:somf} establishes
that the subsampling used in \somf is safe, as it enjoys the same guarantees as \omf.
However, as with \omf, no convergence rate is provided. We therefore perform a strong
 empirical validation of subsampling.

We tackle two different problems, in
functional Magnetic Resonance Imaging (fMRI) and hyperspectral imaging.
Both involve the factorization of very large matrices $\X$ with sparse factors.
As the data we consider are huge, subsampling reduces the time of a single
iteration by a factor close to $\frac{p}{q}$. Yet it is also much redundant: \somf makes little approximations and
accessing only a fraction of the features per iteration should not hinder much
the refinement of the dictionary. Hence high speed-ups are
expected --- and indeed obtained. All experiments can be reproduced using open-source code.

\subsection{Problems and datasets}

\subsubsection{Functional MRI}

Matrix factorization has long been used on functional Magnetic Resonance
Imaging \cite{mckeown_analysis_1998}. Data are temporal series of 3D images of
brain activity and are decomposed into spatial modes capturing regions that activate
synchronously. They form a matrix $\X$ where columns are the 3D images, and rows
corresponds to voxels.
Interesting dictionaries for neuroimaging capture
spatially-localized components, with a few brain regions. This can be obtained
by enforcing sparsity on the dictionary: we use an $\ell_2$ penalty and the
elastic-net constraint. \somf streams subsampled 3D brain records to learn the
sparse dictionary $\D$. Data can be huge: we use the whole HCP dataset
\cite{van_essen_wu-minn_2013}, with $n=2.4\cdot 10^6$ (2000 records, 1\,200
time points) and $p=2\cdot10^5$, totaling 2\,TB of dense data. For comparison,
we also use a smaller public dataset (ADHD200 \cite{milham_adhd-200_2012}) with
40 records, $n=7000$~samples and $p=6\cdot10^4$ voxels. Historically, brain
decomposition have been obtained by minimizing the classical dictionary learning
objective on \textit{transposed} data \cite{varoquaux_cohort-level_2013}: the
code $\A$ holds sparse spatial maps and voxel time-series are streamed. This is
not a natural streaming order for fMRI data as $\X$ is stored columnwise on
disk, which makes the sparse dictionary formulation more appealing.
Importantly, we seek a \textit{low-rank} factorization, to keep the
decomposition interpretable --- $k \sim 100 \ll p$.

\subsubsection{Hyperspectral imaging}Hyperspectral cameras acquire images with
many channels that correspond to different spectral bands. They are used
heavily in remote sensing (satellite imaging), and material study (microscopic
imaging). They yield digital images with around $1$ million pixels, each
associated with hundreds of spectral channels. Sparse matrix factorization has
been widely used on these data for image classification
\cite{chen_hyperspectral_2011, soltani-farani_spatial-aware_2015} and denoising
\cite{maggioni_nonlocal_2013, peng_decomposable_2014}. All
methods rely on the extraction of full-band patches representing a local image neighborhood with
all channels included. These patches are very high dimensional, due to the
number of spectral bands. From one image of the AVIRIS project
\cite{vane_first_1987}, we extract $n = 2\cdot10^6$ patches of size $16\times
16$ with $224$ channels, hence $p = 6\cdot 10^4$. A dense dictionary
is learned from these patches. It should allow a sparse representation of
samples: we either use the classical dictionary learning setting
($\ell_1$/elastic-net penalty), or further add positive constraints to the dictionary and codes: both methods may be used and deserved to be benchmarked.
We seek a dictionary of reasonable size: we use $k \sim 256 \ll p$.

\subsection{Experimental design}

To validate the introduction of subsampling and the usefulness of \somf, we perform two major experiments.
\begin{itemize}
    \item We measure the performance of \somf when increasing the reduction factor,
    and show benefits of stochastic dimension reduction on all datasets.
    \item We assess the importance
    of subsampling in each of the steps of \somf. We compare the different approaches proposed for code
    computation.
\end{itemize}

\begin{table}
    \caption{Summary of experimental settings}\label{table:experiments}%
    \centering
    \begin{tabular}{lccc}
        \toprule
        Field & \multicolumn{2}{c}{Functional MRI} & Hyperspectral imaging \\
        \cmidrule(r){2-3}
        Dataset & ADHD & HCP & Patches from AVIRIS \\
        \midrule
        Factors & \multicolumn{2}{c}{$\D$ sparse, $\A$ dense} & $\D$ dense, $\A$ sparse \\
        \# samples $n$ & $7\cdot10^3$ & $2\cdot10^6$ & $ 2\cdot10^6$ \\
        \# features $p$ & $6\cdot10^4$ & $2\cdot10^5$ & $6\cdot10^4$ \\
        $\X$ size & 2 GB & 2 TB & 103 GB \\
        Use case ex. & \multicolumn{2}{c}{Extracting predictive feature} & Recognition / denoising \\
        \bottomrule
    \end{tabular}
\end{table}

\subsubsection*{Validation}We compute the objective function
$\eqref{eq:empirical-risk}$ over a test set to rule out any overfitting effect ---
a dictionary should be a good representation of unseen samples. This criterion
is always plotted against wall-clock time, as we are interested in the performance of
\somf for practitioners.

\begin{figure*}[ht]
    \centering
    \includegraphics[width=\textwidth]{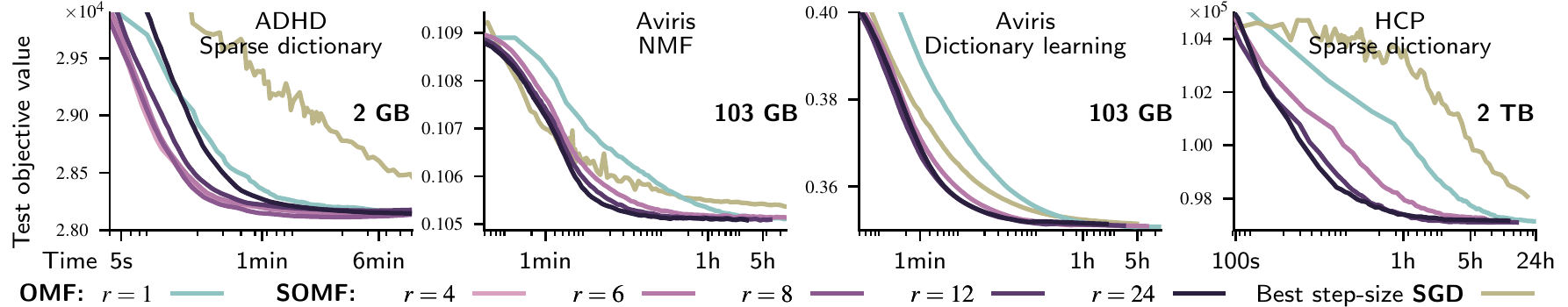}%
    \caption{Subsampling provides significant speed-ups on all fMRI and
    hyperspectral datasets. A reduction factor of $12$ is a good
    overall choice.
    With larger data, larger reduction factors can be used for better
performance --- convergence is reached $13 \times$ faster than
state-of-the-art methods on the 2TB HCP dataset.}\label{fig:bench}
\end{figure*}

\subsubsection*{Tools}To perform a valid benchmark, we implement \omf and \somf
using Cython \cite{behnel_cython:_2011}
We use coordinate descent~\cite{friedman_pathwise_2007} to
solve Lasso problems with optional positivity constraints. Code computation is parallelized to handle mini-batches. Experiments use
\textit{scikit-learn}~\cite{pedregosa_scikit-learn:_2011} for numerics,
and \textit{nilearn}~\cite{abraham_machine_2014} for handling fMRI data. We have released
the code
in an open-source Python package\footnote{\url{https://github.com/arthurmensch/modl}}. Experiments were run on 3 cores of an
Intel Xeon 2.6GHz, in which case computing $\P_t^\perp \bar \B_t$ is faster than updating $\P_t \D_t$.

\subsubsection*{Parameter setting}Setting the number of components $k$ and the
amount of regularization $\lambda$ is a hard problem in the absence of ground
truth. Those are typically set by cross-validation when matrix factorization is
part of a supervised pipeline. For fMRI, we set $k = 70$ to obtain
interpretable networks, and set~$\lambda$ so that the decomposition
approximately covers the whole brain (i.e., every map is $\frac{k}{70})$
sparse). For hyperspectral images, we set $k = 256$ and select $\lambda$ to
obtain a dictionary on which codes are around $3\%$ sparse. We cycle randomly through
the data (fMRI records, image patches) until convergence, using mini-batches of
size $\eta = 200$ for HCP and AVIRIS, and $\eta = 50$ for ADHD (small number of
samples). Hyperspectral patches are normalized in the dictionary learning
setting,
but not in the non-negative setting --- the classical pre-conditioning for each case. We use $u = 0.917$ and $v = 0.751$ for weight sequences.
%


\subsection{Reduction brings speed-up at all data scales}

We  benchmark \somf for various reduction factors against the original online
matrix factorization algorithm \omf \cite{mairal_online_2010}, on the three
presented datasets. We stream data in the same order for all reduction factors.
Using variant \eqref{eq:gram-estimates} (true Gram matrix, averaged $\bbeta_t$)
performs slightly better on fMRI datasets, whereas \eqref{eq:agg-estimates}
(averaged Gram matrix and $\bbeta_t$) is slightly faster for hyperspectral
decomposition. For comparison purpose, we display results using estimators~\eqref{eq:agg-estimates} only.

Figure~\ref{fig:bench} plots the test objective against CPU time. First, we observe
that all algorithms find dictionaries with very close objective function values
for all reduction factors, on each dataset. This is not a trivial
observation as the matrix factorization problem \eqref{eq:empirical-risk} is
not convex and different runs of \omf and \somf may converge towards minima
with different values. Second, and most importantly, \somf provides significant
improvements in convergence speed for three different sizes of data and three
different factorization settings. Both observations
confirm the relevance of the subsampling approach.

Quantitatively, we
summarize the speed-ups obtained in Table~\ref{table:speed-ups}.
On fMRI data, on both large and medium datasets, \somf provides more than an
order of magnitude speed-up. Practitioners working on datasets akin to HCP can
decompose their data in 20 minutes instead of $4\,\textrm{h}$ previously, while
working on a single machine. We obtain the highest speed-ups for the largest
dataset --- accounting for the extra redundancy that usually appears when
dataset size increase. Up to $r \sim 8$, speed-up is of the
order of~$r$ --- subsampling induces little noise in the iterate sequence,
compared to \omf. Hyperspectral decomposition is performed near
$7\times$ faster than with \omf in the classical dictionary learning setting, and $3 \times$ in the non-negative setting,
which further demonstrates the versatility of \somf. Qualitatively, given a
certain time budget, Figure~\ref{fig:qualitative} compares the results of \omf
and the results of \somf with a subsampling ratio $r=24$, in the non-negative setting. Our algorithm yields
a valid smooth bank of filters much faster.
The same comparison has been made for fMRI in~\cite{mensch_dictionary_2016}.

\subsubsection*{Comparison with stochastic gradient descent}It is possible to
solve~\eqref{eq:empirical-risk} using the projected stochastic gradient
(\textsc{sgd}) algorithm~\cite{duchi_efficient_2008}. On all tested settings,
for high precision convergence,
\textsc{sgd} (with the best step-size among a grid) is slower than \omf and
even slower than \somf. In the dictionary learning setting, \textsc{sgd}
is somewhat faster than \omf but slower than \somf in the first
epochs. Compared to \somf and \omf, \textsc{sgd}
further requires to select the step-size by grid search.

%

\begin{figure}
    \centering
    \includegraphics{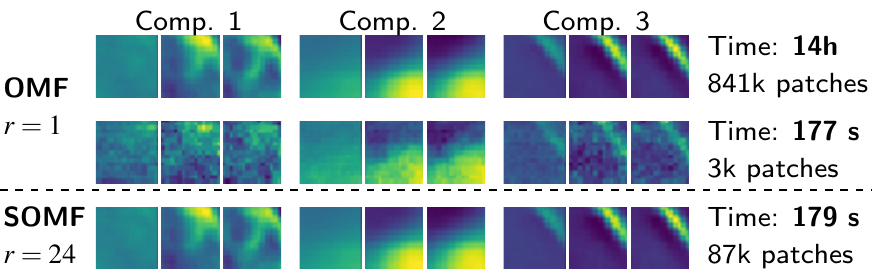}
    \caption{Given a 3 minute time budget,
    the atoms learned by \somf are more focal and less noisy that those learned by \omf. They are closer to
    the dictionary of first line, for which convergence has been reached.}\label{fig:qualitative}
\end{figure}

\subsubsection*{Limitations}Table \ref{table:speed-ups}
reports convergence time within
$1\%$, which is enough for application in practice. \somf is less beneficial
when setting very high precision: for convergence within $0.01 \%$, speed-up
for HCP is $3.4$. This is expected as \somf trades speed for approximation. For
high precision convergence, the reduction ratio can be reduced after a few epochs.
As expected, there exists an \textit{optimal} reduction ratio, depending on the problem and precision,
 beyond which performance reduces: $r =
12$ yields better results than $r=24$ on AVIRIS (dictionary learning) and ADHD, for $1\%$ precision.

\begin{table}
    \caption{Time to reach convergence ($<1\%$ test objective)}%
    \label{table:speed-ups}
    \centering
	\setlength{\tabcolsep}{2.5pt}
    \begin{tabular}{clrlrlrlr}
        \toprule
        Dataset & \multicolumn{2}{c}{ADHD} & \multicolumn{2}{c}{AVIRIS (NMF)} &
		\multicolumn{2}{c}{AVIRIS (DL)} &
		\multicolumn{2}{c}{HCP} \\

        Algorithm & \omf & \somf & \omf & \somf & \omf & \somf & \omf & \somf \\
        \cmidrule(r){1-1}
        \cmidrule(lr){2-3}
        \cmidrule(lr){4-5}
        \cmidrule(lr){6-7}
        \cmidrule(l){8-9}
        Conv. time &
        $6\,\textrm{min}$ & $\mathbf{28\,\textbf{s}}$ &
          $2\,\textrm{h}\,30$ & $\mathbf{43\,\textbf{min}}$ &
		  $1\,\textrm{h}\,16$& $\mathbf{11\,\textbf{min}}$&
           $3\,\textrm{h}\,50$ & $\mathbf{17\,\textbf{min}}$ \\
        Speed-up & \multicolumn{2}{c}{11.8} &
		 \multicolumn{2}{c}{3.36}&
		 \multicolumn{2}{c}{6.80} &
         \multicolumn{2}{c}{13.31} \\
        \bottomrule
    \end{tabular}
\end{table}

Our first experiment establishes the power of stochastic subsampling as a
whole. In the following two experiments, we refine our analysis to show
that subsampling is indeed useful in the three steps of online matrix
factorization.

\begin{figure}[t]
    \centering
    \includegraphics{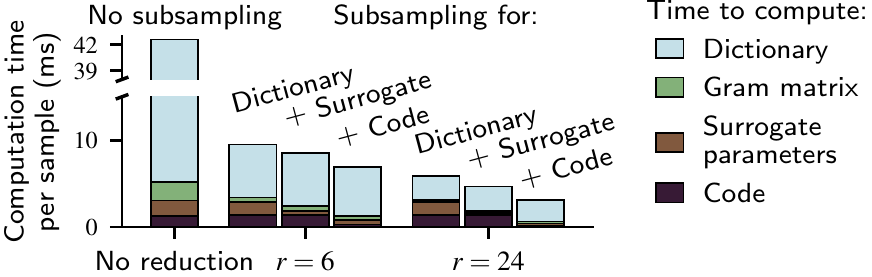}%
    \caption[Profiling \omf and \somf on HCP.]%
    {Profiling \omf and \somf on HCP. Partial dictionary update removes the major bottleneck of online matrix factorization for small reductions. For higher reduction,
    parameter update and code computation must be subsampled to further reduce the iteration time.}\label{fig:profiling}
\end{figure}

\subsection{For each step of \somf, subsampling removes a bottleneck}

\label{sec:profiling}

In Section~\ref{sec:stochastic-subsampling}, we have provided theoretical
guidelines on when to introduce subsampling in each of the three steps of an
iteration of \somf. This analysis predicts that, for $\eta \sim k$, we should
first use partial dictionary update, before using approximate code computation
and asynchronous parameter aggregation. We verify this by measuring the time
spent by \somf on each of the updates for various reduction factors, on the HCP
dataset. Results are presented in Figure~\ref{fig:profiling}. We observe that
block coordinate descent is indeed the bottleneck in \omf. Introducing partial
dictionary update removes this bottleneck, and as the reduction factor
increases, code computation and surrogate aggregation becomes the major
bottlenecks. Introducing subsampling as described in \somf overcomes these
bottlenecks, which rationalizes all steps of \somf from a computational point of view.

\subsection{Code subsampling becomes useful for high reduction}

\label{sec:empirical_high_reduction}

\begin{figure*}[t]
    \centering
    \includegraphics[width=\textwidth]{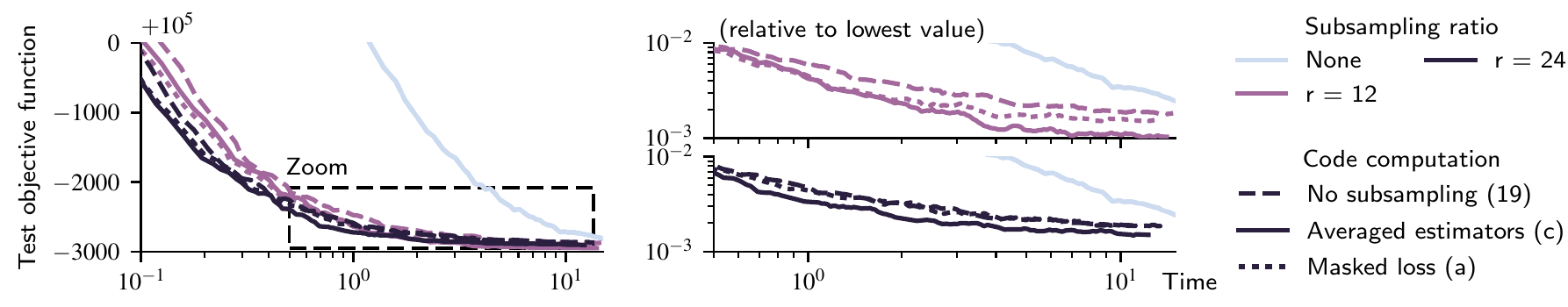}%
    \vspace{-.5em}
    \caption{Approximating code computation with the proposed subsampling method further accelerates the convergence of \somf. Refining code computation
    using past iterations (averaged estimates) performs better than simply performing a subsampled linear regression as in~\cite{mensch_dictionary_2016}}\label{fig:compare_methods}
\end{figure*}

It remains to assess the performance of approximate code computation and
averaging techniques used in \somf. Indeed, subsampling for code
computation introduces noise that may undermine the computational speed-up.
To understand the impact of approximate code computation, we compare three
strategies to compute $(\balpha_t)_t$ on the HCP dataset. First, we
compute ${(\balpha_t^\star)}_t$ from ${(\x_t)}_t$ using~\eqref{eq:regression}. Subsampling is
thus used only in dictionary update. Second,
we rely on masked, non-consistent estimators~\eqref{eq:estimates}, as
in~\cite{mensch_dictionary_2016} --- this breaks convergence guarantees. Third,
we use averaged estimators $(\bbeta_t, \G_t)$ from~\eqref{eq:gram-estimates} to
reduce the variance in ${(\balpha_t)}_t$ computation.

%

Fig.~\ref{fig:compare_methods} compares the three strategies for $r\in \{12, 24 \}$. Partial minimization at each step is the
most important part to accelerate convergence: subsampling the dictionary updates already allows to outperforms
\omf. This is expected, as dictionary update constitutes the main
bottleneck of \omf in large-scale settings. Yet, for large reduction factors, using subsampling in code computation is important
to further accelerate convergence. This clearly appears when comparing the plain and
dashed black curves.
Using past estimates to better approximate~${(\balpha_t)}_t$ yields faster convergence
than the non-converging, masked loss strategy \eqref{eq:estimates} proposed in
\cite{mensch_dictionary_2016}.
%
%

\section{Conclusion}\label{conclusion}

In this paper, we introduce \somf, a matrix-factorization algorithm that
can handle input data with very large number of rows and columns. It leverages
subsampling within
the inner loop of a streaming algorithm to make iterations faster and
accelerate convergence. We show that \somf provides a stationary point of
the non-convex matrix factorization problem. To prove this result, we
extend the stochastic majorization-minimization framework to two major
approximations. We assess the performance of \somf on real-world large-scale problems, with
different sparsity/positivity requirements on learned factors. In particular, on fMRI and
hyperspectral data decomposition, we show that the use of subsampling can
speed-up decomposition up to $13$ times. The larger the dataset, the more \somf
outperforms state-of-the art techniques, which is very promising for future
applications. This calls for adaptation of our approach to learn more complex models.

\appendices%


\section{Proofs of convergence}\label{app:demos}

This appendix contains the detailed proofs of Proposition~\ref{prop:approximate-surrogate} and Proposition~\ref{prop:somf}. We first introduce three lemmas that will be crucial to prove \samm convergence, before establishing it by proving Proposition~\ref{prop:approximate-surrogate}. Finally, we
show that \somf is indeed an instance of \samm (\textit{i.e.} meets the assumptions \ref{ass:compact}--\ref{ass:geometric}), proving Proposition~\ref{prop:somf}.

\subsection{Basic properties of the surrogates, estimate stability}

We derive an important result on the stability and optimality of the sequence $(\theta_t)_t$, formalized in Lemma~\ref{lemma:stability} --- introduced in the main text. We first introduce a numerical lemma on the boundedness of well-behaved determistic and random sequence.  The proof is detailed in Appendix~\ref{app:derivations}.

\begin{lemma}[Bounded quasi-geometric sequences]
    \label{lemma:bounded}
    Let~${(x_t)}_{t}$ be a sequence in $\RR^+$, $u: \RR \times \RR \to \RR$,
    $t_0 \in \mathbb{N}$ and $\alpha \in [0, 1)$ such that, for all $t \geq t_0, \, x_t \leq \alpha x_{t-1} + u(x_t, x_{t-1})$, where $u(x, y) \in o(x + y)$ for $x, y \to \infty$. Then ${(x_t)}_{t}$ is bounded.

    Let now $(X_t)_t$ be a random sequence in $\RR^+$, such that $\EE[X_t] < \infty$.
    We define ${(\mathcal{F}_{t})}_t$ the filtration adapted to ${(X_t)}_t$. If, for all $t > t_0$, there exists
    a $\sigma$-algebra $\mathcal{F}_{t'}$ such that $\mathcal{F}_{t-1} \subseteq \mathcal{F}_{t'} \subseteq \mathcal{F}_t$ and
    \begin{equation}
        \EE[X_t |\mathcal{F}_{t'} ] \leq \alpha X_{t-1} + u(X_t, X_{t-1}),
    \end{equation}
    then $(X_t)_t$ is bounded almost surely.
\end{lemma}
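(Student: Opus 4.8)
The plan is to reduce both statements to a genuinely \emph{contractive} recursion $x_t\le\beta x_{t-1}+c$ (respectively $\EE[X_t\mid\mathcal F_{t-1}]\le\beta X_{t-1}+c$) with $\beta\in[0,1)$ and $c\ge 0$, and then to finish by summing a geometric series in the deterministic case and by a supermartingale argument in the random case. The common first step is to turn the hypothesis $u(x,y)\in o(x+y)$ into a linear majorant: fix $\epsilon\in(0,1)$ small enough that $\beta\triangleq\frac{\alpha+\epsilon}{1-\epsilon}<1$, which is possible since $\alpha<1$ (e.g.\ $\epsilon=\frac{1-\alpha}{4}$ gives $\beta=\frac{3\alpha+1}{3+\alpha}<1$). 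By definition of $o(x+y)$ there is $R>0$ with $|u(x,y)|\le\epsilon(x+y)$ whenever $x+y\ge R$; using the (mild, and in the applications obvious) local boundedness of $u$ on the complementary region, one gets a constant $c_\epsilon\ge 0$ such that $u(x,y)\le\epsilon(x+y)+c_\epsilon$ for all $x,y\ge 0$.

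For the deterministic part, substituting this bound into $x_t\le\alpha x_{t-1}+u(x_t,x_{t-1})$ gives $x_t\le\alpha x_{t-1}+\epsilon x_t+\epsilon x_{t-1}+c_\epsilon$; isolating $x_t$ yields $(1-\epsilon)x_t\le(\alpha+\epsilon)x_{t-1}+c_\epsilon$, i.e.\ $x_t\le\beta x_{t-1}+c$ with $c\triangleq\frac{c_\epsilon}{1-\epsilon}$, for every $t\ge t_0$. Unrolling from $t_0$ gives $x_t\le\beta^{\,t-t_0+1}x_{t_0-1}+c\sum_{j\ge 0}\beta^{j}\le x_{t_0-1}+\frac{c}{1-\beta}$ for $t\ge t_0$, so together with the finitely many terms $x_0,\dots,x_{t_0-1}$ the sequence $(x_t)_t$ is bounded. (A slightly slicker variant avoids $c_\epsilon$ entirely: a pathwise case split on $\{x_t+x_{t-1}\ge R\}$ versus its complement shows $x_t\le\max(\beta x_{t-1},R)$, whence $x_t\le\max\bigl(R,\max_{s<t_0}x_s\bigr)$ by induction.)

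For the random part, the same algebra applied to $\EE[X_t\mid\mathcal F_{t'}]\le\alpha X_{t-1}+u(X_t,X_{t-1})$ gives $\EE[X_t\mid\mathcal F_{t'}]\le(\alpha+\epsilon)X_{t-1}+\epsilon X_t+c_\epsilon$. Taking $\EE[\,\cdot\mid\mathcal F_{t-1}]$ and using the tower rule (legitimate because $\mathcal F_{t-1}\subseteq\mathcal F_{t'}$), the $\mathcal F_{t-1}$-measurability of $X_{t-1}$, and $\EE[X_t]<\infty$, one obtains $(1-\epsilon)\,\EE[X_t\mid\mathcal F_{t-1}]\le(\alpha+\epsilon)X_{t-1}+c_\epsilon$, hence $\EE[X_t\mid\mathcal F_{t-1}]\le\beta X_{t-1}+c$. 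Writing $Z_t\triangleq X_t-\frac{c}{1-\beta}\ge-\frac{c}{1-\beta}$, this is $\EE[Z_t\mid\mathcal F_{t-1}]\le\beta Z_{t-1}$, equivalently $\EE[X_t\mid\mathcal F_{t-1}]\le X_{t-1}-(1-\beta)\bigl(X_{t-1}-\tfrac{c}{1-\beta}\bigr)$, i.e.\ a quasi-supermartingale that strictly decreases in conditional mean while above the level $\tfrac{c}{1-\beta}$. Almost-sure boundedness then follows from a Robbins--Siegmund / Doob maximal-inequality argument: $(Z_t)$ is a non-negative supermartingale on each excursion above $\tfrac{c}{1-\beta}$, so the probability of a large excursion is controlled by Doob's inequality, the little-$o$ remainder contributes only a summable (hence negligible) correction, and a Borel--Cantelli step rules out $\sup_tX_t=\infty$.

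The routine portions — the rearrangements and the geometric unrolling — are immediate; the delicate point is the last step of the random case, since a contraction holding only in conditional expectation does not transfer to a pathwise bound for free, and one must control the fluctuations of $X_t$ about its conditional mean (this is exactly where the sub-linearity of $u$, through the summability of its remainder in the \somf instantiation, and the supermartingale maximal inequality are used). I would also emphasize that the intermediate $\sigma$-algebra $\mathcal F_{t'}$ enters only through the single tower step, so the conclusion is insensitive to how much of iteration $t$ it reveals — in particular to the subsampling mechanism and ratio $r$ that determine $\mathcal F_{t'}$ when this lemma is later applied to \samm and \somf.
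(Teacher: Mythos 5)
Your deterministic half is correct and takes a genuinely different route from the paper. The paper argues by contradiction: it assumes the sequence diverges, absorbs $u(x_t,x_{t-1})$ into $\epsilon(x_t+x_{t-1})$ beyond some rank, and concludes that $(x_t)$ is eventually dominated by a geometric sequence, contradicting divergence. You instead extract a uniform linear majorant $u(x,y)\le\epsilon(x+y)+c_\epsilon$ and unroll a contraction $x_t\le\beta x_{t-1}+c$. Your first variant needs local boundedness of $u$, which is not in the hypotheses (you flag this), but your ``slicker variant'' $x_t\le\max(\beta x_{t-1},R)$ needs no such assumption and is in fact cleaner than the paper's argument: it avoids the paper's step ``assume without loss of generality that $x_t\to\infty$'', which is not innocuous, since an unbounded nonnegative sequence need not tend to infinity and one cannot pass to a subsequence in a recursion coupling consecutive terms.

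The random half has a genuine gap. Your reduction to $\EE[X_t\mid\mathcal F_{t-1}]\le\beta X_{t-1}+c$ with $\beta<1$, $c>0$ discards too much: that inequality alone does \emph{not} imply almost-sure boundedness. Take $(X_t)$ independent with $X_t=ct$ with probability $1/t$ and $X_t=0$ otherwise; then $\EE[X_t\mid\mathcal F_{t-1}]=c\le\beta X_{t-1}+c$, yet by the second Borel--Cantelli lemma $X_t=ct$ infinitely often almost surely, so $\sup_t X_t=\infty$ a.s. This also shows why your proposed patches cannot close the gap: $Z_t\triangleq X_t-\tfrac{c}{1-\beta}$ satisfies $\EE[Z_t\mid\mathcal F_{t-1}]\le\beta Z_{t-1}$ but is \emph{not} a supermartingale where it matters, because $Z_{t-1}$ may be negative and, more fundamentally, $\EE[Z_t\mid\mathcal F_{t-1}]\le\beta Z_{t-1}$ does not control $\EE[Z_t^+\mid\mathcal F_{t-1}]$; no Doob maximal inequality or Borel--Cantelli argument applied to such a process can rule out the counterexample. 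The paper's proof avoids the additive constant entirely: it restricts to the event where the trajectory diverges, on which $u(X_t,X_{t-1})\le\epsilon(X_t+X_{t-1})$ eventually with \emph{no} residual constant, so that $(X_t)$ becomes a genuine nonnegative supermartingale past some rank and Doob's forward convergence theorem yields a contradiction. (That proof is itself informal about the random rank and the conditioning on a trajectory-wise event, but the structural point stands: the $o(x+y)$ hypothesis must be exploited \emph{along the putatively divergent trajectory}, not converted into a uniform affine bound, because the constant term is exactly what permits unbounded excursions in conditional mean.) To repair your argument you would need to re-introduce the contradiction step before taking expectations, not after.
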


We first derive some properties of the approximate surrogate functions used in \samm. The proof is adapted from~\cite{mairal_stochastic_2013}.

\begin{lemma}[Basic properties of approximate surrogate functions]
    \label{lemma:basic}
    Consider any sequence of iterates ${(\theta_t)}_t$ and assume there exists
    $\epsilon > 0$ such that $g_t \in \mathcal{T}_{L, \rho}(f_t, \theta_{t-1},
    \epsilon)$ for all $t \geq 1$. Define $h_t \triangleq g_t - f_t$ for all $t \geq 1$,
    $\bar h_0 \triangleq h_0$ and
    $\bar h_t \triangleq (1 - w_t) \bar h_{t-1} + w_t h_t$.
    Under assumptions~\ref{ass:f_lip}~--~\ref{ass:weights_decay},

    \begin{enumerate}[(i)]
        \item $(\nabla {h_t(\theta_{t-1}))}_{t > 0}$ is uniformly bounded and there exists $R'$ such that ${\{\nabla h_t\}}_{t}$
        is uniformly bounded by $R'$.
        \item ${(h_t)}_t$ and ${(\bar h_t)}_t$ are uniformly $R'$-Lipschitz, ${(g_t)}_t$ and ${(\bar g_t)}_t$ are uniformly $(R + R')$-Lipschitz.
    \end{enumerate}
\end{lemma}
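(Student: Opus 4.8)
I would follow the proof of the analogous lemma in~\cite{mairal_stochastic_2013}, carefully tracking the slack introduced by the $\epsilon$-relaxation: in the exact setting one uses $h_t(\theta_{t-1})=0$ and $\nabla h_t(\theta_{t-1})=0$, and these identities must now be replaced by the quantitative estimates $|h_t(\theta_{t-1})|\le\epsilon$ and a uniform bound on $\|\nabla h_t(\theta_{t-1})\|$. Writing $h_t\triangleq g_t-f_t$, Definition~\ref{def:approx-surrogate} gives that $h_t$ is differentiable with $L$-Lipschitz gradient on $\RR^K$, that $h_t(\kappa)\ge-\epsilon$ for all $\kappa\in\Theta$, and that $h_t(\theta_{t-1})\le\epsilon$; since the iterates of Alg.~\ref{alg:smm} stay in $\Theta$, the $\epsilon$-majorization at $\theta_{t-1}$ also yields $h_t(\theta_{t-1})\ge-\epsilon$, so $|h_t(\theta_{t-1})|\le\epsilon$. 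Everything in (i) and (ii) then reduces to bounding $\|\nabla h_t(\theta_{t-1})\|$ uniformly in $t$.

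For that bound I would combine the descent inequality coming from the $L$-Lipschitz gradient of $h_t$ with these two pointwise $\epsilon$-controls: for $\kappa\in\Theta$, $h_t(\kappa)\le\epsilon+\langle\nabla h_t(\theta_{t-1}),\kappa-\theta_{t-1}\rangle+\tfrac{L}{2}\|\kappa-\theta_{t-1}\|^2$, and specializing to $\kappa=\theta_{t-1}-\tfrac1L\nabla h_t(\theta_{t-1})$ together with $h_t(\kappa)\ge-\epsilon$ gives $\|\nabla h_t(\theta_{t-1})\|^2\le 4L\epsilon$, hence $\|\nabla h_t(\theta_{t-1})\|\le 2\sqrt{L\epsilon}\triangleq c_0$, uniformly in $t$. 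I expect this step to be the main obstacle: the test point $\kappa$ must remain feasible for the $\epsilon$-majorization to be applicable, so one has to handle the case where $\theta_{t-1}$ lies close to $\partial\Theta$ --- either by observing that the gradient is then already controlled, by truncating $\kappa$ to the boundary, or, in the concrete \somf\ instantiation, by bounding $\nabla h_t(\D_{t-1})$ directly via Danskin's theorem applied to the partial minimum $f_t$, which makes it $O(\|\balpha_t-\balpha_t^\star\|)$ and hence small since the inner regression is strongly convex.

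Granting $c_0$, claim (i) is immediate: for any $\kappa\in\Theta$, $\|\nabla h_t(\kappa)\|\le\|\nabla h_t(\theta_{t-1})\|+L\|\kappa-\theta_{t-1}\|\le c_0+L\,\mathrm{diam}(\Theta)\triangleq R'$, which is finite by assumption~\ref{ass:compact} and independent of $t$. For claim (ii), note that $w_1=1$, so the recursions start from $\bar h_1=h_1$ and $\bar g_1=g_1$; each $h_s$ is $R'$-Lipschitz on the convex set $\Theta$ since its gradient there is bounded by $R'$, hence so is the convex combination $\bar h_t$, and a one-line induction gives $\bar h_t=\bar g_t-\bar f_t$. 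As $(f_t)_t$ is $R$-Lipschitz by assumption~\ref{ass:f_lip}, so are the weighted averages $(\bar f_t)_t$; therefore $g_t=f_t+h_t$ and $\bar g_t=\bar f_t+\bar h_t$ are $(R+R')$-Lipschitz, with a constant uniform in $t$ because $c_0$, $L$ and $\mathrm{diam}(\Theta)$ are.
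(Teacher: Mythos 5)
Your argument is essentially the paper's own: it bounds $\Vert\nabla h_t(\theta_{t-1})\Vert_2$ by combining the descent inequality for the $L$-Lipschitz gradient of $h_t$ with the two $\epsilon$-controls ($\epsilon$-tightness at $\theta_{t-1}$ and $\epsilon$-majorization), the paper taking a test point $\theta_{t-1}-\alpha\nabla h_t/\Vert\nabla h_t\Vert_2$ and optimizing over $\alpha$ to get the same $2\sqrt{L\epsilon}$ bound, then propagates to all of $\Theta$ via $L$-Lipschitzness of $\nabla h_t$ and compactness, and deduces (ii) from Lipschitz considerations exactly as you do. The feasibility issue you flag for the test point is real but applies equally to the paper's proof, which also evaluates the $\epsilon$-majorization at a point not guaranteed to lie in $\Theta$; your awareness of it and proposed fixes make your version, if anything, slightly more careful.
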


\begin{proof}
    We first prove (i). We set $\alpha > 0$ and define $\theta' = \theta_t - \alpha \frac{\nabla h_t(\theta_t)}{\Vert \nabla
    h_t(\theta_t)\Vert_2}$. As $h_t$ has a $L$-Lipschitz gradient on $\RR^K$, using Taylor's inequality (see Appendix~\ref{app:derivations})
    \begin{align}
        \label{eq:nabla_h_t_bounded}
        h_t(\theta') &\leq h_t(\theta_t)
        - \alpha \Vert \nabla h_t(\theta_t) \Vert_2
        + \frac{L\alpha^2}{2} \\
        \Vert \nabla h_t(\theta_t) \Vert_2
        &\leq \frac{1}{\alpha}(h_t(\theta_t) - h_t(\theta'))
        + \frac{L\alpha}{2}
        \leq \frac{2}{\alpha} \epsilon + \frac{L \alpha}{2}, \notag
    \end{align}
    where we use $h_t(\theta_t) < \epsilon$ and $-h_t(\theta_t') \leq \epsilon$ from the assumption $g_t \in \mathcal{T}_{L, \rho}(f_t, \theta_{t-1}, \epsilon)$.
    Moreover, by definition, $\nabla h_t$ exists and is $L$-lipschitz for all $t$.
    Therefore, $\forall\,t \geq 1$,
    \begin{equation}
        \Vert \nabla h_t(\theta) \Vert_2
        \leq \Vert \nabla h_t(\theta_t)\Vert_2 + L {\Vert \theta_{t-1} - \theta \Vert}_2
    \end{equation}
    Since $\Theta$ is compact and ${({\Vert \nabla h_t(\theta_{t}) \Vert}_2)}_{t
    \geq 1}$ is bounded in~\eqref{eq:nabla_h_t_bounded}, $\nabla h_t$ is bounded by $R'$ independent of
    $t$. (ii) follows by basic considerations on Lipschitz functions.
\end{proof}

Finally, we prove a result on the stability of the estimates, that derives from combining the properties of $(g_t)_t$
and the geometric decrease assumption~\ref{ass:geometric}.

 \begin{lemma}[Estimate stability under \samm approximation]
 \label{lemma:stability}
    In the same setting as Lemma~\ref{lemma:basic}, with the additional assumption~\ref{ass:geometric} (expected linear decrease of $\bar g_t$ suboptimality), the
     sequence $\Vert \theta_t - \theta_{t-1} \Vert_2$ converges to $0$ as fast as ${(w_t)}_t$, and $\theta_t$ is asymptotically an exact minimizer. Namely, almost surely,
 \begin{equation}
     {\Vert \theta_t - \theta_{t-1} \Vert}_2 \in \bigO(w_t)\:\text{and}
     \: \bar g_t(\theta_t) - \bar g_t(\theta_t^\star) \in \bigO(w_t^2).
 \end{equation}
 \end{lemma}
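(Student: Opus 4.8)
The plan is to establish the two claims sequentially, extracting the decay of $\Vert\theta_t-\theta_{t-1}\Vert_2$ first, then feeding it back into the geometric-decrease assumption~\ref{ass:geometric} to get the $\bigO(w_t^2)$ suboptimality bound. First I would exploit the recursive structure of $\bar g_t = (1-w_t)\bar g_{t-1} + w_t g_t$. Since $\theta_{t-1}$ was the (approximate) minimizer handed to iteration $t$, and $\bar g_{t-1}$ is $\rho$-strongly convex (Assumption~\ref{ass:D_cond} via Assumption~\ref{ass:param}), I would write the standard strong-convexity lower bound
\begin{equation*}
    \bar g_{t-1}(\theta) \geq \bar g_{t-1}(\theta_{t-1}^\star) + \frac{\rho}{2}\Vert \theta - \theta_{t-1}^\star\Vert_2^2,
\end{equation*}
and combine it with the fact that $\bar g_t - \bar g_{t-1} = w_t(g_t - \bar g_{t-1})$ is $\bigO(w_t)$-Lipschitz (using Lemma~\ref{lemma:basic}(ii), which bounds the Lipschitz constants of all the $g_t$ and $\bar g_t$ uniformly, and Assumption~\ref{ass:compact} to bound function values on the compact $\Theta$). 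This shows the minimizer $\theta_t^\star$ of $\bar g_t$ cannot be far from $\theta_{t-1}^\star$ or from $\theta_{t-1}$: a perturbation of size $\bigO(w_t)$ in the objective moves the minimizer of a $\rho$-strongly convex function by $\bigO(w_t)$. Hence $\Vert\theta_t^\star - \theta_{t-1}\Vert_2 \in \bigO(w_t)$ deterministically.

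Next I would handle the gap between $\theta_{t-1}$ and the \emph{computed} iterate $\theta_t$. Assumption~\ref{ass:geometric} gives, conditionally on $\mathcal{F}_{t-\frac{1}{2}}$, that $\EE[\bar g_t(\theta_t) - \bar g_t(\theta_t^\star)] \leq (1-\mu)(\bar g_t(\theta_{t-1}) - \bar g_t(\theta_t^\star))$. The right-hand side is controlled by $\bar g_t(\theta_{t-1}) - \bar g_t(\theta_t^\star)$, which, since $\theta_t^\star$ is the minimizer and by $\rho$-strong convexity $\bar g_t(\theta_{t-1}) - \bar g_t(\theta_t^\star) \leq \frac{R+R'}{1}\Vert \theta_{t-1}-\theta_t^\star\Vert_2$ is at worst $\bigO(w_t)$ from the previous paragraph — but to get the sharper $\bigO(w_t^2)$ I would instead set up a recursion. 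Introduce the nonnegative sequence $\delta_t \triangleq \bar g_t(\theta_t) - \bar g_t(\theta_t^\star)$. Expanding $\bar g_t(\theta_{t-1}) - \bar g_t(\theta_t^\star)$ in terms of $\bar g_{t-1}(\theta_{t-1}) - \bar g_{t-1}(\theta_{t-1}^\star) = \delta_{t-1}$ plus cross terms coming from the $w_t$-update and from the movement $\theta_{t-1}^\star \to \theta_t^\star$ (each $\bigO(w_t)\cdot\bigO(w_t) = \bigO(w_t^2)$ by strong convexity, or $\bigO(w_t\sqrt{\delta_{t-1}})$ by Cauchy--Schwarz), I would obtain an inequality of the form
\begin{equation*}
    \EE[\delta_t \mid \mathcal{F}_{t-\frac12}] \leq (1-\mu)\,\delta_{t-1} + C w_t \sqrt{\delta_{t-1}} + C' w_t^2.
\end{equation*}
Substituting $X_t = \delta_t / w_t^2$ and using $w_t/w_{t-1} \to 1$ (Assumption~\ref{ass:weights_decay}, $w_t = t^{-u}$), this becomes a quasi-geometric recursion in the sense of Lemma~\ref{lemma:bounded}: the $\sqrt{\delta_{t-1}}$ term contributes $w_t\sqrt{X_{t-1}}\,w_{t-1} \in o(X_t + X_{t-1})$ and the constant term is absorbed. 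Lemma~\ref{lemma:bounded} (random version, with $\mathcal{F}_{t'} = \mathcal{F}_{t-\frac12}$) then yields that $(X_t)_t$ is bounded almost surely, i.e. $\delta_t \in \bigO(w_t^2)$ a.s., which is the second claim.

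Finally, the first claim follows from the second together with strong convexity: $\frac{\rho}{2}\Vert\theta_t - \theta_t^\star\Vert_2^2 \leq \bar g_t(\theta_t) - \bar g_t(\theta_t^\star) = \delta_t \in \bigO(w_t^2)$, so $\Vert\theta_t - \theta_t^\star\Vert_2 \in \bigO(w_t)$; combining with $\Vert\theta_t^\star - \theta_{t-1}\Vert_2 \in \bigO(w_t)$ from the first paragraph and the triangle inequality gives $\Vert\theta_t - \theta_{t-1}\Vert_2 \in \bigO(w_t)$ almost surely. The main obstacle I anticipate is bookkeeping the cross terms carefully enough to land a genuine $o(X_t+X_{t-1})$ perturbation in the recursion — in particular, one must be sure the $w_t\sqrt{\delta_{t-1}}$ term, after normalization, really is lower order and does not secretly force only $\delta_t \in \bigO(w_t)$; this is where the precise decay rate $w_t = t^{-u}$ and the strict contraction factor $1-\mu < 1$ both matter, and where invoking the random case of Lemma~\ref{lemma:bounded} with the intermediate filtration $\mathcal{F}_{t-\frac12}$ does the real work.
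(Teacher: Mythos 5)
Your proposal is correct and follows the same overall strategy as the paper's proof: bound the drift of the minimizers $\Vert\theta_t^\star-\theta_{t-1}^\star\Vert_2\in\bigO(w_t)$ by combining $\rho$-strong convexity with the $\bigO(w_t)$-Lipschitz perturbation $\bar g_t-\bar g_{t-1}=w_t(g_t-\bar g_{t-1})$, turn Assumption~\ref{ass:geometric} into a quasi-geometric recursion for the normalized suboptimality $\delta_t/w_t^2$, invoke the random case of Lemma~\ref{lemma:bounded} through $\mathcal{F}_{t-\frac{1}{2}}$, and recover the iterate bound from strong convexity plus the triangle inequality. The one genuine difference is in the decomposition of $\bar g_t(\theta_{t-1})-\bar g_t(\theta_t^\star)$: the paper splits it so that the perturbation term involves $A_t=\Vert\theta_t-\theta_{t-1}\Vert_2$ and $B_t=\Vert\theta_t-\theta_t^\star\Vert_2$, i.e.\ quantities at time $t$, which makes the recursion \emph{implicit} in $\tilde D_t$ and requires the full $u(x_t,x_{t-1})$ form of Lemma~\ref{lemma:bounded}; your split through $\theta_{t-1}^\star$ keeps only time-$(t-1)$ quantities on the right, giving the explicit recursion $\EE[\delta_t\mid\mathcal{F}_{t-\frac12}]\leq(1-\mu)\delta_{t-1}+Cw_t\sqrt{\delta_{t-1}}+C'w_t^2$, which is arguably cleaner and still fits the lemma since $w_{t-1}/w_t\to1$ makes the $\sqrt{\delta_{t-1}}$ term $o$ of the normalized sequence. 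One slip to fix: at the end of your first paragraph, the claim $\Vert\theta_t^\star-\theta_{t-1}\Vert_2\in\bigO(w_t)$ does \emph{not} follow at that point --- the perturbation argument only gives $\Vert\theta_t^\star-\theta_{t-1}^\star\Vert_2\in\bigO(w_t)$, and closing the gap to $\theta_{t-1}$ requires $B_{t-1}\in\bigO(w_{t-1})$, which is precisely a consequence of the suboptimality bound you prove afterwards. Since your recursion does not actually use that premature claim and your final assembly invokes it only after $\delta_t\in\bigO(w_t^2)$ is established, the argument is sound once the statement is moved to (or re-justified in) the last paragraph. You should also note explicitly that $\EE[\delta_t/w_t^2]<\infty$ for each $t$ (immediate from boundedness of $\bar g_t$ on the compact $\Theta$) before applying the supermartingale argument inside Lemma~\ref{lemma:bounded}.
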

 \addtocounter{lemma}{-1}

\begin{proof}

We first establish the result when a deterministic version
of~\ref{ass:geometric} holds, as it makes derivations simpler to follow.

\subsubsection{Determistic decrease rate}

We temporarily assume that decays are deterministic.
\begingroup
\setcounter{tmp}{\value{assumption}}
\renewcommand\theassumption{\textbf{(\Alph{stoass}$_{\det}$)}}
\begin{assumption}\label{ass:det_geometric}
    For all $t > 0$, $\bar g_t(\theta_t) < \bar g_t(\theta_{t-1})$.
    Moreover, there exists $\mu > 0$ such that, for all $t > 0$
    \begin{align}
        \bar g_t(\theta_t) - \bar g_t(\theta_t^\star) &\leq (1 -
        \mu) (\bar g_t(\theta_{t-1}) - \bar g_t(\theta_{t}^\star)) \notag\\
        \textrm{where}\quad\theta_t^\star &= \argmin_{\theta \in \Theta} \bar
        g_t(\theta),
    \end{align}
\end{assumption}
\endgroup
\setcounter{assumption}{\thetmp}
We introduce the following auxiliary positive values, that we will seek to bound in the proof:
\begin{align}
    A_t \triangleq \Vert \theta_t - \theta_{t-1} \Vert_2,
    \quad B_t \triangleq \Vert \theta_t - \theta_t^\star \Vert_2, \notag \\
    C_t \triangleq \Vert \theta_t^\star - \theta_{t-1}^\star \Vert_2,
    \quad D_t \triangleq \bar g_t(\theta_t) - \bar g_t(\theta_t^\star).
\end{align}
Our goal is to bound $A_t$. We first relate it to $C_t$ and $B_t$ using convexity of $\ell_2$ norm:
\begin{equation}
    \label{eq:A_t_0}
    A_t^2 \leq 3 B_t^2 + 3 B_{t-1}^2 + 3 C_t^2.
\end{equation}
As $\theta_t^\star$ is the minimizer of $\bar g_t$,
by strong convexity of $(\bar g_t)_t$,
\begin{equation}
    \label{eq:B_t}
    \frac{\rho}{2} B_t^2 = \frac{\rho}{2} \Vert \theta_t - \theta_t^\star \Vert_2^2 \leq D_t,
\end{equation}
while we also have
\begin{align}
    &\frac{\rho}{2} \Vert \theta_t^\star - \theta_{t-1}^\star \Vert_2^2 \leq \bar g_t(\theta_{t-1}^\star) - \bar g_t(\theta_t^\star) \notag \\
    &\leq (1 - w_t) \big(\bar g_{t-1}(\theta_{t-1}^\star) - \bar g_{t-1}(\theta_t^\star)\big)
    \!+\!w_t\big(g_t(\theta_{t-1}^\star) - g_t(\theta_t^\star)\big) \notag \\
    \label{eq:C_t}
    &\leq w_t (R + R') \Vert \theta_t^\star - \theta_{t-1}^\star \Vert_2,\:\text{and thus}\:
    C_t \leq w_t \frac{2 Q}{\rho}.
\end{align}
The second inequalities holds because $\theta_{t-1}^\star$ is a minimizer of $\bar g_{t-1}$ and $g_t$ is $Q$-Lipschitz, where $Q \triangleq R + R'$, using Lemma~\ref{lemma:basic}. Replacing~\eqref{eq:B_t} and~\eqref{eq:C_t} in~\eqref{eq:A_t_0} yields
\begin{equation}
    \label{eq:A_t}
    A_t^2 \leq \frac{6}{\rho} (D_t + D_{t-1}) + \frac{12 Q^2}{\rho}w_t^2,
\end{equation}
and we are left to show that $D_t \in \bigO(w_t^2)$ to conclude. For this, we decompose the inequality from~\ref{ass:det_geometric} into
\begin{align}
    &D_t \leq (1 - \mu)(
    \bar g_t(\theta_{t-1}) - \bar g_t(\theta_t^\star)
    ) \notag \\
    &= (1-\mu) \Big(
    w_t \big(g_t(\theta_{t-1}) - g_t(\theta_t)\big)
    + w_t \big(g_t(\theta_t) - g_t(\theta_t^\star)\big)
    \Big)
    \notag \\
    &\phantom{=}
    + (1-\mu) \Big((1 - w_t) \big(\bar g_{t-1}(\theta_{t-1}) - \bar g_{t-1}(\theta_{t-1}^\star)\big) \notag \\
    &\phantom{=+(1-\mu)\Big(}+ (1 - w_t) \big(\bar g_{t-1}(\theta_{t-1}^\star) - \bar g_{t-1}(\theta_t^\star)\big)
    \Big) \notag \\
    \label{eq:D_t}
    &\leq (1 - \mu) (w_t Q (A_t + B_t) + D_{t-1}),
\end{align}
where the second inequality holds for the same reasons as in~\eqref{eq:C_t}. Injecting~\eqref{eq:B_t} and~\eqref{eq:A_t} in~\eqref{eq:D_t}, we obtain
\begin{equation}
    \label{eq:D_t_simp}
    \tilde D_t \leq (1 - \mu) \tilde D_{t-1} \frac{w_{t-1}^2}{w_t^2}
    + u(\tilde D_t, \tilde D_{t-1}),
\end{equation}
where we define $\tilde D_t \triangleq \frac{D_t}{w_t^2}$. It is easy to show
(see algebraic details in Appendix~\ref{app:derivations}) that the
perturbation term $u(\tilde D_t, \tilde D_{t-1}) \in o(\tilde D_t + \tilde
D_{t-1})$ if $\tilde D_t \to \infty$. Using the determistictic result of
Lemma~\ref{lemma:bounded},
this ensures that
$\tilde D_t$ is bounded, which combined with~\eqref{eq:B_t} allows to conclude.

\subsubsection{Stochastic decrease rates}In the general case~\ref{ass:geometric}, the inequalities~\eqref{eq:B_t},~\eqref{eq:C_t} and~\eqref{eq:A_t} holds, and \eqref{eq:D_t_simp} is replaced by
\begin{equation}
    \label{eq:exp_D_t}
    \EE[\tilde D_t | \mathcal{F}_{t-\frac{1}{2}} ] \leq (1 - \mu) \tilde D_{t-1} \frac{w_{t-1}^2}{w_t^2}
    + u(\tilde D_t, \tilde D_{t-1}),
\end{equation}
Taking the expectation of this inequality and using Jensen inequality,
we show that~\eqref{eq:D_t} holds when replacing $\tilde D_t$ by $\EE[\tilde D_t]$. This shows that $\EE[D_t] \in \bigO(w_t^2)$ and thus $\EE[D_t] < \infty$. The result follows from Lemma~\ref{lemma:bounded}, that applies as $\mathcal{F}_{t-1} \subseteq \mathcal{F}_{t-\frac{1}{2}} \subseteq \mathcal{F}_{t}$.
\end{proof}

\subsection[SAMMM convergence]{Convergence of \samm~--- Proof of Proposition~\ref{prop:approximate-surrogate}}

We now proceed to prove the Proposition~\ref{prop:approximate-surrogate}, that
extends the stochastic majorization-minimization framework to allow approximations
in both majorization and minimizations steps.

\begin{proof}[Proof of Proposition~\ref{prop:approximate-surrogate}]
We adapt the proof of Proposition 3.3 from~\cite{mairal_stochastic_2013} (reproduced as Proposition~\ref{prop:param} in our work).
Relaxing tightness and majorizing hypotheseses introduces some extra error terms in
the derivations. Assumption~\ref{ass:epsilon_decay} allows to control these
extra terms without breaking convergence. The stability Lemma~\ref{lemma:stability} is important in steps 3 and 5.

\subsubsection[Convergence of aggregated surrogate]{Almost sure convergence of $(\bar g_t(\theta_t))$}

We control the positive expected variation of
${(g_t(\theta_t))}_t$ to show that it is a converging quasi-martingale. By
construction of $\bar g_t$ and properties of the surrogates $g_t  \in \mathcal{T}_{\rho, L}(f_t, \theta_{t-1}, \epsilon_t)$, where $\epsilon_t$ is
a non-negative sequence that meets~\ref{ass:epsilon_decay},
\begin{align}
    \label{eq:key_maj}
    &\bar g_t(\theta_t) - \bar g_{t-1}(\theta_{t-1}) \notag \\
    &= (\bar g_t(\theta_t) - \bar g_t(\theta_{t-1}))
    + w_t(g_t(\theta_{t-1}) - \bar g_{t-1}(\theta_{t-1})) \notag \\
    &\leq w_t(g_t(\theta_{t-1}) - \bar g_{t-1}(\theta_{t-1})) \notag \\
    &\leq w_t(g_t(\theta_{t-1}) - f_t(\theta_{t-1}))
    + w_t(f_t(\theta_{t-1}) - \bar f_{t-1}(\theta_{t-1})) \notag \\
    &\phantom{=} + w_t(\bar f_{t-1}(\theta_{t-1}) - \bar g_{t-1}(\theta_{t-1})) \notag \\
    &\leq w_t(f_t(\theta_{t-1}) - \bar f_{t-1}(\theta_{t-1})) + w_t (\bar \epsilon_{t-1} + \epsilon_t),
\end{align}
where the average error sequence ${(\bar \epsilon_t)_t}$ is defined recursively:
$\bar \epsilon_0 \triangleq \epsilon_0$ and $\bar \epsilon_t \triangleq ( 1 - w_t) \epsilon_{t-1} + w_t \epsilon_t$. The first inequality uses $\bar g_t(\theta_{t}) \leq \bar g_t(\theta_{t-1})$.
To obtain the forth inequality we observe $g_t(\theta_{t-1}) - f_t(\theta_{t-1})<\epsilon_t$ by definition of $\epsilon_t$ and $\bar f_t(\theta_{t-1}) - \bar g_t(\theta_{t-1})
 \leq \bar \epsilon_t$,
which can easily be shown by induction on $t$. Then, taking the conditional expectation
with respect to~$\mathcal{F}_{t-1}$,
\begin{align}
    \label{eq:positive_decomposition}
&\EE [\bar g_t(\theta_t) - \bar g_{t-1}(\theta_{t-1}) | \mathcal{F}_{t-1}] \notag \\
&\leq w_t \sup_{\theta \in \Theta} \vert f(\theta) - \bar f_{t-1}(\theta) \vert
 + w_t (\bar \epsilon_{t-1} + \EE [\epsilon_t | \mathcal{F}_{t-1}]).
\end{align}
We have used the fact that $\epsilon_{t-1}$ is
deterministic with respect to $\mathcal{F}_{t-1}$. To ensure convergence, we must
bound both terms in~\eqref{eq:positive_decomposition}: the first term is the same
as in the original proof with exact surrogate, while the second is the perturbative term
introduced by the approximation sequence ${(\epsilon_t)}_t$.
We use Lemma B.7 from~\cite{mairal_stochastic_2013}, issued from the theory of empirical processes:
$\EE[\sup_{\theta \in \Theta} \vert f(\theta) - \bar f_{t-1}(\theta) \vert]
= \mathcal{O}(w_t t^{1/2})$, and thus
\begin{equation}
    \label{eq:empirical_process}
    \sum
    _{t=1}^\infty w_t \EE[\sup_{\theta \in \Theta}
    \vert f(\theta) - \bar f_{t-1}(\theta) \vert]
    < C \sum _{t=1}^\infty t^{1/2}w_t^2 < \infty
\end{equation}
where $C$ is a constant, as $t^{1/2} w_t^2 = t^{1/2 - 2u}$ and $u > 3/4$ from~\ref{ass:weights_decay}.
Let us now focus on the second term of~\eqref{eq:positive_decomposition}.
Defining, for all $1 \leq i \leq t$, $w_i^t = w_i \prod_{j=i+1}^t (1 - w_j)$,
\begin{equation}
    \EE [\bar \epsilon_t] = \sum_{i=1}^t w_i^t \EE [\epsilon_t]
    \leq w_t \sum_{i=1}^t \EE[\epsilon_t].
\end{equation}
We set $\eta > 0$ so that $2(u-1) - \eta > -1$. Assumption~\ref{ass:epsilon_decay} ensures $\EE [\epsilon_t] \in
\mathcal{O}(t^{2(u-1) - \eta})$, which allows to bound the partial sum $\sum_{i=1}^t \EE [\epsilon_i] \in
\mathcal{O}(t^{2u - 1 - \eta})$. Therefore
\begin{align}
    \label{eq:epsilon_convergence}
        &w_t \EE [\bar \epsilon_{t-1} + \EE[\epsilon_{t} | \mathcal{F}_{t-1}]]
        = w_t \EE[\epsilon_{t-1}] + w_t \EE[\epsilon_t] \notag \\
        &\leq w_t^2 \Big(\sum_{i=1}^t \EE[\epsilon_t] \Big) + w_t \EE[\epsilon_{t}] \\
        &\leq A t^{2u - 2u - 1 - \eta} + B t^{2u - u - 2 - \eta} \leq C t^{- 1 - \eta}, \notag
\end{align}
where we use $u < 1$ on the third line and the definition of ${(w_t)}_t$ on the second line.
Thus $\sum_{t=1}^\infty w_t \EE [\bar \epsilon_{t-1} +\EE[\epsilon_{t} | \mathcal{F}_{t-1}]] < \infty$.
We use quasi-martingale theory to conclude, as in~\cite{mairal_stochastic_2013}. We define the variable $\delta_t$ to be
$1$ if $\EE[\bar g_t(\theta_t) - \bar g_{t-1}(\theta_{t-1}) | \mathcal{F}_{t-1}] \geq 0$, and~$0$ otherwise. As all terms of~\eqref{eq:positive_decomposition} are positive:
\begin{align}
    &\sum_{t=1}^\infty \EE[\delta_t(\bar g_t(\theta_t) -\bar g_{t-1}(\theta_{t-1}))] \notag \\
    &= \sum_{t=1}^\infty \EE[\delta_t \EE[\bar g_t(\theta_t)
    - \bar g_{t-1}(\theta_{t-1}) | \mathcal{F}_{t-1}]] \\
    &\leq \sum_{t=1}^\infty w_t \EE[\sup_{\theta \in \Theta}
    \vert f(\theta) - \bar f_{t-1}(\theta) \vert + \bar \epsilon_{t-1} + \EE[\epsilon_t | \mathcal{F}_{t-1}]
    \vert] < \infty \notag .
\end{align}
As $\bar g_t$ are bounded from below ($\bar f_t$ is bounded from~\ref{ass:f_lip} and we easily show that $\bar \epsilon_t$ is bounded), we can apply Theorem A.1 from~\cite{mairal_stochastic_2013}, that is a quasi-martingale convergence
theorem originally found in~\cite{metivier_semimartingales:_1982}. It ensures that ${(g_t(\theta_t))}_{t \geq
1}$ converges almost surely to an integrable random variable $g^\star$, and that
$\sum_{t=1}^\infty \EE[ | \EE[\bar g_t(\theta_t) - \bar g_{t-1}(\theta_{t-1}) |
\mathcal{F}_{t-1} ] | ] < \infty$ almost surely.

\subsubsection[Convergence of aggregated loss]{Almost sure convergence of $\bar f(\theta_t)$}

We rewrite the second inequality of~\eqref{eq:key_maj}, adding $\bar \epsilon_t$ on both sides:
\begin{align}
    \label{eq:key_maj_rev}
    0 &\leq w_t\big(\bar g_{t-1}(\theta_{t-1}) - \bar f_{t-1}(\theta_{t-1})+ \bar \epsilon_{t-1} \big) \notag \\
    &\leq w_t\big(g_t(\theta_{t-1})-f_t(\theta_{t-1})\big) + w_t\big(f_t(\theta_{t-1}) - \bar f_{t-1}(\theta_{t-1})\big) \notag \\
    &\phantom{=}
    + \big(\bar g_{t-1}(\theta_{t-1}) - \bar g_t(\theta_t)\big) + w_t \bar \epsilon_{t-1} \notag \\
    &\leq w_t\big(f_t(\theta_{t-1}) - \bar f_{t-1}(\theta_{t-1})\big)
    + \big(\bar g_{t-1}(\theta_{t-1}) - \bar g_t(\theta_t)\big) \notag \\
     &\phantom{=}+w_t(\epsilon_t + \bar \epsilon_{t-1}),
\end{align}
where the left side bound has been obtained in the last paragraph by induction and the right side
bound arises from the definition of $\epsilon_t$.
Taking the expectation of~\eqref{eq:key_maj_rev} conditioned on $\mathcal{F}_{t-1}$,
almost surely,
\begin{align}
    \label{eq:key_maj_exp}
    0
    &\leq
    w_t (f(\theta_{t-1})
    - \bar f_{t-1} (\theta_{t-1})) \\
    &\phantom{=} - \EE [\bar g_t(\theta_t)
    - \bar g_{t-1}(\theta_{t-1}) | \mathcal{F}_{t-1}]
    + w_t (\bar \epsilon_{t-1}
    + \EE [\epsilon_t | \mathcal{F}_{t-1}]), \notag
\end{align}
We separately study the three terms of the previous upper bound. The first two terms can undergo the same analysis as in~\cite{mairal_stochastic_2013}.
First, almost sure convergence of $\sum_{t=1}^\infty \EE\big[ |
\EE[\bar g_t(\theta_t) - \bar g_{t-1}(\theta_{t-1}) | \mathcal{F}_{t-1} ] | \big]$
implies that $\EE\big[\bar g_t(\theta_t) - \bar g_{t-1} (\theta_{t-1})|
\mathcal{F}_{t-1}\big]$ is the summand of an almost surely converging sum.
Second, $w_t
\big(f(\theta_{t-1}) - \bar f_{t-1}(\theta_{t-1})\big)$ is the summand of an absolutely
converging sum with probability one, less it would contradict~\eqref{eq:empirical_process}.
To bound the third term, we have once more to control the perturbation introduced
by ${(\epsilon_t)}_t$. We have $\sum_{t=1}^\infty w_t \bar
\epsilon_{t-1} + w_t \EE[\epsilon_t | \mathcal{F}_{t-1}] <
\infty$ almost surely, otherwise Fubini's theorem would invalidate~\eqref{eq:epsilon_convergence}.

As the three terms are the summand of absolutely converging sums, the positive
term $w_t(\bar g_{t-1}(\theta_{t-1}) - \bar f_{t-1}(\theta_{t-1}) + \bar
\epsilon_{t-1})$ is the summand of an almost surely convergent sum. This is not
enough to prove that $\bar h_t(\theta_t) \triangleq \bar g_t(\theta_t) - \bar
f_t(\theta_t) \to_\infty 0$, hence we follow~\cite{mairal_stochastic_2013} and
make use of its Lemma A.6. We define $X_t \triangleq \bar h_{t-1}(\theta_{t-1}) + \bar
\epsilon_{t-1}$. As~\ref{ass:epsilon_decay} holds, we use
Lemma~\ref{lemma:stability}, which ensures that ${(\bar h_t)}_{t\geq 1}$ are uniformly
$R'$-Lipschitz and $\Vert \theta_t-\theta_{t-1} \Vert_2 = \mathcal{O}(w_t)$.
Hence,
\begin{align}
    &\vert X_{t+1} - X_t \vert
    \leq \vert \bar h_t(\theta_t) - \bar h_{t-1}(\theta_{t-1}) \vert
    + \vert \bar \epsilon_{t} - \bar \epsilon_{t-1} \vert \notag \\
    &\leq R' \Vert \theta_t - \theta_{t-1} \Vert_2 + \vert \bar \epsilon_{t} - \bar \epsilon_{t-1} \vert,\quad\text{as $\bar h_t$ is $R'$-Lipschitz} \notag\\
    &\leq \mathcal{O}(w_t) + \vert \bar \epsilon_{t} - \bar \epsilon_{t-1} \vert, \quad\text{as }\Vert \theta_t -\theta_{t-1} \Vert_2 = \bigO(w_t)
\end{align}
From assumption~\ref{ass:epsilon_decay}, $(\epsilon_t)_t$ and $(\bar \epsilon_t)_t$ are bounded. Therefore
$\vert \bar \epsilon_{t} - \bar \epsilon_{t-1} \vert
\leq w_t (\vert \epsilon_t \vert + \vert \bar \epsilon_{t-1} \vert)
\in \mathcal{O}(w_t)$ and hence
\begin{equation}
\vert X_{t+1} - X_t \vert \leq \mathcal{O}(w_t).
\end{equation}

Lemma A.6 from~\cite{mairal_stochastic_2013} then ensures
that $X_t$ converges to zero with probability one. Assumption~\ref{ass:epsilon_decay} ensures that $\epsilon_t \to_\infty~0$ almost surely, from which we can easily deduce $\bar \epsilon_t \to_\infty 0$ almost surely. Therefore $\bar h_t(\theta_t) \to 0$ with probability one and ${(\bar
f_t(\theta_t))}_{t\geq 1}$ converges almost surely to $g^\star$.

\subsubsection[Convergence of empirical loss]{Almost sure convergence of $\bar f(\theta_t)$}Lemma B.7 of~\cite{mairal_stochastic_2013}, based on empirical process theory~\cite{van_der_vaart_asymptotic_2000}, ensures that $\bar f_t$ uniformly converges to
$\bar f$. Therefore, ${(\bar f(\theta_t))}_{t\geq 1}$ converges almost surely to $g^\star$.

\subsubsection{Asymptotic stationary point condition}

Preliminary to the final result, we establish the asymptotic stationary point condition~\eqref{eq:asymptotic_stat} as in~\cite{mairal_stochastic_2013}.
This requires to adapt the
original proof to take into account the errors
in surrogate computation and minimization.
We set $\alpha > 0$. By definition, $\nabla \bar h_t$ is $L$-Lipschitz over $\RR^K$.
Following the same computation as in~\eqref{eq:nabla_h_t_bounded}, we obtain, for all $\alpha > 0$,
\begin{equation}
    \label{eq:ineq_bar_ht}
    \Vert \nabla \bar h_t(\theta_t) \Vert_2 \leq \frac{2}{\alpha} \bar \epsilon_t + \frac{L \alpha}{2},
\end{equation}
where we use $ \vert \bar h_t(\theta) \vert \leq \bar \epsilon_t$ for all $\theta \in \RR^K$.
As $\bar \epsilon_t \to 0$ and the inequality~\eqref{eq:ineq_bar_ht} is true for all $\alpha$,
$\Vert \nabla \bar h_t(\theta_t) \Vert_2 \to_\infty 0$ almost surely. From the strong
convexity of $\bar g_t$ and Lemma~\ref{lemma:stability}, $\Vert \theta_t - \theta_t^\star \Vert_2$ converges to zero, which ensures
\begin{equation}
    \label{eq:asymptotic_stat}
    \Vert \nabla \bar h_t(\theta_t^\star) \Vert_2
    \leq \Vert \bar \nabla h_t(\theta_t) \Vert_2
    + L \Vert \theta_t - \theta_t^\star \Vert_2 \to_\infty 0.
\end{equation}
\subsubsection{Parametrized surrogates}

We use assumption~\ref{ass:param} to finally prove the property, adapting the proof of
Proposition 3.4 in~\cite{mairal_stochastic_2013}. We first recall the derivations of~\cite{mairal_stochastic_2013} for obtaining~\eqref{eq:infty_der}
We define $(\bkappa_t)_t$ such that $\bar g_t = g_{\bkappa_t}$ for all $t > 0$.
We assume that $\theta_\infty$ is a limit point of ${(\theta_t)}_t$. As
$\Theta$ is compact, there exists an increasing sequence $(t_k)_k$ such that
$(\theta_{t_k})_k$ converges toward $\theta_\infty$. As $\mathcal{K}$ is
compact, a converging subsequence of $(\bkappa_{t_k})_k$ can be extracted, that
converges towards $\bkappa_\infty \in \mathcal{K}$. From the sake of
simplicity, we drop subindices and assume without loss of generality that
$\theta_t \to \theta_\infty$ and $\bkappa_t \to \bkappa_\infty$. From the
compact parametrization assumption, we easily show that ${(\bar
g_{\bkappa_t})}_t$ uniformly converges towards $\bar g_\infty \triangleq \bar
g_{\bkappa_\infty}$. Then, defining $\bar h_\infty = \bar g_\infty - \bar f$,
for all $\theta \in \Theta$,
\begin{equation}
    \label{eq:infty_der}
    \nabla \bar f(\theta_\infty,\theta - \theta_\infty)
    = \nabla \bar g_\infty(\theta_\infty, \theta - \theta_\infty)
    - \nabla \bar h_\infty(\theta_\infty, \theta - \theta_\infty)
\end{equation}
We first show that $\nabla \bar f(\theta_\infty,\theta - \theta_\infty) \geq 0$ for all $\theta \in \Theta$. We consider the sequence ${(\theta_t^\star)}_t$. From Lemma~\ref{lemma:stability},
$\Vert \theta_t - \theta_t^\star \Vert_2 \to 0$, which implies $\theta_t^\star \to \theta_\infty$.
$\bar g_t$ converges uniformly towards $\bar g_\infty$, which implies ${(\bar g_t(\theta_t^\star))}_t
\to \bar g_\infty(\theta_\infty)$. Furthermore, as $\theta_t^\star$ minimizes $\bar g_t$, for all $t > 0$ and $\theta \in \Theta$, $\bar g_t(\theta_t^\star) \leq \bar g_t(\theta)$. This implies
$\bar g_\infty(\theta_\infty) \leq \inf_{\theta \in \Theta} \bar g_\infty(\theta)$
by taking the limit for $t \to \infty$. Therefore $\theta_\infty$ is the minimizer
of $\bar g_\infty$ and thus $\nabla \bar g_\infty(\theta_\infty,  \theta - \theta_\infty)
\geq 0$.

Adapting~\cite{mairal_stochastic_2013}, we perform the first-order expansion of $\bar h_t$ around $\theta_t^\star$ (instead of $\theta_t$ in the original proof)
and show that $\nabla \bar h_\infty(\theta_\infty, \theta - \theta_\infty) =
0$, as $\bar h_t$ differentiable, $\Vert \nabla \bar
h_t(\theta_t^\star) \Vert_2 \to 0$ and $\theta_t^\star \to \theta_\infty$. This is sufficient to conclude.
\end{proof}

\subsection{Convergence of \somf~--- Proof of Proposition~\ref{prop:somf}}
\begin{proof}[Proof of Proposition~\ref{prop:somf}]From assumption~\ref{ass:f_lip}, ${(x_t)}_t$ is
$\ell_2$-bounded by a constant $X$. With assumption \ref{ass:D_cond},
it implies that ${(\balpha_t)}_t$ is $\ell_2$-bounded by a
constant $A$. This is enough to show that $(g_t)_t$ and $(\theta_t)_t$ meet basic
assumptions \ref{ass:compact}--\ref{ass:param}. Assumption~\ref{ass:weights_decay} immediately implies~\ref{ass:gamma_decay}. It remains to show that $(g_t)_t$ and
$(\theta_t)_t$ meet the assumptions \ref{ass:epsilon_decay} and
\ref{ass:geometric}. This will allow to cast \somf as an instance of \samm and conclude.

\subsubsection{The computation of $\D_t$ verifies \ref{ass:geometric}}

We define $\D_t^\star = \argmin_{\D \in \mathcal{C}} \bar g_t(\D)$. We show that
performing subsampled block coordinate descent on $\bar g_t$ is sufficient to meet assumption~\ref{ass:geometric}, where $\theta_t = \D_t$.
We separately analyse the exceptional case where no subsampling is done and the general case.

First, with small but non-zero
probability,~$\M_t = \I_p$ and Alg.~\ref{alg:somf-dictionary} performs a
single pass of simple block coordinate descent on $\bar g_t$. In this case, as
$\bar g_t$ is strongly convex
from~\ref{ass:D_cond},~\cite{beck_convergence_2013, wright_coordinate_2015}
ensures that the sub-optimality decreases at least of factor $1 - \mu$ with a single pass of block coordinate descent, where
$\mu > 0$ is a constant independent of~$t$. We provide an explicit $\mu$ in Appendix~\ref{app:derivations}.

In the general case, the function value decreases deterministically at each
minimization step: $\bar g_t(\D_t) \leq \bar g_t(\D_{t-1})$.  As a consequence,
$\EE[\bar g_t(\D_t) | \mathcal{F}_{t-\frac{1}{2}}, \M_t \neq \I_p] \leq \bar
g_t(\D_{t-1})$. Furthermore, $\bar g_t$ and hence $\bar
g_t(\D_t^\star)$ are deterministic with respect to $\mathcal{F}_{t-\frac{1}{2}}$, which implies
$\EE[\bar g_t(\D_t^\star) | \mathcal{F}_{t-\frac{1}{2}}, \M_t \neq \I_p] = \bar
    g_t(\D_t^\star)$. Defining $d \triangleq \PP[\M_t = \I_p]$, we split the sub-optimality expectation and combine the analysis of both cases:
\begin{align}
    &\EE[\bar g_t(\D_t) - \bar g_t(\D_t^\star) | \mathcal{F}_{t-\frac{1}{2}}] \notag \\
    &= d \EE[\bar g_t(\D_t) - \bar g_t(\D_t^\star) | \mathcal{F}_{t-\frac{1}{2}}, \M_t = \I_p] \notag \\
    &\phantom{=}+ (1 - d) \EE[\bar g_t(\D_t) - \bar g_t(\D_t^\star) | \mathcal{F}_{t-\frac{1}{2}}, \M_t \neq \I_p] \notag \\
    &\leq \big(d (1 - \mu) + (1 -d)\big)(\bar g_t(\D_{t-1}) - \bar g_t(\D_t^\star))\notag \\
    &= \big(1 - d \mu \big) (\bar g_t(\D_{t-1}) - \bar g_t(\D_t^\star)).
\end{align}%
\subsubsection[Approximate surrogates]{The surrogates ${(g_t)}_t$ verify \ref{ass:epsilon_decay}} We define $g_t^\star \in \mathcal{S}_{\rho, L}(f_t, \D_{t-1})$ the surrogate used in \omf
at iteration $t$, which depends on the \textit{exact} computation of $\balpha_t^\star$, while the surrogate $g_t$ used in \somf relies on approximated $\balpha_t$. Formally,
using the loss function
$\ell(\balpha, \G, \bbeta) \triangleq
    \frac{1}{2} \balpha^\top \G \balpha - \balpha^\top \bbeta + \lambda \Omega (\balpha)$, we recall the definitions
\begin{gather}
    \balpha_t^\star {\triangleq} \argmin_{\balpha \in \RR^k} \ell(\balpha, \G_t^\star, \bbeta_t^\star),\,
    \balpha_t {\triangleq} \argmin_{\balpha \in \RR^k} \ell(\balpha, \G_t, \bbeta_t), \\
    g_t^\star(\D) \triangleq \ell(\balpha_t^\star, \D^\top \D, \D^\top \x_t),\:
    g_t(\D) \triangleq \ell(\balpha_t, \D^\top \D, \D^\top \x_t). \notag
\end{gather}
The matrices $\G_t^\star$, $\bbeta_t^\star$ are defined in~\eqref{eq:regression} and $\G_t$, $\bbeta_t$ in either the update rules~\eqref{eq:agg-estimates} or \eqref{eq:gram-estimates}.
We define $\epsilon_t \triangleq \Vert g_t^\star - g_t \Vert_\infty$ to be the $\ell_\infty$ difference between
the approximate surrogate of \somf and the exact surrogate of \omf, as illustrated in Figure~\ref{fig:samm}. By definition, $g_t \in \mathcal{T}_{\rho, L}(f_t, \theta_{t-1}, \epsilon_t)$. We first show that $\epsilon_t$ can be bounded by the Froebenius distance between the approximate parameters $\G_t$,~$\bbeta_t$
and the exact parameters $\G_t^\star, \bbeta_t^\star$. Using Cauchy-Schwartz inequality, we first show that there exists a constant $C' > 0$ such that
for all $\D \in \mathcal{C}$,
\begin{equation}
    \label{eq:norm_gt}
    \vert g_t(\D) - g_t^\star(\D) \vert \leq C'
     \Vert \balpha_t - \balpha_t^* \Vert_2.
\end{equation}

Then, we show that the distance ${\Vert \balpha_t - \balpha_t^* \Vert}_2$ can itself be bounded: there exists $C'' > 0$ constant such that
\begin{equation}
    \label{eq:norm_balpha}
    {\Vert \balpha_t - \balpha_t^\star \Vert}_2
     \leq C'' ({\Vert \G_t^\star - \G_t \Vert}_F
     + {\Vert \bbeta_t^\star - \bbeta_t \Vert}_2).
\end{equation}
We combine both equations and take the supremum over $\D \in~\mathcal{C}$, yielding
\begin{equation}
    \label{eq:tilde_gt}
    \epsilon_t \leq C
    ({\Vert \G_t^\star - \G_t \Vert}_F
    + {\Vert \bbeta_t^\star - \bbeta_t \Vert}_2),
\end{equation}
where $C$ is constant. Detailed derivation of \eqref{eq:norm_gt} to \eqref{eq:tilde_gt} relies on assumption~\ref{ass:D_cond} and are reported in Appendix~\ref{app:derivations}.

In a second step, we show that $\Vert \G_t^\star - \G_t \Vert_F$ and $\Vert
 \bbeta_t^\star - \bbeta_t \Vert_2$ vanish almost surely, sufficiently fast.
 We focus on bounding $\Vert \bbeta_t - \bbeta_t^\star \Vert_2$ and proceed similarly
 for $\Vert \G_t - \G_t^\star \Vert_2$ when the update rules~\eqref{eq:agg-estimates} are used.
 For $t > 0$, we write $i \triangleq i_t$. Then
 \begin{equation*}
     \bbeta_t \triangleq \bbeta_t^{(i)}
     = \sum_{s \leq t, \x_s = \x^{(i)}} \gamma_{s,t}^{(i)} \D_{s-1}^\top \M_s \x^{(i)},
 \end{equation*}
where $\gamma_{s,t}^{(i)} = \gamma_{c^{(i)}_t} \prod_{s < t,
 \x_s = \x^{(i)}} (1 - \gamma_{c^{(i)}_s})$ and $c^{(i)}_t = \left|
 \left\lbrace s \leq t, \x_s = \x^{(i)} \right\rbrace \right|$. We can
 then decompose $\bbeta_t - \bbeta_t^\star$ as
 \begin{align}
     \label{eq:beta_maj}
     \bbeta_t - \bbeta_t^\star &= \sum_{s \leq t, \x_s = \x_t = \x^{(i)}}
      \gamma_{s,t}^{(i)} (\D_{s-1} - \D_{t-1})^\top \M_s \x^{(i)} \notag \\
     &\phantom{=}
     + \D_{t-1}^\top \Big( \sum_{s \leq t, \x_s = \x^{i)}} \gamma_{s,t}^{(i)} \M_s - \I\Big) \x^{(i)}.
\end{align}

The latter equation is composed of two terms: the first one captures the
approximation made by using old dictionaries in the computation of ${(\bbeta_t)}_t$, while the second
captures how the masking effect is averaged out as the number of epochs increases. Assumption~\ref{ass:gamma_decay}
allows to bound both terms at the same time. Setting $\eta \triangleq \frac{1}{2} \min\big(v - \frac{3}{4}, (3u - 2) - v\big) > 0$, a tedious but elementary derivation indeed shows $\EE[\Vert\bbeta_t - \bbeta_t^\star \Vert_2] \in \mathcal{O}(t^{2(u-1) - \eta})$ and $\epsilon_t \to 0$ almost surely --- see Appendix~\ref{app:derivations}. The \somf algorithm therefore meets assumption~\ref{ass:epsilon_decay}
and is a convergent \samm algorithm. Proposition~\ref{prop:somf} follows.%
\end{proof}
%
%
%
%
%
%
%


\bibliographystyle{assets/IEEEtran}
\scriptsize
\bibliography{extracted}
\normalsize

\vspace*{-2\baselineskip}
\begin{IEEEbiography}%
    [{\includegraphics[width=1in,height=1.25in,clip,keepaspectratio]{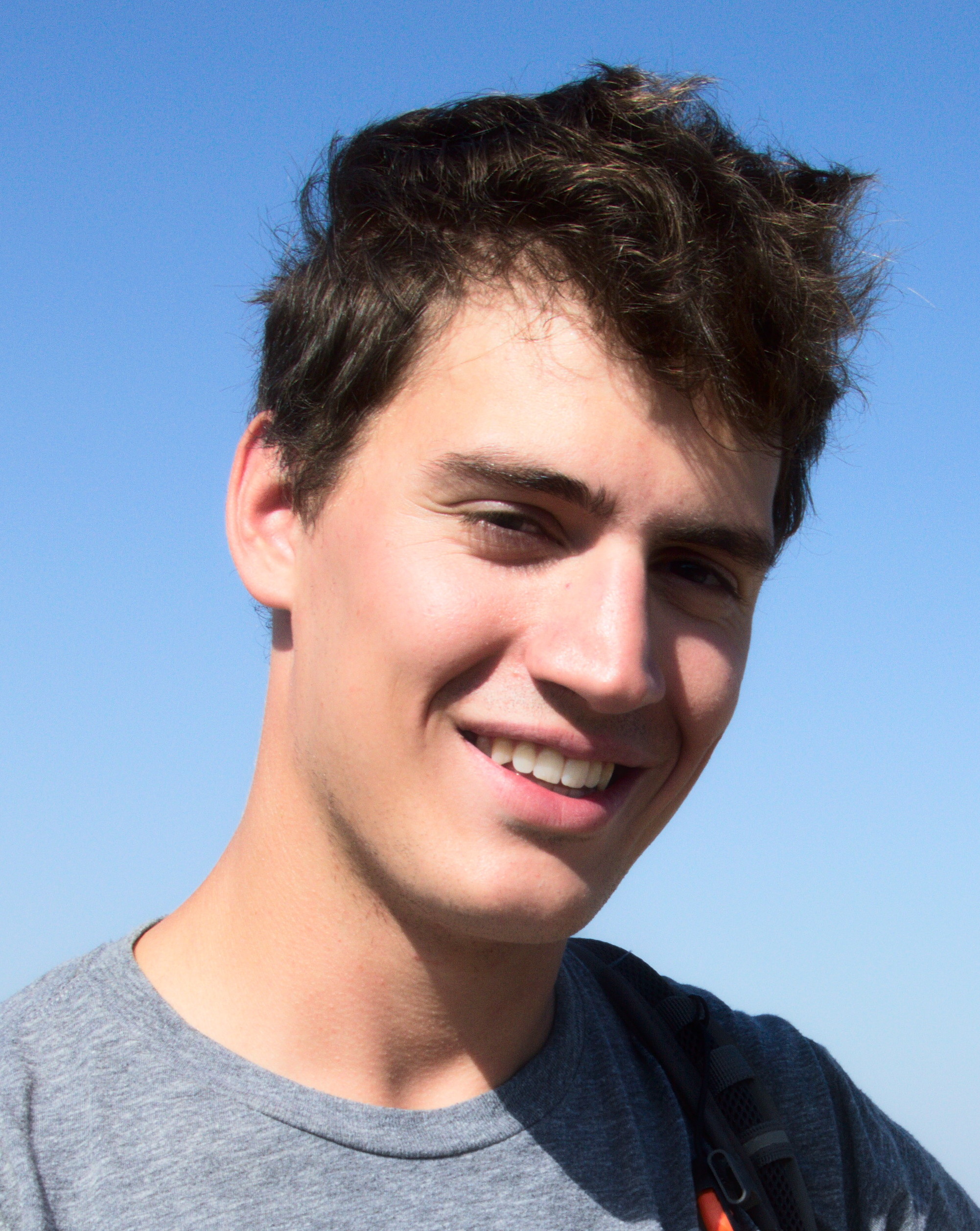}}]%
    {Arthur Mensch}is a PhD candidate at Universit\'e Paris-Saclay and Inria. His main research interests are related to large-scale stochastic optimization and statistical learning, with specific applications to functional neuroimaging and cognitive brain mapping. In 2015, he received a graduate degree from Ecole
    Polytechnique, France, and a MSc degree in applied mathematics from \'Ecole Normale
    Sup\'erieure de Cachan, France.
\end{IEEEbiography}
\vspace*{-2\baselineskip}
\begin{IEEEbiography}%
    [{\includegraphics[width=1in,height=1.25in,clip,keepaspectratio]{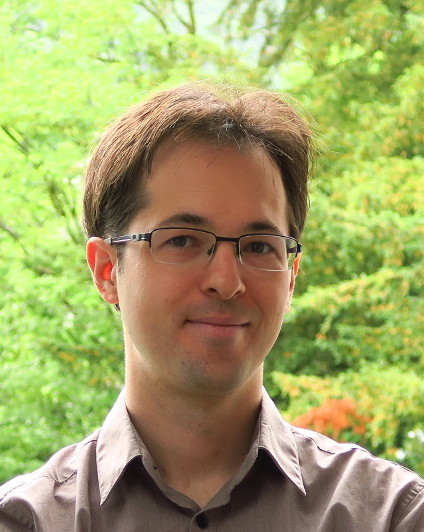}}]%
    {Julien Mairal} is a research scientist at
Inria. He received a graduate degree from Ecole
Polytechnique, France, in 2005, and a PhD degree
from Ecole Normale Supérieure, Cachan, France, in
2010. Then, he was a postdoctoral researcher at the
statistics department of UC Berkeley, before joining
Inria in 2012. His research interests include machine
learning, computer vision, mathematical optimiza-
tion, and statistical image and signal processing.
In 2016, he
received a Starting Grant from the European Research Council (ERC).
\end{IEEEbiography}
\vspace*{-2\baselineskip}
\begin{IEEEbiography}
[{\includegraphics[width=1in,height=1.25in,clip,keepaspectratio]{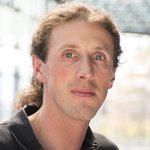}}]
{Ga\"el Varoquaux} is a tenured computer-science researcher at Inria. His
research develops statistical-learning tools for functional neuroimaging
data with application to cognitive mapping of the brain as well as the
study of brain pathologies. He is also heavily invested in
software development for data science, as project-lead for scikit-learn,
one of the reference machine-learning toolboxes, and on joblib, Mayavi,
and nilearn. Varoquaux has a PhD in quantum physics and is a graduate
from Ecole Normale Superieure, Paris.
\end{IEEEbiography}
\vspace*{-2\baselineskip}
\begin{IEEEbiography}
[{\includegraphics[width=1in,height=1.25in,clip,keepaspectratio]{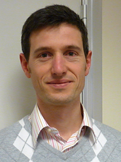}}]
{Bertrand Thirion} is the principal investigator of the Parietal team
(Inria-CEA) within the main French Neuroimaging center, Neurospin.
His main research interests are related to the use of machine learning and
statistical analysis techniques for neuroimaging, e.g. the modeling of brain variability in group studies, the mathematical study of functional
connectivity and brain activity decoding; he addresses various applications
such as the study of vision through neuroimaging and the classification of brain
images for diagnosis or brain mapping

\end{IEEEbiography}

\vfill
\newpage
\clearpage

\onecolumn


\section{Algebraic details}\label{app:derivations}

\subsection{Proof of Lemma~\ref{lemma:bounded}}
\begin{proof}We first focus on the deterministic case. Assume that ${(x_t)}_{t}$ is not bounded. Then there exists a subsequence
    of ${(x_t)}_{t}$ that diverges towards $+\infty$. We assume without loss of generality that ${(x_t)}_{t} \to \infty$.
    Then, $x_t + x_{t-1} \to \infty$ and for all $\epsilon > 0$, using the asymptotic bounds on $u$, there exists $t_1 \geq t_0$ such that
    \begin{align}
        \forall t \geq t_1,\, x_t &\leq \alpha x_{t-1} + \epsilon( x_t + x_{t-1} ) \notag \\\text{and therefore}\quad
         x_t &\leq \frac{\alpha + \epsilon}{1 - \epsilon} x_{t-1}.
    \end{align}
    Setting $\epsilon$ small enough, we obtain that $x_t$ is bounded by a geometrically decreasing sequence after $t_1$, and converges to $0$, which contradicts our hypothesis.
    This is enough to conclude.

    In the random case, we consider a realization of ${(X_t)}_t$ that is not bounded, and assumes without loss of generality that it diverges to $+\infty$.
    Following the reasoning above, there exists $\beta < 1$, $t_1 > 0$, such that for all $t > t_1$, $\EE[X_t | \mathcal{F}_{t'}] \leq \beta X_{t-1}$,
    where $\mathcal{F}_{t-1} \subseteq \mathcal{F}_{t'} \subseteq \mathcal{F}_{t}$.
    Taking the expectation conditioned on $\mathcal{F}_{t-1}$, $\EE[X_t | \mathcal{F}_{t-1}] \leq \beta X_{t-1}$, as $X_{t-1}$ is deterministic
    conditioned on $\mathcal{F}_{t-1}$. Therefore
    $X_t$ is a supermartingale beyond a certain time. As $\EE[X_t] < \infty$, Doob's forward convergence lemma on discrete martingales~\cite{doob1990stochastic} ensures that ${(X_t)}_t$ converges almost surely. Therefore
    the event $\{{(X_t)}_t\:\text{is not bounded}\}$ cannot happen on a set with non-zero probability, less it would lead to a contradiction. The lemma follows.%
\end{proof}%
\subsection{Taylor's inequality for $L$-Lipschitz continuous functions}

This inequality is useful in the demonstration of Lemma~\ref{lemma:basic} and Proposition~\ref{prop:approximate-surrogate}. Let $f : \Theta \subset \RR^K \to \RR$ be a function
with $L$-Lipschitz gradient. That is, for all $\theta, \theta' \in \Theta, {\Vert\nabla f(\theta) - \nabla f(\theta') \Vert}_2 \leq L {\Vert \theta - \theta' \Vert}_2$. Then, for all $\theta, \theta' \in \Theta$,
\begin{equation}
    f(\theta') \leq f(\theta) + \nabla f(\theta)^\top (\theta' - \theta) + \frac{L}{2} {\Vert \theta - \theta' \Vert}_2^2.
\end{equation}
\subsection{Lemma~\ref{lemma:stability}:
Detailed control of $D_t$ in~\eqref{eq:D_t_simp}}

Injecting~\eqref{eq:B_t} and~\eqref{eq:A_t} in~\eqref{eq:D_t}, we obtain
\begin{equation}
    \tilde D_t \leq (1 - \mu) \tilde D_{t-1} \frac{w_{t-1}^2}{w_t^2}
    + u(\tilde D_t, \tilde D_{t-1}),\quad\text{where}\quad
    u(\tilde D_t, \tilde D_{t-1}) \triangleq (1 - \mu) \tilde Q
    \bigg (\sqrt{3(\tilde D_t + \tilde D_{t-1} \frac{w_{t-1}^2}{w_t^2}
    ) + \tilde Q
    } + \sqrt{\tilde D_t}\,\bigg).
\end{equation}
From assumption~\ref{ass:weights_decay}, $\frac{w^2_{t-1}}{w^2_t} \to 1$, and we have, from elementary comparisons, that $u(\tilde D_t, \tilde D_{t-1}) \in o(\tilde D_t + \tilde D_{t-1})$ if ${D_t \to \infty}$. Using the determistictic result of
Lemma~\ref{lemma:bounded},
this ensures that
$\tilde D_t$ is bounded.

\subsection{Detailed derivations in the proof of Proposition~\ref{prop:somf}}

Let us first exhibit a scaler $\mu > 0$ independent of $t$, for which \ref{ass:geometric} is met

\subsubsection{Geometric rate for single pass subsampled block coordinate descent}
. For $\D^{(j)} \in \RR^{p \times k}$ any matrix with non-zero $j$-th column $\d^{(j)}$ and zero elsewhere
\begin{equation}
    \label{eq:expected_full_decrease}
    \nabla \bar g_t(\D + \D^{(j)}) - \nabla \bar g_t(\D + \D^{(j)}) = \bar \C_t[j , j] \d^{(j)}
\end{equation}
and hence $\bar g_t$ gradient has \textit{component Lipschitz constant} $L_j = \bar \C_t[j , j]$ for component $j$,
as already noted in \cite{mairal_online_2010}.
Using~\cite{wright_coordinate_2015} terminology, $\nabla \bar g_t$ has \textit{coordinate Lipschitz constant}
 $L_{\mathrm{max}} \triangleq \max_{0 \leq j < k} \bar \C_t[j, j] \leq \max_{t > 0, 0 \leq j < k} \balpha_t[j]^2 \leq A^2$,
 as $(\balpha_t)_t$ is bounded from \ref{ass:D_cond}. As a consequence, $\bar g_t$ gradient is also $L$-Lipschitz continuous,
 where \cite{wright_coordinate_2015} note that $L \leq \sqrt{k} L_{\mathrm{max}}$.
Moreover, $\bar g_t$ is strongly convex with strong convexity modulus $\rho > 0$ by hypothesis \ref{ass:D_cond}.
Then, \cite{beck_convergence_2013} ensures that after one cycle over the $k$ blocks
\begin{align}
    \EE[\bar g_t(\D_t) - \bar g_t(\D_t^\star) | \mathcal{F}_{t-1}, \M_t = \I_p]
    &\leq \big(1 - \frac{\rho}{2 L_{\mathrm{max}}(1 + k L^2 / L_{\mathrm{max}}^2)} \big) (\bar g_t(\D_{t-1}) - \bar g_t(\D_t^\star)) \notag \\
    &\leq \big(1 - \mu \big) (\bar g_t(\D_{t-1}) - \bar g_t(\D_t^\star))\quad\text{where}\quad\mu \triangleq \frac{\rho}{2 A^2(1 + k^2)}
\end{align}
\subsubsection{Controling $\epsilon_t$ from $(\G_t, \bbeta_t), (\G_t^\star, \bbeta_t^\star)$ --- Equations~\ref{eq:norm_gt}--\ref{eq:norm_balpha}}

We detail the derivations that are required to show that~\ref{ass:epsilon_decay} is met in the proof of \somf convergence. We first show that $(\balpha_t)_t$ is bounded. We choose $D > 0$ such that ${\Vert \d^{(j)} \Vert}_2 \leq D$ for all $j \in [k]$ and $\D \in \mathcal{C}$, and $X$ such that ${\Vert \x \Vert}_2 \leq X$ for all $\x \in \mathcal{X}$. From assumption~\ref{ass:D_cond}, using the second-order growth condition, for all $t > 0$,\pagebreak
\begin{align}
    \frac{\rho}{2} {\Vert \balpha_t - 0 \Vert}_2^2
    &\leq \lambda \Omega(0) -
    (\frac{1}{2} \balpha_t^\top \G_t \balpha_t - \balpha_t^\top \bbeta_t + \lambda \Omega(\balpha_t) \notag \\
    \frac{\rho}{2} {\Vert \balpha_t \Vert}_2^2 + \frac{1}{2} \balpha_t^\top  \G_t \balpha_t
    &\leq 0
    + {\Vert \balpha_t \Vert}_2 {\Vert \bbeta_t \Vert}_2,\quad\text{hence} \notag \\
    \rho {\Vert \balpha_t \Vert}_2^2 &\leq \sqrt{k} r D X {\Vert \balpha_t \Vert}_2,\quad\text{and therefore}\quad
     {\Vert \balpha_t \Vert}_2 \leq \frac{\sqrt{k} r D X}{\rho} \triangleq A.
\end{align}
We have successively used the fact that $\Omega(0) = 0$, $\Omega(\balpha_t) \geq 0$, and ${\Vert \bbeta_t \Vert}_2 \leq \sqrt{k} r D X$, which can be shown by a simple induction on the number of epochs. For all $t > 0$, from the definition of $\balpha_t$ and $\balpha_t^\star$, for all $\D \in \mathcal{C}$:
\begin{align}
    \vert g_t(\D) - g_t^\star(\D) \vert &=
    \Big\vert \frac{1}{2}\trace \D^\top \D (\balpha_t \balpha_t^\top - \balpha_t^\star {\balpha_t^\star}^\top)
    - (\balpha_t - \balpha_t^\star)^\top \D^\top \x_t
    \Big\vert \notag \\
    &\leq \frac{1}{2} {\Vert \D^\top \D \Vert}_F
    {\Vert \balpha_t \balpha_t^\top - \balpha_t^\star {\balpha_t^\star}^\top \Vert}_F
    + {\Vert \D \Vert}_F {\Vert \x_t \Vert}_2 {\Vert \balpha_t - \balpha_t^\star \Vert}_2 \notag \\
    &\leq (k D^2 A + \sqrt{k} D X) {\Vert \balpha_t - \balpha_t^\star \Vert}_2,
\end{align}
where we use Cauchy-Schwartz inequality and elementary bounds on the Froebenius norm for the first inequality, and use \linebreak $\balpha_t, \balpha_t^\star \leq A$, $\x_t \leq X$ for all $t > 0$ and $\d^{(j)} \leq D$ for all $j \in [k]$ to obtain the second inequality, which is~\eqref{eq:norm_gt}
in the main text.

We now turn to control ${\Vert \balpha_t - \balpha_t^\star \Vert}_2$. We adapt the proof of Lemma B.6 from~\cite{mairal_optimization_2013}, that states the lipschitz continuity of the minimizers of some parametrized functions. By definition,
\begin{equation}
    \balpha_t^\star = \argmin_{\balpha \in \RR^k} \ell(\balpha, \G_t^\star, \bbeta_t^\star) \qquad
    \balpha_t = \argmin_{\balpha \in \RR^k} \ell(\balpha, \G_t, \bbeta_t),
\end{equation}
Assumption \ref{ass:D_cond} ensures that $\G_t \succ \rho \I_k$, therefore we can write the second-order growth condition
\begin{align}
    \frac{\rho}{2} {\Vert \balpha_t - \balpha_t^\star \Vert}_2^2 &\leq
    \ell(\balpha_t, \G_t^\star, \bbeta_t^\star) -
    \ell(\balpha_t, \G_t, \bbeta_t) \notag \\
    \frac{\rho}{2} {\Vert \balpha_t - \balpha_t^\star \Vert}_2^2 &\leq
    \ell(\balpha_t^\star, \G_t, \bbeta_t) -
    \ell(\balpha_t^\star, \G_t^\star, \bbeta_t^\star),\quad\text{and therefore} \notag \\
    \rho {\Vert \balpha_t - \balpha_t^\star \Vert}_2^2
    &\leq p(\balpha_t) - p(\balpha_t^\star),\quad\text{where}\quad p(\balpha) \triangleq \ell(\balpha, \G_t, \bbeta_t) -
    \ell(\balpha_t, \G_t^\star, \bbeta_t^\star).
\end{align}
$p$ takes a simple form and can differentiated with respect to $\balpha$. For all $\balpha \in \RR^k$ such that ${\Vert \balpha \Vert}_2 \leq A$,
\begin{align}
    \label{eq:nabla_p}
    p(\balpha) &= \frac{1}{2} \balpha^\top (\G_t - \G_t^\star) \balpha - \balpha^\top (\bbeta_t - \bbeta_t^\star) \notag \\
    \nabla p(\balpha) &= (\G_t - \G_t^\star) \balpha - (\bbeta_t - \bbeta_t^\star) \notag \\
    {\Vert \nabla p(\balpha) \Vert}_2 &\leq A {\Vert \G_t - \G_t^\star \Vert}_F + {\Vert \bbeta_t - \bbeta_t^\star \Vert}_2 \triangleq L
\end{align}
Therefore $p$ is $L$-Lipschitz on the ball of size $A$ where $\balpha_t$ and $\balpha_t^\star$ live, and
\begin{align}
    \rho {\Vert \balpha_t - \balpha_t^\star \Vert}_2^2 &\leq L {\Vert \balpha_t - \balpha_t^\star \Vert}_2
    \notag \\
    {\Vert \balpha_t - \balpha_t^\star \Vert}_2 &\leq
    \frac{A}{\rho} {\Vert \G_t - \G_t^\star \Vert}_F + \frac{1}{\rho} {\Vert \bbeta_t - \bbeta_t^\star \Vert}_2,
\end{align}
which is \eqref{eq:norm_balpha} in the main text. The bound~\eqref{eq:tilde_gt} on $\epsilon_t$ immediately follows.

\subsubsection{Bounding ${\Vert \bbeta_t - \bbeta_t^\star \Vert}_2$
in equation~\eqref{eq:beta_maj}}

Taking the $\ell_2$ norm in~\eqref{eq:beta_maj}, we have
${\Vert \bbeta_t - \bbeta_t^\star \Vert}_2 \leq B L_t + C R_t$, where $B$ and $C$ are positive constants independent of $t$ and we introduce the terms
\begin{equation}
    \label{eq:lt_rt}
        L_t \triangleq \sum_{s \leq t, \x_s = \x_t = \x^{(i)}} \gamma_{s,t}^{(i)} {\Vert \D_{s-1} - \D_{t-1} \Vert}_F,\qquad
        R_t \triangleq \Big{\Vert \big(\sum_{s \leq t, \x_s = \x^{(i)}} \gamma_{s,t}^{(i)} \M_s\big) - \I \Big\Vert}_F.
\end{equation}

\paragraph{Conditioning on the sequence of drawn indices}We recall that ${(i_t)}_t$
is the sequence of indices that are used to draw ${(\x_t)}_t$
from ${\{\x^{(i)}\}}_i$, namely such that $\x_t = \x^{(i_t)}$. ${(i_t)}_t$
is a sequence of i.i.d random variables, whose law is uniform in $[1, n]$.
For each $i \in [n]$, we
define the increasing sequence ${(t_b^{(i)})}_{b > 0}$ that record the iterations at which sample $(i)$ is drawn, \textit{i.e.} such that $i_{t_b} = i$ for all $b > 0$. For $t > 0$,
we recall that $c_t^{(i)} > 0$ is the integer that counts the number of time sample $(i)$ has appeared in the algorithm, \textit{i.e.} $c_t^{(i)} = \max\,\{b > 0, t_b^{(i)} \leq t\}$. These notations will help us understanding the behavior of ${(L_t)}_t$ and
${(R_t)}_t$.

\paragraph{Bounding $R_t$}
The right term $R_t$ takes its value into sequences that are running average of masking matrices. Formally, $R_t = {\Vert \bar \M_t^{(i_t)} - \I \Vert}_F$, where we define for all $i \in [n]$,
\begin{align}
    \label{eq:recursion}
    \bar \M_t^{(i)} \triangleq \sum_{b = 1}^{c_t^{(i)}} \gamma_{t_b^{(i)}, t_c^{(i)}}^{(i)} \M_{t_b},\quad\text{ which follows the recursion}
    \quad
    \left\lbrace
    \begin{array}{lll}
    \bar \M_t^{(i)} &=& (1 - \gamma_{c^{(i)}_{t}}) \bar \M_{t-1}^{(i)} + \gamma_{c^{(i)}_{t}} \M_t\quad\text{if $i = i_t$} \\
    \bar \M_t^{(i)} &=& \M_{t-1}^{(i)}\quad\text{if $i \neq i_t$} \\
    \bar \M_0^{(i)} &=& 0\quad\text{for all $i \in [n]$}
    \end{array}\right.
\end{align}
When sampling a sequence of indices $(i_s)_{s > 0}$, the $n$ random matrix sequences ${[ {(\bar \M_t^{(i)})}_{t\leq0} ]}_{i \in [n]}$ follows the same probability law as the sampling is uniform. We therefore focus on controling ${(\bar \M_t^{(0)})}_t$.
For simplicity, we write $c_t \triangleq c_t^{(0)}$. When
$\EE[\cdot]$ is the expectation over the sequence of indices $(i_s)_s$,
\begin{align}
    \label{eq:simple_gamma}
    \EE[{\Vert \bar \M_t^{(0)} - \I \Vert}_F]^2 &\leq
    \EE\big[\sum_{j=1}^p (\bar \M_t^{(0)}[j, j] - 1) \big]
    = p \EE[(\bar \M_t^{(0)}[0, 0] - 1)] \notag \\
    &\leq C\,p{(c_t)}^{1/2} \gamma_{c_t} = C\,p {(c_t)}^{1/2 - v},\quad\text{where $C$ is a constant independent of $t$}.
\end{align}
We have simply bounded the Froebenius norm by the $\ell_1$ norm in the first inequality and used the fact that all coefficients $\M_t[j, j]$ follows the same
Bernouilli law for all $t > 0$, $j \in [p]$. We then used Lemma B.7 from~\cite{mairal_stochastic_2013} for the last inequality. This lemma applies
as $\M_t[0, 0]$ follows the recursion~\eqref{eq:recursion}. It remains to take the expectation of \eqref{eq:simple_gamma}, over all possible sampling trajectories~$(i_s)_{s > 0}$:
\begin{align}
    \label{eq:expected_Rt}
    \EE[R_t] &= \EE\big[\EE[R_t | (i_s)_s]\big] = \EE\big[\EE[{\Vert \M_t^{(i_t)} - \I \Vert}_F | (i_s)_s]\big] = \EE\big[\EE[{\Vert \M_t^{(0)} - \I \Vert}_F | (i_s)_s]\big]
    = \EE[{\Vert \M_t^{(0)} - \I \Vert}_F] \notag \\
    &= C p\EE[{(c_t)}^{1/2-v}] \leq C p\EE[{(c_t)}^{2(u-1) - \eta}].
\end{align}
The last inequality arises from the definition of $\eta \triangleq \frac{1}{2} \min\big(v - \frac{3}{4}, (3u - 2) - v\big)$, as follows. First, $\eta > 0$ as $u > \frac{11}{12}$. Then, we successively have
\begin{equation}
    \begin{split}
    \frac{5}{2} - 2u < \frac{2}{3} < \frac{3}{4},\quad\text{as $u > \frac{11}{12}$},\qquad
    v \geq \frac{3}{4} + 2 \eta > \frac{5}{2} - 2u + 2\eta,\\
    \frac{1}{2} - v < \frac{1}{2}
    - \frac{5}{2} + 2u - 2\eta = 2(u-1) - 2\eta < 2(u-1) - \eta,\quad\text{which allows to conclude.}
    \end{split}
\end{equation}
Lemma B.7 from~\cite{mairal_stochastic_2013} also ensures that $\M_t[0, 0] \to 1$ almost surely when $t \to \infty$. Therefore ${(\bar \M_t^{(0)} - \I})_t$ converges towards~$0$ almost surely, given any sample sequence $(i_s)_s$. It thus converges almost surely when \textit{all} random variables of the algorithm are considered.
This is also true for ${(\bar \M_t^{(i)} - \I)}_t$ for all $i \in [n]$ and hence for $R_t$.


\paragraph{Bounding $L_t$}

As above, we define $n$ sequences ${[{(L_t^{(i)})}_t]}_{i \in [n]}$, such that $L_t = L_t^{(i_t)}$ for all $t > 0$. Namely,
\begin{equation}
    L_t^{(i)} \triangleq \sum_{\substack{s \leq t,\\ \x_s = \x_t = \x^{(i)}}} \gamma_{s,t}^{(i)} {\Vert \D_{s-1} - \D_{t-1} \Vert}_F
    = \sum_{b = 1}^{c_t^{(i)}} \gamma_{t_b^{(i)}, t_{c_t^{(i)}}^{(i)}}^{(i)} {\big\Vert \D_{t_b -1} - \D_{t_{c_t^{(i)}}-1} \big\Vert}_F.
\end{equation}
Once again, the sequences $\big[{(L_t^{(i)})}_t\big]_{i}$ all follows the same distribution when sampling over sequence of indices $(i_s)_s$. We thus focus on bounding ${(L_t^{(0)})}_t$. Once again, we drop
the $(0)$ superscripts in the right expression for simplicity.
We set $\nu \triangleq 3u -2 - \eta$. From assumption~\ref{ass:gamma_decay} and the definition of~$\eta$, we have $v < \nu < 1$.
We split the sum in two parts, around index $d_t \triangleq c_t - \lfloor {(c_t)}^\nu \rfloor$, where $\lfloor \cdot \rfloor$ takes the integer part of a real number. For simplicity, we write $d \triangleq d_t$ and $c \triangleq c_t$ in the following.
\begin{align}
    \label{eq:lag_term}
    L_t^{(0)} = \sum_{b = 1}^{c} \gamma_{t_b, t_c} {\big\Vert \D_{t_b-1} - \D_{t_c-1} \big\Vert}_F
    &\leq 2 \sqrt{k}D \sum_{b=1}^{d} \gamma_{t_b, t_c}
    + \sum_{b=d + 1}^c \gamma_{t_b, t} \sum_{s = t_b - 1}^{t_c -1} w_s
    \triangleq 2 \sqrt{k}D L_{t, 1}^{(0)} + L_{t, 2}^{(0)}
\end{align}
On the left side, we have bounded ${\Vert \D_t \Vert}_F$ by $\sqrt{k} D$, where $D$ is defined in the previous section. The right part uses the bound on ${\Vert \D_s - \D_t \Vert}_F$
provided by Lemma~\ref{lemma:stability}, that applies here as~$\ref{ass:geometric}$ is met and~\eqref{eq:tilde_gt} ensures that ${({\Vert g_t - g_t^\star \Vert}_\infty)}_t$ is bounded.

We now study both $L_{t, 1}^{(0)}$ and $L_{t, 2}^{(0)}$. First, for all $t > 0$,
\begin{align}
    \label{eq:Lt_L}
    L_{t, 1}^{(0)} \triangleq&\sum_{b=1}^{d} \gamma_{t_b, t_c} = \sum_{b=1}^{d} \gamma_b \prod_{p=b+1}^{c} (1 - \gamma_p)
    \leq \sum_{b=1}^{d} \gamma_b {(1 - \gamma_c)}^{c-b} \notag \\
    &\leq \frac{(1 - \gamma_c)^{\lfloor c^\nu \rfloor}}{\gamma_c}
    \leq c^v \exp\big({\log(1-\frac{1}{c^v}) c^\nu}\big) \leq C' c^{v} \exp(c^{\nu - v}) \leq C {c}^{2(u - 1) - \eta} = C {(c_t)}^{2(u - 1) - \eta},
\end{align}
where $C$ and $C'$ are constants independent of $t$. We have used $\nu > v$ for the third inequality, which ensures that
$\exp\big({\log(1-\frac{1}{c^v})c^\nu}\big) \in \mathcal{O}({c^{\nu - v}})$.
Basic asymptotic comparison provides the last inequality, as $c_t \to \infty$ almost surely and the right term decays exponentially in ${(c_t)}_t$, while the left decays polynomially. As a consequence, $L_{t, 1}^{(0)} \to 0$ almost surely.


Secondly, the right term can be bounded as ${(w_t)}_t$ decays sufficiently rapidly. Indeed, as $\sum_{b=1}^c \gamma_{t_b, t} = 1$, we have
\begin{align}
    \label{eq:Lt_R}
    L_{t, 2}^{(0)} \triangleq \sum_{b=d}^c \gamma_{t_b, t} \sum_{s = t_b - 1}^{t_c -1} w_s &\leq \max_{d \leq b \leq c} \Big( \sum_{s = t_b - 1}^{t_c-1}  w_s \Big) \notag
    = \sum_{s = t_{d} - 1}^{t_c -1}  w_s \\
    &\leq w_{t_d} (t_c - t_d) = \frac{t_c - t_d}{{(t_d)}^u}
    = \frac{c_t - d_t}{{(d_t)}^u} \frac{t_c - t_d}{c_t - d_t} (\frac{d_t}{t_d})^u
\end{align}
from elementary comparisons. First, we use the definition of $\nu$ to draw
\begin{equation}
        \frac{c_t - d_t}{(d_t)^u} \leq \frac{(c_t)^\nu}{(c_t)^u(1 - c_t^{\nu - 1})^u}
    \leq C (c_t)^{\nu - u} = C (c_t)^{2(u-1) - \eta},
\end{equation}
were we use the fast that $\eta - 1 < 0$. We note that for all $b > 0$, $t_{b+1} - t_b$ follows a geometric law of
parameter $\frac{1}{n}$, and expectation~$n$. Therefore, as $c -d \to \infty$ when $t \to 0$, from the strong law of
large numbers and linearity of the expectation
\begin{equation}
    \label{eq:as_Lt_R}
    \frac{t_c - t_d}{c-d} = \frac{1}{c-d}\sum_{b=d}^{c-1} t_{b+1} - t_b \to  n, \qquad
     \frac{t_d}{d} = \frac{1}{d}\sum_{b=0}^{d-1} t_{b+1} - t_b \to n\quad\text{almost surely.}
\end{equation}
As a consequence, $\frac{t_c - t_d}{c_t - d_t} (\frac{d_t}{t_d})^u \to n^{1 -u}$ almost surely. This immediately shows $L_{t, 2}^{(0)} \to 0$ and thus $L_t^{(0)} \to 0$ almost surely. As with $R_t$, this implies that $L_t \to 0$ almost surely and therefore
\begin{equation}
    {\Vert \bbeta_t - \bbeta_t^{\star} \Vert}_2 \to 0\quad{\text{almost surely.}}
\end{equation}
Finally, from the dominated convergence theorem, $\EE[\frac{t_c - t_d}{c_t - d_t} (\frac{d_t}{t_d})^u] \to n^{1-u}$ for $t \to \infty$. We can use Cauchy-Schartz inequality and write
\begin{align}
    \label{eq:expected_Lt2}
    \EE[L_{t, 2}^{(0)}] = \EE[\frac{t_c - t_d}{{(t_d)}^u}]
    \leq \EE[\frac{c_t - d_t}{{(d_t)}^u}] \EE[\frac{t_c - t_d}{c_t - d_t} (\frac{d_t}{t_d})^u]
    \leq C' \EE[\frac{c_t - d_t}{{(d_t)}^u}] \leq C\,C' \EE[{(c_t)}^{2(u-1) - \eta}],
\end{align}
where $C'$ is a constant independant of $t$. Then
\begin{equation}
    \EE[L_t] = \EE\big[\EE[L_t^{(i_t)} | {(i_s)}_s]\big] = \EE\big[\EE[L_t^{(0)} | {(i_s)}_s]\big]
    = \EE[L_t^{(0)}] \leq 2\sqrt{k} D \EE[L_{t,1}^{(0)}] + \EE[L_{t,2}^{(0)}] \in \mathcal{O}(({c_t)}^{2(u-1) - \eta}).
\end{equation}
Combined with~\eqref{eq:expected_Rt}, this shows that
 $\EE[{\Vert \bbeta_t - \bbeta_t^{\star} \Vert}_2] \in \mathcal{O}(({c_t)}^{2(u-1) - \eta})$. As $c_t$ follows a binomial distribution of parameter $(t, \frac{1}{n})$,
 $\frac{c_t}{t} \to \frac{1}{n}$ almost surely when $t \to 0$. Therefore
$\EE[(\frac{c_t}{t})^{2(u-1) - \eta})] \to n^{\eta - 2(u-1)}$, and from Cauchy-Schwartz inequality,
\begin{equation}
    \EE[{\Vert \bbeta_t - \bbeta_t^{\star} \Vert}_2] \leq C \EE[(\frac{c_t}{t})^{2(u-1) - \eta})] t^{2(u-1) - \eta} \in
    \bigO(t^{2(u-1) - \eta}).
\end{equation}
We have reused the fact that converging sequences are bounded. This is enough to conclude.

\end{document}